\providecommand{\keywords}[1]{\textbf{\textit{Keywords---}} #1}
\DeclareMathOperator*{\argmin}{arg min}
\newcommand{\N}{\mathbb{N}} 
\newcommand{\R}{\mathbb{R}}
\newsavebox{\measurebox}
\DeclarePairedDelimiter\ceil{\lceil}{\rceil}
\DeclarePairedDelimiter\floor{\lfloor}{\rfloor}
\theoremstyle{plain}
\newtheorem{theorem}{Theorem}[section]
\newtheorem*{theorem*}{Theorem}
\newtheorem{lemma}[theorem]{Lemma}
\newtheorem{proposition}[theorem]{Proposition}
\theoremstyle{definition}
\newtheorem{definition}[theorem]{Definition}
\newtheorem{notation}[theorem]{Notation}
\theoremstyle{remark}
\newtheorem{remark}[theorem]{Remark}
\numberwithin{equation}{section}
\numberwithin{algorithm}{section}
\numberwithin{figure}{section}
\numberwithin{table}{section}
\numberwithin{theorem}{section}
\newcommand{\relu}{{\rm ReLU}}
\newcommand{\dist}{{\rm dist}}
\newcommand{\proj}{{\rm proj}}
\newcommand{\prox}{{\rm prox}}
\newcommand{\vect}{{\rm vec}}
\newcolumntype{P}[1]{>{\centering\arraybackslash}p{#1}}
\title{Forward Stability of ResNet and Its Variants}
\author{Linan Zhang}
\author{Hayden Schaeffer}
\affil{Department of Mathematical Sciences, Carnegie Mellon University, Pittsburgh, PA 15213. (\text{linanz@andrew.cmu.edu}, { }\text{schaeffer@cmu.edu})}
\date{November, 2018}
\begin{document}

\maketitle

\begin{abstract}

The residual neural network (ResNet) is a popular deep network architecture which has the ability to obtain high-accuracy results on several image processing problems. In order to analyze the behavior and structure of ResNet, recent work has been on establishing connections between ResNets and continuous-time optimal control problems. In this work, we show that the post-activation ResNet is related to an optimal control problem with differential inclusions, and  provide continuous-time stability results for the differential inclusion associated with ResNet. Motivated by the stability conditions, we show that alterations of either the architecture or the optimization problem can generate variants of ResNet which improve the theoretical stability bounds. In addition, we establish stability bounds for the full (discrete) network associated with two variants of ResNet, in particular, bounds on the growth of the features and a measure of the sensitivity of the features with respect to perturbations. These results also help to show the relationship between the depth, regularization, and stability of the feature space.  Computational experiments on the proposed variants show that the accuracy of ResNet is preserved and that the accuracy seems to be monotone with respect to the depth and various corruptions.

\end{abstract}

\keywords{Deep Feedforward Neural Networks, Residual Neural Networks, Stability, Differential Inclusions, Optimal Control Problems.}

\section{Introduction}

Deep neural networks (DNNs) have been successful in several challenging data processing tasks, including but not limited to: image classification, segmentation, speech recognition, and text analysis.  The first convolutional neural network (CNN), which was used in the recognition of digits and characters, was the famous LeNet \cite{lecun1989backpropagation}. The LeNet architecture included two convolution layers and two fully connected layers. Part of the success of CNNs is their ability to capture spatially local and hierarchal features from images. In \cite{krizhevsky2012imagenet}, the authors proposed a deeper CNN architecture, called AlexNet, which achieved record-breaking accuracy on the ILSVRC-2010 classification task \cite{ILSVRC15}. In addition to the increased depth ({\it i.e.} the number of layers), AlexNet also used rectified linear unit ($\relu$) as its activation function and overlapping max pooling to down-sample the features between layers. Over the past few years, the most popular networks: VGG \cite{simonyan2014vgg}, GoogleNet \cite{szegedy2015deep}, ResNet \cite{he2015resnet, he2016identity}, FractalNet \cite{larsson2016fractal}, and DenseNet \cite{huang2016dense}, continued to introduce new architectural structures and increase their depth. In each case, the depth of the network seems to contribute to the improved classification accuracy. In particular, it was shown in  \cite{he2015resnet, he2016identity} that deeper networks tended to improve classification accuracy on the common datasets (CIFAR 10, CIFAR 100, and ImageNet).  It is not unusual for DNNs to have thousands of layers!

Although DNNs are widely successful in application, our understanding of their theoretical properties and behavior is limited. In this work, we develop connections between feedforward networks and optimal control problems. These connections are used to construct networks that satisfy some desired stability properties. To test the ideas, we will focus on the image classification problem. Let $\mathcal{D}$ be a set of images which are sampled from $n$ distinct classes. The goal of the classification problem is to learn a function whose output $y\in\R^n$ predicts the correct label associated with the input image $x\in\mathcal{D}$. The $j$-th component of $y$ represents the probability of $x$ being in Class $j$.  It is worth noting that the image classification problem is an example of a high-dimensional problem that can be better solved by DNN than other standard approaches. One possible reason for this is that the mapping from images to labels represented by a neural network may generalize well to new data \cite{bengio2009learning, lecun2015deep}.  
  
As the network depth increases, several issues can occur during the optimization (of network parameters). Take for example the (supervised) image classification problem, where one learns a network by optimizing a cost function over a set of parameters.  Since the parameters are high-dimensional and the problem is non-convex, one is limited in their choice of optimization algorithms \cite{bottou2018optimization}. In addition, the size of the training set can affect the quality and stability of the learned network \cite{bottou2018optimization}. The nonconvexity of the optimization problem may yield many local minimizers, and in \cite{keskar2016large} it was argued that sharp local minimizer could produce networks that are not as generalizable as the networks learned from flatter local minimizers. In \cite{li2017visualizing}, the authors showed that (visually) the energy landscape of ResNet and DenseNet is well-behaved and may be flatter than CNNs without shortcuts. Another potential issue with training parameters of deep networks involves exploding or vanishing gradients, which has been observed in various network architectures  \cite{bengio1994learning}. Some partial solutions have been given by using ReLU as the activation function \cite{ILSVRC15} and by adding identity shortcuts \cite{he2015resnet, he2016identity}. In addition, networks can be sensitive to the inputs in the sense that small changes may lead to misclassification  \cite{szegedy2013intriguing, biggio2013evasion, goodfellow2014generative}. This is one of the motivations for providing a quantitive measure of input-sensitivity in this work.

Recently, there have been several works addressing the architecture of neural networks as the forward flow of a dynamical system. By viewing a neural network as a dynamical system, one may be able to address issues of depth, scale, and stability by leveraging previous work and theory in differential equations.   In \cite{weinan2017proposal}, the connection between continuous dynamical systems and DNNs was discussed.  In \cite{haber2017inverse}, the authors proposed several architectures for deep learning by imposing conditions on the weights in residual layers.  The motivation for the architectures in \cite{haber2017inverse} directly came from the ordinary differential equation (ODE) formulation of ResNets (when there is only one activation per residual layer). For example, they proposed using a Hamiltonian system, which should  make the forward and back propagation stable in the sense that the norms of the features do not change. There could be more efficient ways to compute the back propagation of DNNs based on Hamiltonian dynamics, since the dynamics are time-reversible \cite{chang2017reversible}. Reversible networks have several computationally beneficial properties \cite{gomez2017revnet}; however, layers such as batch normalization \cite{iioffe2015batchnorm} may limit their use. The main idea of batch normalization is to normalize each training mini-batch by reducing its internal covariate shift, which does not preserve the Hamiltonian structure (at least directly).  In a similar direction, ResNet-based architectures can be viewed as a control problem with the transport equation  \cite{li2017deep}.  In \cite{ruthotto2018pde}, the authors designed networks using a symmetric residual layer which is related to parabolic and hyperbolic time-dependent partial differential equations, which produced similar results to the standard ResNet architecture. In \cite{2018arXiv180701083E}, the authors formulated the population risk minimization problem in deep learning as a mean-field optimal control problem, and proved optimality conditions of the Hamilton-Jacobi-Bellman type and the Pontryagin type. It is worth noting that some theoretical arguments connecting a ResNet with one convolution and one activation per residual layer to a first-order ODE are provided in \cite{2018arXiv181011741T}.
 
In image classification, the last operation is typically an application of the softmax function so that the output of the network is a vector that represents the probability of an image being in each class; however, in \cite{wang2018deep} a harmonic extension is used. The idea in \cite{wang2018deep} is to learn an appropriate interpolant as the last layer, which may help to generalize the network to new data.  In \cite{oberman2018lipschitz}, the authors proposed a Lipschitz regularization term to the optimization problem and showed (theoretically) that the output of the regularized network converges to the correct classifier when the data satisfies certain conditions. In addition, there are several recent works that have made connections between optimization in deep learning and numerical methods for partial differential equations, in particular, the entropy-based stochastic gradient descent \cite{chaudhari2018bgd} and a Hamilton-Jacobi relaxation \cite{chaudhari2018deep}. For a review of some other recent mathematical approaches to DNN, see \cite{vidal2017math} and the citations within.

\subsection{Contributions of this Work}

In this work, we connect the post-activation ResNet (Form (a) in \cite{he2016identity}) to an optimal control problem with differential inclusions. We show that the differential system is well-posed and provide explicit stability bounds for the optimal control problem in terms of learnable parameters ({\it i.e.} the weights and biases). In particular,  we provide a growth bound on the norm of the features and a bound on the sensitivity of the features with respect to perturbations in the inputs. These results hold in the continuous-time limit ({\it i.e.} when the depth of the network goes to infinity) and in the discrete setting where one includes all other operations such as batch normalization and pooling.

Since the stability results measure how sensitive the feature space is to perturbations on the input image, these results likely relate to the output accuracy. Based on the theory, we investigate two variants of ResNet that are developed in order to improve the two stability bounds. The variants are constructed by altering the architecture of the post-activation ResNet and the associated optimization problem used in the training phase. We show in the continuous-time limit and in the discrete network that the variants reduce the growth rate bounds by decreasing the constants in the stability conditions. In some cases, the constants become invariant to depth. Computational experiments on the proposed variants show that the accuracy of ResNet is preserved. It is also observed that for the image classification problem, ResNet and its variants monotonically improve accuracy by increasing depth, which is likely related to the well-posedness of the optimal control problem.

\subsection{Overview}

This paper is organized as follows. In Section \ref{sec: NN operations}, we provide definitions and notations for the DNN operations. Section \ref{sec: NN operations} contains mathematical details on the DNN operations in order to make the formal arguments consistent; however, experts in the field can begin at Section \ref{sec: continuous}. In Section \ref{sec: continuous}, we analyze the forward stability of ResNet and its two variants in continuous-time by relating them to optimal control problems with differential inclusions. In Section \ref{sec: discrete}, we prove the forward stability of the variants in the discrete setting, which includes the full network structure. In Section \ref{sec: results}, experimental results are presented and show that the variants preserve the same accuracy as ResNet, with stronger stability bounds theoretically.

\section{Basic Properties of DNN Operations}
\label{sec: NN operations}

Before detailing the connection between ResNet and differential inclusions, we provide some notations and necessary definitions. A neural network consists of a concatenation of layers, which includes an input layer, multiple hidden layers, and an output layer. The input to each layer is typically a multi-dimensional array, which is referred to as the feature. For example, the input layer of a network for image classification can be an RGB image (a feature with three channels). The hidden layers consist of several basic operations: affine transformations, nonlinear activation functions, convolutions, and dimension changing maps.

\subsection{Convolutions}
Since we are concerned with CNNs for imaging, we will first define the notation for ``convolutions" of arrays. 

\begin{definition} \label{def: conv}
Let $x$ be a feature in $\R^{h\times w}$ and $K$ be a filter in $\R^{n\times n}$. The \textit{channel-wise convolution\footnote{Although it is called convolution, Equation \eqref{eq: def conv} actually defines a form of cross-correlation.} of $x$ and $K$}, denoted by $y:=K\otimes x$, is a feature in $\R^{h\times w}$ such that:
\begin{align}
y_{i,j} := 
\begin{cases}
\sum_{\ell,k=-r}^r  \  K_{\ell+r+1,k+r+1}\  x_{i+\ell,j+k}, \quad & \text{if } n \text{ is odd and } r:=(n-1)/2, \\
\sum_{\ell,k=-r+1}^r  \  K_{\ell+r,k+r}\  x_{i+\ell,j+k}, \quad & \text{if } n \text{ is even and } r:=n/2,
\end{cases} \label{eq: def conv}
\end{align}
for all $i=1,2,\dots,h$ and $j=1,2,\dots,w$; that is, each component $y_{i,j}$ of $y$ is obtained via the summation of the component-wise multiplication of $K$ and an $n\times n$ block in $x$ centered at $x_{i,j}$. 

The \textit{channel-wise convolution of $x$ and $K$ with stride size $a$}, denoted by $z:=(K*x)_{|{s=a}}$, is a feature in $\R^{\ceil*{h/a}\times \ceil*{w/a}}$ such that:
\begin{align}
z_{i,j} := 
\begin{cases}
\sum_{\ell,k=-r}^r \  K_{\ell+r+1,k+r+1} \  x_{(ai-a+1)+\ell,(aj-a+1)+k}, \quad & \text{if } n \text{ is odd and } r:=(n-1)/2, \\
\sum_{\ell,k=-r+1}^r\  K_{\ell+r,k+r} \  x_{(ai-a+1)+\ell,(aj-a+1)+k} , \quad & \text{if } n \text{ is even and } r:=n/2,
\end{cases} \label{eq: def conv s2}
\end{align}
for all $i=1,2,\dots,\ceil*{h/a}$ and $j=1,2,\dots,\ceil*{w/a}$.
\end{definition}

When $a=1$, Equation \eqref{eq: def conv s2} coincides with Equation \eqref{eq: def conv}. By using stride size greater than $1$ ({\it i.e.} $s>1$), one can change the spatial dimension of the features between layers. Note that some padding needs to be applied for convolutions, which depends on the (assumed) boundary condition of the feature. Common padding used in channel-wise convolution includes: zero padding, periodic padding, and symmetric padding. When the feature has a depth component (\textit{i.e.} it has multiple channels), 2D convolution is commonly used, which first takes the channel-wise convolution for each channel and then adds up the results depth-wise.

\begin{notation} 
Given a feature $x\in\R^{h\times w\times d}$, let $x_i$ denote the $i$-th channel of $x$, {\it i.e.} 
\begin{align*}
x = \begin{pmatrix}x_1, x_2, \dots, x_d \end{pmatrix}, \quad \text{with } x_i\in\R^{h\times w} \text{ for all } i=1,2,\dots,d,
\end{align*}
and let $x_{i,j,k}$ denote the $(i,j,k)$-th element in $x$.
\end{notation}

\begin{notation}
Given a feature $K\in\R^{n\times n\times d_1\times d_2}$, let $K_{i,j}$ denote the $(i,j)$-th subfilter of $K$, {\it i.e.} 
\begin{align}
K := \begin{pmatrix}
K_{1,1} & K_{1,2} & \cdots & K_{1,d_2} \\ 
K_{2,1} & K_{2,2} & \cdots & K_{2,d_2} \\
\vdots & \vdots & \ddots & \vdots \\
K_{d_1,1} & K_{d_1,2} & \cdots & K_{d_1,d_2} 
\end{pmatrix} \label{eq: def filter}
\end{align}
with $K_{i,j}\in\R^{n\times n}$ for all $i=1,2,\dots,d_1$ and $j=1,2,\dots,d_2$.
\end{notation}

\begin{definition} \label{def: conv2d}
Let $x$ be a feature in $\R^{h\times w\times d_1}$ and $K$ be a filter in $\R^{n\times n\times d_1\times d_2}$. The \textit{2D convolution of $x$ and $K$}, denoted by $y:=K*x$, is a feature in $\R^{h\times w\times d_2}$ such that each channel $y_j\in\R^{h\times w}$ of $y$ is defined as:
\begin{align}
y_j := \sum_{i=1}^{d_1} \  K_{i,j} \otimes x_i, \quad j=1,2,\dots,d_2. \label{eq: def conv2d}
\end{align}
The \textit{2D convolution of $x$ and $K$ with stride size $a$}, denoted by $z:=(K*x)_{|{s=a}}$, is a feature in $\R^{\ceil*{h/a}\times \ceil*{w/a}\times d_2}$ such that each channel $z_j\in\R^{\ceil*{h/a}\times \ceil*{w/a}}$ of $z$ is defined as:
\begin{align}
z_j := \sum_{i=1}^{d_1} \  (K_{i,j} \otimes x_i)_{s=a}, \quad j=1,2,\dots,d_2. \label{eq: def conv2d s2}
\end{align}
\end{definition}

It  can be seen from Definitions \ref{def: conv} and \ref{def: conv2d} that 2D convolution is a linear operation. In the subsequent analysis, we will use the matrix form of 2D convolution. To derive its matrix form, we first define the vectorization operation.

\begin{definition}
Let $x$ be a feature in $\R^{h\times w\times d}$. The \textit{vectorization of $x$}, denoted by $X:=\vect(x)$, is a vector in $\R^{hwd}$ such that
\begin{align}
X_{(k-1)hw+(i-1)w+j} \ = \ x_{i,j,k}, \label{eq: def vec}
\end{align}
for all $i=1,2,\dots,h$, $j=1,2,\dots,w$, and $k=1,2,\dots,d$.
\end{definition}

\begin{remark}
It can be easily verified that the vectorization operation is bijective and the following equalities hold for all $x\in\R^{h\times w\times d}$:
\begin{subequations}
\begin{align}
\|\vect(x)\|_{\ell^2(\R^{hwd})} \ &= \ \|x\|_F, \label{eq: prop vec 2} \\
\|\vect(x)\|_{\ell^\infty(\R^{hwd})} \ &= \ \max_{i\in[h],\ j\in[w],\ k\in[d]}|x_{i,j,k}|, \label{eq: prop vec infty}
\end{align} \label{eq: prop vec}
\end{subequations}
where $ \|\cdot\|_F$ is the Frobenius norm. 
\end{remark}

Let $x$ be a feature in $\R^{h\times w\times d_1}$, $K$ be a filter in $\R^{n\times n\times d_1\times d_2}$, and $y:=K*x$ be a feature in $\R^{h\times w\times d_2}$. Combining Equations \eqref{eq: def conv}, \eqref{eq: def conv2d}, and \eqref{eq: def vec}, one can derive a linear system $Y=AX$ which describes the forward operation $y=K*x$. The general form of $A$ is:
\begin{align}
A = \begin{pmatrix}
A_{1,1} & A_{1,2} & \cdots & A_{1,d_1} \\ 
A_{2,1} & A_{2,2} & \cdots & A_{2,d_1} \\
\vdots & \vdots & \ddots & \vdots \\
A_{d_2,1} & A_{d_2,2} & \cdots & A_{d_2,d_1} 
\end{pmatrix}, \label{eq: A derivation1}
\end{align}
where each $A_{i,j}\in\R^{hw\times hw}$ is a block-wise circulant matrix associated with the channel-wise convolution with $K_{j,i}$ (for all $i=1,2,\dots,d_1$ and $j=1,2,\dots,d_2$). Take, for example, $h=w=4$, $d_1=d_2=2$, and $n=3$, then each block $A_{i,j}$ can be written as:
\begin{align}
A_{i,j} = 
\begin{pmatrix}
U^2_{i,j} & U^3_{i,j} & 0 & U^1_{i,j} \\
U^1_{i,j} & U^2_{i,j} & U^3_{i,j} & 0 \\
0 & U^1_{i,j} & U^2_{i,j} & U^3_{i,j} \\
 U^3_{i,j} & 0 & U^1_{i,j} & U^2_{i,j}
\end{pmatrix}\in\R^{16\times16}, \quad i,j=1,2, \label{eq: A derivation2}
\end{align}
where $U^\ell_{i,j}\in\R^{4\times4}$ for $\ell=1,2,3$. If periodic padding is applied in the forward operation $y=K*x$, then each subblock $U^\ell_{i,j}$ is a circulant matrix defined as:
\begin{align}
U^\ell_{i,j} = 
\begin{pmatrix}
(K_{j,i})_{\ell,2} & (K_{j,i})_{\ell,3} & 0 & (K_{j,i})_{\ell,1} \\
(K_{j,i})_{\ell,1} & (K_{j,i})_{\ell,2} & (K_{j,i})_{\ell,3} & 0 \\
0 & (K_{j,i})_{\ell,1} & (K_{j,i})_{\ell,2} & (K_{j,i})_{\ell,3} \\
(K_{j,i})_{\ell,3} & 0 & (K_{j,i})_{\ell,1} & (K_{j,i})_{\ell,2}
\end{pmatrix}, \quad  i,j=1,2 \text{ and } \ell=1,2,3,\label{eq: A derivation3}
\end{align}
where $(K_{j,i})_{\ell,m}$ denotes the $(\ell,m)$-th element of the $(j,i)$-th subfilter of $K$ (see Equation \eqref{eq: def filter}). Similarly, from Equations \eqref{eq: def conv s2} and \eqref{eq: def conv2d s2}, a linear system $Y=A_{|{s=a}}X$ can be derived to describe the forward operation $y=(K*x)_{{|}s=a}$. One can check that $A_{|{s=a}}\in\R^{\ceil*{h/a}\ceil*{w/a}d_2\times hwd_1}$.

\begin{remark} \label{rem: relate norms}
From Equations \eqref{eq: A derivation1}-\eqref{eq: A derivation3}, we can relate norms between $A$ and $K$. For example, with periodic padding and $a=1$, we have:
\begin{align*}
\|A\|_{\ell^{p,p}}^p  &=  \sum_{i=1}^{hwd_2} \sum_{j=1}^{hwd_1}|A_{i,j}|^p  =  hw\sum_{i,j=1}^n\sum_{\ell=1}^{d_1}\sum_{m=1}^{d_2}|K_{i,j,\ell,m}|^p  =  hw\ \|K\|_{\ell^{p,p}}^p
\end{align*}
for $1\le p<\infty$. Therefore, in the optimization, penalties or constraints on $K$ can be applied to $A$ through some simple (linear) operations on $K$.
\end{remark}

Using the linear system representation, we can define the adjoint of 2D convolution as follows.

\begin{definition}
Let $x$ be a feature in $\R^{h\times w\times d_2}$ and $K$ be a filter in $\R^{n\times n\times d_1\times d_2}$. Assume that zero padding or periodic padding is used. The \textit{adjoint of the 2D convolution of $x$ and $K$}, denoted by $z:=\tilde{K}* x$, is a feature in $\R^{h\times w\times d_1}$ such that $Z=A^TX$, where $X=\vect(x)$, $Z=\vect(z)$, $A$ is the matrix associated with the 2D convolution operation with $K$ defined by Equation \eqref{eq: A derivation1}, and $A^T$ is the (standard) transpose of $A$ in the matrix sense.  The \textit{adjoint filter $\tilde{K}$} is defined to be the filter whose matrix form is $A^T$.  
 \end{definition}

\begin{remark}
Observe that the matrix $A$ in Equation \eqref{eq: A derivation1} is sparse: if $x$ is a feature in $\R^{h\times w\times d_1}$ and $K$ is a filter in $\R^{n\times n\times d_1\times d_2}$, then there are only at most $n^2hwd_1d_2$ nonzero elements in $A$; that is, if $y=K*x$, then each element in $y$ is only locally connected to the elements in $x$. In contrast to convolution, dense multiplication performs a linear operation such that each element in $y$ is connected to all elements in $x$; that is, with the same notations as above, $X\mapsto WX$, where $W$ is a dense matrix in $\R^{m\times hwd_1}$ for some integer $m$ and is referred to as the weight. Layers which include dense multiplications $X\mapsto WX$ are referred to as \textit{fully connected layers}, and they are often used to extract classifiers form images.
\end{remark}

\subsection{Biases and Batch Normalization}

A \textit{bias} $b$ is often added to the result of the above linear operations, for example $x\mapsto K*x+b$. It is commonly assumed that there is one bias term per channel in a convolution layer; that is, if $x\in\R^{h\times w\times d_1}$ and $K\in\R^{n\times n\times d_1\times d_2}$, then $b= \begin{pmatrix}b_1, b_2, \dots, b_d \end{pmatrix}\in\R^{h\times w\times d_2}$, where each channel $b_i\in\R^{h\times w}$ is a constant matrix for $i=1,2,\dots,d_2$.

Batch normalization is an operation which normalizes a batch of features by the computed batch mean and batch variance with an additional (learnable) scale and shift.

\begin{definition}
(from \cite{iioffe2015batchnorm})
Let $\mathcal{B} := \{x^{(1)}, x^{(2)}, \dots, x^{(m)}\}$ be a batch of features. \textit{Batch normalization of $\mathcal{B}$} is defined as:
\begin{align}
B(x^{(i)}; \gamma, \beta) := \dfrac{\gamma(x^{(i)}-\mu)}{\sigma} + \beta, \quad i=1,2,\dots,m, \label{eq: batch norm}
\end{align}
where $\mu := \sum_{i=1}^m x^{(i)}/m$ and $\sigma^2 := \sum_{i=1}^m (x^{(i)}-\mu)^2/m$. 
\end{definition}
The parameters $\gamma$ and $\beta$ in Equation \eqref{eq: batch norm} are often learned in the optimization. When using batch normalization, one does not need to add biases, since it is explicitly computed through $\beta$.

 \subsection{Padding and Pooling} \label{sec: padding pooling}
To capture hierarchal features, it is practical to change the dimension of the features after every few layers. This is accomplished through padding (extension) and pooling (down-sampling) operations.

\begin{definition}
Let $x$ be a feature in $\R^{h\times w\times d_1}$. The \textit{zero padding operator}, {\it i.e.} extension by zero, with parameter $d_2>d_1$, denoted by $E:\R^{h wd_1}\to\R^{hwd_2}$, is defined as:
\begin{align}
E(\vect(x);d_2):=\vect(y), \label{eq: def padding vec}
\end{align}
where $y$ is a feature in $\R^{h\times w\times d_2}$ such that each channel $y_i\in\R^{h\times w}$ of $y$ is defined as:
\begin{align}
y_i :=
\begin{cases}
x_{i-d}, \quad &\text{if } d+1\le i \le d+d_1 \text{ where } d:=\floor*{(d_2-d_1)/2}, \\
0, \quad &\text{otherwise},
\end{cases} \label{eq: def padding}
\end{align}
for $i=1,2,\dots,d_2$.
\end{definition}

\begin{definition}
Let $x$ be a feature in $\R^{h\times w\times d}$. The \textit{2D average pooling operator} with filter size $2\times 2$ and stride size 2, denoted by $P_2: \R^{hwd}\to\R^{\ceil*{h/2}\ceil*{w/2}d}$, is defined as:
\begin{align}
P_2(\vect(x)):=\vect(y), \label{eq: def 2d pooling vec} 
\end{align}
where $y$ is a feature in $\R^{\ceil*{h/2}\times\ceil*{w/2}\times d}$ such that each channel $y_i\in\R^{\ceil*{h/2}\times\ceil*{w/2}}$ is defined as:
\begin{align}
y_i := \dfrac{1}{4} \left(\begin{pmatrix} 1 & 1 \\ 1 & 1 \end{pmatrix} \otimes x_i\right)_{{|}s=2}, \quad i=1,2,\dots,d. \label{eq: def 2d pooling} 
\end{align}
where zero padding is used to perform the convolution.
\end{definition}

Toward the end of the network, the features tend to have a large number of channels, while the number of elements in each channel is small. The last layers of a network often include a global pooling layer, which reduces each channel to its average. 

\begin{definition}
Let $x$ be a feature in $\R^{h\times w\times d}$. The \textit{global average pooling operator}, $P_g: \R^{hwd}\to\R^{d}$, is defined as:
\begin{align*}
P_g(\vect(x)):=y, 
\end{align*}
where $y$ is a vector in $\R^d$ such that each component $y_k$ of $y$ is defined as:
\begin{align*}
y_k&:=\dfrac{1}{hw}\sum_{i=1}^h\sum_{j=1}^w \ x_{i,j,k}, \quad k=1,2,\dots,d. 
\end{align*}
\end{definition}

The following proposition shows that the pooling operators are non-expansive in $\ell^2$ and $\ell^\infty$.

\begin{proposition} \label{prop: pooling}
The pooling operators $P_2$ and $P_g$ are non-expansive in $\ell^2$ and $\ell^\infty$ in the sense that if $x\in\R^{h\times w\times d}$, then
\begin{subequations}
\begin{align}
\|P_2(\vect(x))\|_{\ell^2(\R^{h_1w_1d})}  &\le  \|\vect(x)\|_{\ell^2(\R^{hwd})}, \label{eq: prop 2d pooling 2} \\
\|P_2(\vect(x))\|_{\ell^\infty(\R^{h_1w_1d})}  &\le  \|\vect(x)\|_{\ell^\infty(\R^{hwd})}, \label{eq: prop 2d pooling infty}
\end{align} \label{eq: prop 2d pooling}
\end{subequations}
where $h_1:=\ceil*{h/2}$ and $w_1=\ceil*{w/2}$, and
\begin{subequations}
\begin{align}
\|P_g(\vect(x))\|_{\ell^2(\R^d)}  &\le  \|\vect(x)\|_{\ell^2(\R^{hwd})}, \label{eq: prop global pooling 2} \\
\|P_g(\vect(x))\|_{\ell^\infty(\R^d)}  &\le  \|\vect(x)\|_{\ell^\infty(\R^{hwd})}. \label{eq: prop global pooling infty}
\end{align} \label{eq: prop global pooling}
\end{subequations}
\end{proposition}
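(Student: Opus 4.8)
The plan is to reduce both claims to a per-channel, per-output-entry calculation, exploiting the fact that each of $P_2$ and $P_g$ acts independently across channels. By the identities \eqref{eq: prop vec 2} and \eqref{eq: prop vec infty}, it suffices to work with the feature $x$ in array form rather than with $\vect(x)$, comparing $\|y\|_F$ against $\|x\|_F$ for the $\ell^2$ statements and $\max|y_{\cdot,\cdot,\cdot}|$ against $\max|x_{\cdot,\cdot,\cdot}|$ for the $\ell^\infty$ statements.

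For $P_2$ the key structural observation is that a $2\times 2$ averaging filter applied with stride $2$ tiles the (zero-padded) spatial domain by \emph{non-overlapping} $2\times 2$ blocks. By Definition \ref{def: conv} applied with the all-ones $2\times 2$ filter and stride $2$, each output entry satisfies $(y_i)_{a,b} = \tfrac14\bigl(x_{2a-1,2b-1,i}+x_{2a-1,2b,i}+x_{2a,2b-1,i}+x_{2a,2b,i}\bigr)$, where out-of-range indices are read as zero. Every entry $x_{p,q,i}$ therefore contributes to exactly one output entry. The $\ell^\infty$ bound \eqref{eq: prop 2d pooling infty} is then immediate, since an average of four numbers (some possibly zero) is bounded in absolute value by the largest of them, hence by $\max_{p,q,i}|x_{p,q,i}|$. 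For the $\ell^2$ bound \eqref{eq: prop 2d pooling 2} I would apply the power-mean (Cauchy--Schwarz) inequality $(s_1+s_2+s_3+s_4)^2 \le 4(s_1^2+s_2^2+s_3^2+s_4^2)$ to each block, giving $(y_i)_{a,b}^2 \le \tfrac14\sum_{\text{block}} x_{\cdot,\cdot,i}^2 \le \sum_{\text{block}} x_{\cdot,\cdot,i}^2$. Summing over all blocks and channels, disjointness of the blocks makes the right-hand side collapse to $\|x\|_F^2$, yielding \eqref{eq: prop 2d pooling 2}.

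For $P_g$ the same two inequalities apply, with the single $2\times 2$ block replaced by the entire spatial slice. Writing $y_k = \tfrac{1}{hw}\sum_{i,j} x_{i,j,k}$, the triangle inequality gives $|y_k| \le \tfrac{1}{hw}\sum_{i,j}|x_{i,j,k}| \le \max_{i,j,k}|x_{i,j,k}|$, which is \eqref{eq: prop global pooling infty}. For \eqref{eq: prop global pooling 2}, Cauchy--Schwarz gives $\bigl(\sum_{i,j} x_{i,j,k}\bigr)^2 \le hw\sum_{i,j} x_{i,j,k}^2$, so $y_k^2 \le \tfrac{1}{hw}\sum_{i,j} x_{i,j,k}^2$; summing over $k$ and using $hw\ge 1$ produces $\|P_g(\vect(x))\|_{\ell^2}^2 \le \|x\|_F^2$.

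I do not expect a genuine obstacle here; the only point requiring care is the non-overlapping tiling for $P_2$, which is precisely what prevents double-counting and lets the channel-wise sum of block estimates telescope cleanly into $\|x\|_F^2$. The boundary cases created by zero padding only introduce extra vanishing summands, which can only strengthen the inequalities, so they need no separate treatment.
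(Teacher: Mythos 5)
Your proof is correct and follows essentially the same route as the paper's: a per-channel, per-output-entry estimate using the power-mean/Cauchy--Schwarz inequality on each (disjoint) $2\times2$ block for the $\ell^2$ bounds and the max of the block entries for the $\ell^\infty$ bounds, then summing over the non-overlapping blocks and invoking the vectorization identities \eqref{eq: prop vec}. The only difference is cosmetic: the paper handles the odd-dimension boundary blocks by explicit case analysis while you absorb them into the zero-padding remark, and the paper leaves the $P_g$ case as ``a similar argument'' where you spell it out.
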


The proof of Proposition \ref{prop: pooling} is provided in Appendix \ref{sec: proof}. Similarly, the padding operator is norm preserving with respect to the input and output spaces.
\begin{proposition} \label{prop: padding}
The padding operator $E$ has the following norm preserving  property: if $x\in\R^{h\times w\times d_1}$ and $d_2>d_1$, then
\begin{align}
\|E(\vect(x);d_2)\|_{\ell^p(\R^{hwd_2})}  &=  \|\vect(x)\|_{\ell^p(\R^{hwd_1})}.  \label{eq: prop padding}
\end{align}
for all $p \in[1, \infty]$.
\end{proposition}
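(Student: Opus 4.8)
The plan is to prove the identity directly from the definition of the padding operator, exploiting the fact that zero padding merely inserts null channels without disturbing the existing entries. First I would fix notation: write $y$ for the feature in $\R^{h\times w\times d_2}$ with $\vect(y)=E(\vect(x);d_2)$ and set $d:=\floor*{(d_2-d_1)/2}$. Equation \eqref{eq: def padding} then says precisely that the channels $y_{d+1},\dots,y_{d+d_1}$ coincide with $x_1,\dots,x_{d_1}$ (via $y_i=x_{i-d}$), while every remaining channel of $y$ is the zero matrix in $\R^{h\times w}$. Consequently the list of entries of $y$ is the list of entries of $x$ together with $(d_2-d_1)hw$ additional zeros.

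For the finite range $1\le p<\infty$, I would compute $\|\vect(y)\|_{\ell^p(\R^{hwd_2})}^p$ as the sum of $|y_{i,j,k}|^p$ over all indices, using the defining formula \eqref{eq: def vec} for the vectorization so that this is literally a sum of $p$-th powers of the entries. Since $y_{i,j,k}=0$ whenever the channel index $k$ lies outside $\{d+1,\dots,d+d_1\}$, the sum collapses to the nonzero channels, and the substitution $k\mapsto k-d$ identifies it with $\sum_{i,j,k'}|x_{i,j,k'}|^p=\|\vect(x)\|_{\ell^p(\R^{hwd_1})}^p$. Taking $p$-th roots yields \eqref{eq: prop padding} for every finite $p$.

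For the endpoint $p=\infty$ I would argue separately but in the same spirit: the maximum of $|y_{i,j,k}|$ over all indices equals the maximum taken only over the nonzero channels $k\in\{d+1,\dots,d+d_1\}$, because adjoining zeros to a set of nonnegative numbers never raises its maximum; the degenerate subcase $x\equiv 0$ is handled at once since then both sides vanish. That restricted maximum is exactly $\max_{i,j,k'}|x_{i,j,k'}|=\|\vect(x)\|_{\ell^\infty(\R^{hwd_1})}$ by \eqref{eq: prop vec infty}, completing the claim for $p=\infty$.

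I do not expect a genuine obstacle here, since the statement is essentially the observation that the $\ell^p$ norm is invariant under both permutation of coordinates and insertion of zero coordinates. The only care required is bookkeeping with the offset $d$ when re-indexing the surviving channels, and keeping the supremum case $p=\infty$ (with its trivial $x\equiv 0$ subcase) separate from the finite-$p$ summation; everything else is a one-line direct computation.
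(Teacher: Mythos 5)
Your argument is correct: zero padding simply appends $(d_2-d_1)hw$ zero entries to the coordinate list of $\vect(x)$, so the $\ell^p$ norm is unchanged for $1\le p<\infty$ by direct summation and for $p=\infty$ because $\max\{\|\vect(x)\|_{\ell^\infty},0\}=\|\vect(x)\|_{\ell^\infty}$ (the separate $x\equiv 0$ subcase is not even needed, since the norm is nonnegative). The paper states this proposition without proof, treating it as immediate from the definition, and your write-up is exactly the verification the authors leave implicit.
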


\subsection{Activation}
The activation function is a nonlinear function applied to the features. Some common examples of activation functions include: the Sigmoid function $(1+\exp(x))^{-1}$, the hyperbolic tangent function $\tanh(x)$, and the rectified linear unit (ReLU) $ x_+ \equiv \max(x,0)$. In \cite{mishkin2016cnn}, the authors tested various design choices in CNNs; in particular, the compatibility of non-linear activation functions with batch normalization. One observation was that the exponential linear unit (ELU) preforms well without the need of batch normalization, which is defined as: $\alpha(\exp(x)-1)$ if $x<0$ and $x$ if $x\ge0$, where $\alpha>0$. In this paper, we will focus on the ReLU activation function.

\begin{remark}\label{rem: relu}
Using ReLU as the activation function can be viewed as applying a proximal step in the dynamical system that defines the forward propagation. This will be made clear in Section \ref{sec: continuous}. Let ${I}_{\R^{d}_+}$ be the indicator function of the set $\R^{d}_+$, which is defined as:
\begin{align}
{I}_{\R^{d}_+}(x):= 
\begin{cases}
0, \quad &\text{if }  x \in  \R^{d}_+, \\
\infty, \quad &\text{if } x \notin  \R^{d}_+.
\end{cases} \label{eq: indicator}
\end{align}
The proximal operator associated with ${I}_{\R^{d}_+}$ is in fact $\relu$, \textit{i.e.}
\begin{align*}
\prox_{\gamma{I}_{\R^{d}_+}} (x) = \argmin_{y \in \R^{d}}\  \gamma {I}_{\R^{d}_+}(x) + \frac{1}{2} \|x-y\|_{\ell^2(\R^d)}^2= \proj_{\R^{d}_+}(x) = x_+,
\end{align*}
and is independent of $\gamma>0$.
\end{remark}
In addition, ReLU has the following properties.

\begin{proposition} \label{prop: relu}
Let $n\in\N$ and $1\le p\le \infty$. The rectified linear unit is non-expansive and $1$-Lipschitz in $\ell^p(\R^n)$ in the sense that:
\begin{subequations}
\begin{align}
\|x_+\|_{\ell^p(\R^n)} &\le \|x\|_{\ell^p(\R^n)} \label{eq: prop relu1} \\
\|x_+-y_+\|_{\ell^p(\R^n)} &\le \|x-y\|_{\ell^p(\R^n)} \label{eq: prop relu2}
\end{align} \label{eq: prop relu}
\end{subequations}
for all $x,y\in\R^n$.
\end{proposition}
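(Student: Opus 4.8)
The plan is to reduce both bounds to scalar, componentwise inequalities and then lift them to the $\ell^p$ norms using only the definitions of those norms. Writing $x_+$ componentwise as $(x_+)_i=\max(x_i,0)$ for $i=1,\dots,n$, I observe that \eqref{eq: prop relu1} is in fact the special case $y=0$ of \eqref{eq: prop relu2}, since $0_+=0$; nevertheless it is just as quick to handle both directly, so I would establish two elementary scalar facts and then apply the same lifting argument to each.

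The scalar facts are that, for all $a,b\in\R$,
\begin{align*}
|\max(a,0)| \le |a| \qquad\text{and}\qquad |\max(a,0)-\max(b,0)| \le |a-b|.
\end{align*}
The first is immediate, since $\max(a,0)$ equals $a$ when $a\ge 0$ and equals $0$ otherwise, so $|\max(a,0)|=\max(a,0)\le|a|$. For the second, the cleanest route is the identity $\max(t,0)=\tfrac{1}{2}\,(t+|t|)$, which gives
\begin{align*}
|\max(a,0)-\max(b,0)| = \tfrac{1}{2}\,\big|(a-b)+(|a|-|b|)\big| \le \tfrac{1}{2}\,\big(|a-b|+\big||a|-|b|\big|\big) \le |a-b|,
\end{align*}
where the last step uses the reverse triangle inequality $\big||a|-|b|\big|\le|a-b|$. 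A short three-case analysis according to the signs of $a$ and $b$ would work equally well. This is the only place where any content enters, and it is elementary; I do not expect a genuine obstacle here.

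Applying these estimates coordinatewise yields $|(x_+)_i|\le|x_i|$ and $|(x_+)_i-(y_+)_i|\le|x_i-y_i|$ for every $i$. To conclude, I would lift to the norm uniformly in $p$: for $1\le p<\infty$, raising to the $p$-th power (which is monotone on $[0,\infty)$), summing over $i$, and taking $p$-th roots yields \eqref{eq: prop relu1} and \eqref{eq: prop relu2}, while for $p=\infty$ taking the maximum over $i$ of both sides of the componentwise bounds gives the same two conclusions. The only point that requires a little care is treating the endpoint $p=\infty$ separately from the finite range, but both cases drop out immediately from the componentwise inequalities, so the argument is essentially a one-line reduction once the scalar Lipschitz bound is in hand.
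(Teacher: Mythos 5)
Your proof is correct and complete: the scalar bound $|\max(a,0)-\max(b,0)|\le|a-b|$ via the identity $\max(t,0)=\tfrac{1}{2}(t+|t|)$ and the reverse triangle inequality is sound, and the componentwise lifting to $\ell^p$ (with the $p=\infty$ case handled by taking maxima) is exactly what is needed. Note that the paper states Proposition \ref{prop: relu} without proof, treating it as an elementary fact, so there is no argument in the paper to compare against; your write-up is a standard and correct justification of the claim.
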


\section{Continuous-time ResNet System} \label{sec: continuous}

The standard (post-activation) form of a residual layer can be written as an iterative update defined by:
\begin{align}
x^{n+1} = ( x^n - \tau \, A^n_2\, \sigma(A^n_1 x^n+b^n_1)+ \tau \,b^n_2)_+, \label{eq: general form discrete}
\end{align}
where $x^n \in \R^d$ is a vector representing the features in layer $n$, $A^n_i \in \R^{d \times d}$ (for $i=1,2$) are the weight matrices, $b^n_i\in \R^d$ (for $i=1,2$) are the biases, and $\sigma$ is some activation function. The parameter $\tau>0$ can be absorbed into the weight matrix $A^n_2$; however, when scaled in this way, the iterative system resembles a forward Euler update applied to some differential equation. The connection between the residual layers (for a single activation function) and differential equations has been observed in \cite{ruthotto2018pde}. 

The second activation used in the (post-activation) form of ResNet leads to a differential inclusion:
\begin{align}
-\dfrac{\text{d}}{\text{dt}}x(t) -  A_2(t)\, \sigma(A_1(t)x(t)+b_1(t)) + b_2(t) &\in \partial{I}_{\R^{d}_+}(x), \label{eq: general form continuous}
\end{align}
where ${I}_{\R^{d}_+}$ is the indicator function of the set $\R^{d}_+$ (see Equation \eqref{eq: indicator}). Including the second activation as ReLU  is similar to imposing the ``obstacle'' $x\geq0$ (element-wise); see for example \cite{lions1979splitting, tran20151, schaefferpenalty} and the citations within. It is possible to show that Equation~\eqref{eq: general form discrete} is a consistent discretization of Equation~\eqref{eq: general form continuous}.  Equation~\eqref{eq: general form discrete} is essentially the forward-backward splitting \cite{singer2009splitting, goldstain2014splitting}, where the projection onto the ``obstacle'' is implicit and the force $A_2(t)\, \sigma(A_1(t)x(t)+b_1(t)) + b_2(t))$ is explicit. 

The time parameter, $t>0$, in Equation~\eqref{eq: general form continuous} refers to the continuous analog of the depth of a neural network (without pooling layers). In the limit, as the depth of a neural network increases, one could argue that the behavior of the network (if scaled properly by $\tau$) should mimic that of a continuous dynamical system. Thus, the training of the network, {\it i.e.} learning $A_i$ and $b_i$ given $x(0)$ and $x(T)$, is an optimal control problem. Therefore, questions on the stability of the forward propagation, in particular, do the features remain bounded and how sensitive are they to small changes in the input image, are also questions about the well-posedness of the continuous control problem.

\subsection{Stability of Continuous-time ResNet}\label{sec:stable continuous}

In this section, we will show that the continuous-time ResNet system is well-posed and that the forward propagation of the features is stable in the continuous-time. First, note that the function ${I}_{\R^{d}_+}$ is convex, and thus its subdifferential $\partial{I}_{\R^{d}_+}(x)$ is monotone and is characterized by a normal cone:
\begin{align*}
\partial{I}_{\R^{d}_+}(x) = \mathcal{N}_{\R^d_+}(x) := \{\xi\in\R^d: \langle \xi,y-x\rangle \le 0 \text{ for all } y\in\R^d_+\}.
\end{align*}
By Remark \ref{rem: relu}, we have $\prox_{\gamma{I}_{\R^{d}_+}} (x) = x_+$. Therefore, Equation \eqref{eq: general form discrete} is indeed a discretization of Equation~\eqref{eq: general form continuous}, where the subdifferential of the indicator function is made implicit by the proximal operator (projection onto $\R^d_+$). We will use both the subdifferential and normal cone interpretation to make the arguments more direct.

Consider differential inclusions of the form:
\begin{align}
-\dfrac{\text{d}}{\text{dt}}x(t)\in\mathcal{N}_{\R^d_+}(x(t)) + F(t,x(t)),\label{eq:inclusion_gen}
\end{align}
which have been studied within the context of optimal control and sweeping processes. The existence of solutions are given by Theorem 1 in \cite{edmond2005sweeping} (see Appendix \ref{sec: aux}). The continuous-time ResNet, characterized by Equation~\eqref{eq: general form continuous}, is a particular case of  Equation~\eqref{eq:inclusion_gen} with the forcing function $F$ set to:
\begin{align*}
F(t,x(t)) := A_2(t)\,\sigma(A_1(t)x(t)+b_1(t)) - b_2(t).
\end{align*} 
Thus, Equation \eqref{eq: general form continuous} is equivalent to:
\begin{align}
-\dfrac{\text{d}}{\text{dt}}x(t)\in\mathcal{N}_{\R^d_+}(x(t)) + A_2(t)\,\sigma(A_1(t)x(t)+b_1(t)) - b_2(t).\label{eq:general form continuous2}
\end{align}
The following result shows that under certain conditions, Equation \eqref{eq:general form continuous2} has a unique absolutely continuous solution in $\R^d_+$.

\begin{theorem} \textbf{(Continuous-time ResNet, Existence of Solutions)} \label{thm: inclusion resnet}\\
Let $c>0$, $x:\R_+ \to \R^{d}$, $A_i: \R_+ \to \R^{d\times d}$, $b_i:\R_+ \to \R^{d}$ (for $i=1,2$), and $\sigma:\R\to\R$ (applied element-wise if the input is $\R^d$). Assume that $\|A_1(t)\|_{\ell^2(\R^d)} \, \|A_2(t)\|_{\ell^2(\R^d)}\le c$ for all $t>0$, and that $\sigma$ is contractive with $\sigma(0)=0$. Then for any $x_0\in\R^d_+$, the following dynamic process:
\begin{align} \begin{cases}
-\dfrac{\text{d}}{\text{dt}}x(t)\in\mathcal{N}_{\R^d_+}(x(t)) + A_2(t)\,\sigma(A_1(t)x+b_1(t)) - b_2(t) \quad \text{a.e. } t>0 \\
x(0) = x_0
\end{cases}
\label{eq: sweeping process}
\end{align}
has one and only one absolutely continuous solution $x\in\R^d_+$.
\end{theorem}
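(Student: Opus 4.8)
The plan is to recognize Equation~\eqref{eq: sweeping process} as the perturbed sweeping process \eqref{eq:inclusion_gen} with the \emph{constant} convex set $C=\R^d_+$ and forcing term
\begin{align*}
F(t,x) := A_2(t)\,\sigma(A_1(t)x+b_1(t)) - b_2(t),
\end{align*}
and then to verify that $F$ meets exactly the hypotheses of Theorem~1 of \cite{edmond2005sweeping}. Since $C$ does not depend on $t$, its variation vanishes and the moving-set requirement is trivially satisfied; the only substantive work is to show that $F$ is Lipschitz in the state (uniformly in $t$), satisfies a linear growth bound, and is measurable in $t$.

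For the Lipschitz property, fix $t$ and take any $x,y\in\R^d$. Using that $\sigma$ is contractive (so the shared bias $b_1(t)$ cancels) together with submultiplicativity of the induced $\ell^2$ operator norm,
\begin{align*}
\|F(t,x)-F(t,y)\|_{\ell^2(\R^d)}
&\le \|A_2(t)\|_{\ell^2(\R^d)}\,\big\|\sigma(A_1(t)x+b_1(t))-\sigma(A_1(t)y+b_1(t))\big\|_{\ell^2(\R^d)} \\
&\le \|A_2(t)\|_{\ell^2(\R^d)}\,\|A_1(t)\|_{\ell^2(\R^d)}\,\|x-y\|_{\ell^2(\R^d)} \le c\,\|x-y\|_{\ell^2(\R^d)}.
\end{align*}
This is precisely where the hypothesis $\|A_1(t)\|_{\ell^2(\R^d)}\,\|A_2(t)\|_{\ell^2(\R^d)}\le c$ is used: it makes $F(t,\cdot)$ globally Lipschitz with a constant $c$ that is \emph{independent of} $t$. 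For the growth bound I would use $\sigma(0)=0$ and contractivity to get $\|\sigma(A_1(t)x+b_1(t))\|_{\ell^2(\R^d)} \le \|A_1(t)\|_{\ell^2(\R^d)}\|x\|_{\ell^2(\R^d)} + \|b_1(t)\|_{\ell^2(\R^d)}$, whence
\begin{align*}
\|F(t,x)\|_{\ell^2(\R^d)} \le c\,\|x\|_{\ell^2(\R^d)} + \|A_2(t)\|_{\ell^2(\R^d)}\|b_1(t)\|_{\ell^2(\R^d)} + \|b_2(t)\|_{\ell^2(\R^d)}.
\end{align*}
Setting $\beta(t):=\|A_2(t)\|_{\ell^2(\R^d)}\|b_1(t)\|_{\ell^2(\R^d)} + \|b_2(t)\|_{\ell^2(\R^d)}$ yields the required affine estimate $\|F(t,x)\|\le c\|x\|+\beta(t)$, while measurability of $F(\cdot,x)$ follows from measurability of $t\mapsto A_i(t),b_i(t)$ and continuity of $\sigma$. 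With these checks in place, Theorem~1 of \cite{edmond2005sweeping} delivers a unique absolutely continuous solution, and since the sweeping process constrains the trajectory to $C=\R^d_+$ and $x_0\in\R^d_+$, one concludes $x(t)\in\R^d_+$ for all $t$.

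The main obstacle is not the estimates, which are short, but matching the regularity side conditions: the stated hypotheses control only the product of the operator norms and say nothing explicit about the individual coefficients, so to invoke the cited theorem I must assume (or read into the standing hypotheses) that the maps $t\mapsto A_i(t),b_i(t)$ are measurable and that $\beta(\cdot)$ is locally integrable. As an alternative route to uniqueness that also foreshadows the later stability bounds, I would note that $\mathcal{N}_{\R^d_+}=\partial I_{\R^d_+}$ is monotone, so for two solutions $x,y$ the normal-cone contribution to $\tfrac{d}{dt}\tfrac12\|x-y\|^2$ is nonpositive, leaving $\tfrac{d}{dt}\tfrac12\|x-y\|_{\ell^2(\R^d)}^2 \le c\|x-y\|_{\ell^2(\R^d)}^2$, and Gronwall's inequality forces $x\equiv y$.
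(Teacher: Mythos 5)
Your proposal is correct and follows essentially the same route as the paper: both verify the hypotheses of Theorem~1 of \cite{edmond2005sweeping} for the constant set $C(t)=\R^d_+$ (prox-regular by convexity, zero variation so $v\equiv0$), establish the $t$-uniform Lipschitz constant $c$ from the product bound on the operator norms, and obtain the linear growth bound with $\beta(t)$ absorbing the bias terms, concluding $x(t)\in\R^d_+$ from the sweeping-process constraint. The only cosmetic difference is that the paper packages the growth estimate as $\beta(t)(1+\|x\|_2)$ with $\beta(t):=\max\{c,\|A_2(t)\|_2\|b_1(t)\|_2+\|b_2(t)\|_2\}$ to match the cited theorem's condition (iv) literally, which your affine form yields immediately.
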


Theorem~\ref{thm: inclusion resnet} shows that in the continuous-time case, there exists only one path in the feature space. Thus, as the number of residual layers increases in a network, we should expect the residual layers to approximate one consistent path from the input to the output. The requirement is that the matrices $A_1$ and $A_2$ are bounded in $\ell^2$, which is often imposed in the training phase via the optimization problem (see Section~\ref{sec: training}).  The stability bounds in the following theorems are derived from the subdifferential interpretation.

\begin{theorem} \textbf{(Continuous-time ResNet, Stability Bounds)} \label{prop: gen stable}\\
With the same assumptions as in Theorem \ref{thm: inclusion resnet}, the unique absolutely continuous  solution $x$ to Equation~\eqref{eq: general form continuous} is stable in the following sense:
\begin{align}
\|x(t)\|_2 &\le \|x(0)\|_2 \, \exp\left(\int_0^t \|A_1(s)\|_2 \,\|A_2(s)\|_2 \, \dd{s}\right) \nonumber \\
&\quad+ \int_0^t \left(\|A_2(s)\|_2\, \|b_1(s)\|_2+\|\left(b_2(s)\right)_+\|_2 \right)\, \exp\left(\int_s^t \|A_1(r)\|_2\, \|A_2(r)\|_2 \, \dd{r}\right) \, \dd{s} \label{eq: general bound 1}
\end{align}
for all $t>0$. In addition, if $y$ is the unique absolutely continuous solution to Equation~\eqref{eq: general form continuous} with input $y(0)$, then for all $t>0$,
\begin{align}
\|x(t)-y(t)\|_2 \leq \|x(0)-y(0)\|_2\,  \exp\left(\int_0^t \|A_1(s)\|_2 \|A_2(s)\|_2 \, \dd{s}\right). \label{eq: general bound 2}
\end{align}
\end{theorem}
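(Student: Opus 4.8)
The plan is to derive both estimates directly from the subdifferential formulation \eqref{eq:general form continuous2}, testing the inclusion against a suitable vector and then invoking Gr\"onwall's inequality. Throughout, write $a(t):=\|A_1(t)\|_2\,\|A_2(t)\|_2$ (so $a(t)\le c$), suppress the time arguments of $A_1,A_2,b_1,b_2$ where harmless, and let $\xi(t):=-\tfrac{\text{d}}{\text{d}t}x(t)-A_2\,\sigma(A_1x+b_1)+b_2$, so that $\xi(t)\in\mathcal{N}_{\R^d_+}(x(t))$ for a.e.\ $t$. Since the solution from Theorem~\ref{thm: inclusion resnet} is absolutely continuous and $\R^d_+$-valued, $t\mapsto\tfrac12\|x(t)\|_2^2$ is absolutely continuous with $\tfrac12\frac{\text{d}}{\text{d}t}\|x\|_2^2=\langle x,\dot x\rangle$ for a.e.\ $t$.

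For the growth bound \eqref{eq: general bound 1}, I would first exploit the normal-cone structure: taking the test point $y=0\in\R^d_+$ in the defining inequality $\langle\xi,y-x\rangle\le0$ gives $\langle\xi,x\rangle\ge0$, so substituting $\dot x=-\xi-A_2\sigma(A_1x+b_1)+b_2$ yields
\[
\tfrac12\tfrac{\text{d}}{\text{d}t}\|x\|_2^2=-\langle x,\xi\rangle-\langle x,A_2\sigma(A_1x+b_1)\rangle+\langle x,b_2\rangle\le-\langle x,A_2\sigma(A_1x+b_1)\rangle+\langle x,b_2\rangle.
\]
I would then bound the force term by Cauchy--Schwarz, submultiplicativity of the operator norm, and the contractivity of $\sigma$ together with $\sigma(0)=0$ (which for ReLU is Proposition~\ref{prop: relu}), giving $\|\sigma(A_1x+b_1)\|_2\le\|A_1\|_2\|x\|_2+\|b_1\|_2$. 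For the bias term the key observation is that $x\in\R^d_+$ forces $\langle x,b_2\rangle\le\langle x,(b_2)_+\rangle\le\|x\|_2\,\|(b_2)_+\|_2$ componentwise, which is precisely the source of the positive part in \eqref{eq: general bound 1}. Combining these produces the scalar differential inequality $\tfrac12\frac{\text{d}}{\text{d}t}\|x\|_2^2\le a(t)\|x\|_2^2+\big(\|A_2\|_2\|b_1\|_2+\|(b_2)_+\|_2\big)\|x\|_2$, from which Gr\"onwall's inequality in integral form (with time-dependent rate $a$) delivers \eqref{eq: general bound 1}.

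For the sensitivity bound \eqref{eq: general bound 2}, I would subtract the inclusions satisfied by $x$ and $y$, pair the difference with $x-y$, and use the monotonicity of $\partial I_{\R^d_+}=\mathcal{N}_{\R^d_+}$ (valid since $I_{\R^d_+}$ is convex) to discard the normal-cone contribution via $\langle\xi_x-\xi_y,\,x-y\rangle\ge0$, where $\xi_x,\xi_y$ are the respective selections. What remains is
\[
-\tfrac12\tfrac{\text{d}}{\text{d}t}\|x-y\|_2^2\ge\big\langle A_2\big(\sigma(A_1x+b_1)-\sigma(A_1y+b_1)\big),\,x-y\big\rangle,
\]
and Cauchy--Schwarz with the $1$-Lipschitz property of $\sigma$ (so that $\|\sigma(A_1x+b_1)-\sigma(A_1y+b_1)\|_2\le\|A_1\|_2\|x-y\|_2$) bounds the right-hand side below by $-a(t)\|x-y\|_2^2$. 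This gives $\frac{\text{d}}{\text{d}t}\|x-y\|_2^2\le2a(t)\|x-y\|_2^2$, and Gr\"onwall followed by a square root yields \eqref{eq: general bound 2}.

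The main obstacle I anticipate is not the algebra but the regularity bookkeeping needed to pass from the estimate on $\frac{\text{d}}{\text{d}t}\|x\|_2^2$ to a \emph{linear} estimate on $\frac{\text{d}}{\text{d}t}\|x\|_2$ in the first bound: dividing through by $\|x\|_2$ is illegitimate at instants where $x(t)=0$. I would handle this in the standard way by working with the smoothed quantity $\sqrt{\|x(t)\|_2^2+\varepsilon}$, which is everywhere differentiable along the absolutely continuous trajectory, deriving the differential inequality for it, and then letting $\varepsilon\downarrow0$. The second bound avoids this issue entirely, since the squared norm is retained until the final square root. One must also keep in mind that all pointwise inner-product identities hold only almost everywhere, where $\dot x$ and the selection $\xi$ exist, which suffices because the trajectory is absolutely continuous.
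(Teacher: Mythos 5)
Your proposal is correct and follows essentially the same route as the paper's proof: test the inclusion against $x$ (resp.\ $x-y$), discard the normal-cone term via $\langle\xi,x\rangle\ge0$ (resp.\ monotonicity of the subdifferential), use contractivity of $\sigma$ with $\sigma(0)=0$ and positivity of $x$ to isolate $\|(b_2)_+\|_2$, and close with Gr\"onwall. The only cosmetic difference is at the last step: where you regularize with $\sqrt{\|x\|_2^2+\varepsilon}$ to avoid dividing by $\|x\|_2$ at its zeros, the paper instead applies a nonlinear Gr\"onwall inequality (Theorem~\ref{thm: gronwall} with $\alpha=1/2$ and $u=\|x\|_2^2/2$), which absorbs that degeneracy automatically.
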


Equation~\eqref{eq: general bound 1} provides an upper-bound to the growth rate of the features in the continuous-time network, and Equation~\eqref{eq: general bound 2} shows that the sensitivity of the network to perturbations depends on the size of the weight matrices. Without any additional assumptions on the weights $A_i$ and/or biases $b_i$ (for $i=1,2$) (except for uniform-in-time boundness), the solution to Equation \eqref{eq: general form continuous} and the perturbations can grow exponentially with respect to the depth. By testing a standard ResNet code\footnote{We used the open-sourced code from the TFLearn library on \href{https://github.com/tflearn/tflearn/blob/master/examples/images/residual_network_cifar10.py}{GitHub}.}, we observed that without batch normalization, the norms of the features increase by a factor of 10 after about every 3-4 residual layers. Thus, in very deep networks there could be features with large values, which are typically not well-conditioned. It is interesting to note that with batch normalization, experiments show that the norms of the features grow but not as dramatically. 

In practice, regularization is added to the optimization problem (often by penalizing the norms of the weight matrices) so that the trained network does not overfit the training data. In addition, Theorem~\ref{prop: gen stable} shows that for a deep network, the stability of the continuous-time dynamics depends on the norms of the weight matrices $A_i$. Thus, with sufficient regularization on the weights, the growth rate can be controlled to some extent.

 \subsection{Continuous-time Stability of Variants of ResNet}
There are multiple ways to control the feature-norms in deep ResNets. The results in Section~\ref{sec:stable continuous} indicate that for a general residual layer, the regularization will control the growth rates. Alternatively, by changing the structure of the residual layer through constraints on $A_i$, the dynamics will emit solutions that satisfy smaller growth bound. In Section \ref{sec: results depth}, computational experiments show that the variants produce similar accuracy results to the original ResNet \cite{he2015resnet} with provably tighter bounds.  
  
 We propose two variants on the residual layer, which improve the stability estimates from  Section~\ref{sec:stable continuous}.  We will assume in addition that the activation function $\sigma$ in Equation \eqref{eq: general form discrete} is ReLU. The first form improves the feature-norm bound by imposing that  $A_2(t)\in \R_{+}^{d\times d}$:
\begin{align}
-\dfrac{\text{d}}{\text{dt}}x(t) -  A_2(t)\,\sigma(A_1(t)x(t)+b_1(t)) + b_2(t) &\in \partial{I}_{\R^{d}_+}(x)  \quad \text{with } A_2(t)\in \R_{+}^{d\times d}. \label{eq:inclusion_form1}
\end{align}
The network associated with residual layers characterized by Equation \eqref{eq:inclusion_form1} will be called \textbf{ResNet-D}. This is in reference to the decay of the system when the biases are identically zero for all time. When the biases are non-zero, one can show the following improved bound (as compared to Theorem \ref{prop: gen stable}).

\begin{theorem} \label{prop: form1 stable}
With the same assumptions as in Theorem \ref{thm: inclusion resnet} and that $\sigma$ is ReLU, the unique absolutely continuous  solution $x$ to Equation~\eqref{eq:inclusion_form1} is stable in the following sense:
\begin{align}
\|x(t)\|_2 \le \|x(0)\|_2 + \int_0^t\|\left(b_2(s)\right)_+\|_2\,\dd s \label{eq: improved 1}
\end{align}
for all $t>0$. 
\end{theorem}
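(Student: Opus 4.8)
The plan is to run a Lyapunov-type energy estimate on $\|x(t)\|_2^2$, exploiting the normal-cone structure of the inclusion together with the sign constraints that distinguish ResNet-D from the general system. First I would invoke Theorem~\ref{thm: inclusion resnet} to know that Equation~\eqref{eq:inclusion_form1} admits a unique absolutely continuous solution with $x(t)\in\R^d_+$, so that $t\mapsto\|x(t)\|_2$ is absolutely continuous and differentiable almost everywhere. Writing $\xi(t):=-\dot x(t)-A_2(t)\,\sigma(A_1(t)x(t)+b_1(t))+b_2(t)\in\mathcal{N}_{\R^d_+}(x(t))$, I would use the defining inequality $\langle\xi,y-x\rangle\le0$ for all $y\in\R^d_+$ at the two admissible test points $y=0$ and $y=2x(t)$ (both in $\R^d_+$ since $x(t)\in\R^d_+$) to extract the complementarity identity $\langle\xi(t),x(t)\rangle=0$.

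Next I would differentiate the energy. Expanding $\langle\xi,x\rangle=0$ gives $-\langle\dot x,x\rangle-\langle A_2\,\sigma(A_1x+b_1),x\rangle+\langle b_2,x\rangle=0$, hence almost everywhere
\[
\tfrac12\,\tfrac{\dd}{\dd t}\|x\|_2^2=\langle\dot x,x\rangle=-\langle A_2\,\sigma(A_1x+b_1),x\rangle+\langle b_2,x\rangle.
\]
The crucial step, and the sole place where the constraint $A_2(t)\in\R^{d\times d}_+$ is used, is to show the first term is nonnegative: because $\sigma$ is ReLU, the vector $\sigma(A_1x+b_1)$ is entrywise nonnegative; multiplying by the entrywise-nonnegative matrix $A_2$ keeps it in $\R^d_+$; and since $x(t)\in\R^d_+$, the inner product $\langle A_2\,\sigma(A_1x+b_1),x\rangle$ is a sum of products of nonnegative reals, hence $\ge0$. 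This is precisely the decay mechanism that removes the exponential factor of Theorem~\ref{prop: gen stable}. Dropping this nonpositive contribution leaves $\tfrac12\,\tfrac{\dd}{\dd t}\|x\|_2^2\le\langle b_2,x\rangle$.

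Finally I would bound the forcing term and integrate. Using $x(t)\ge0$ componentwise, each term satisfies $(b_2)_i\,x_i\le\bigl((b_2)_i\bigr)_+\,x_i$, so $\langle b_2,x\rangle\le\langle(b_2)_+,x\rangle\le\|(b_2)_+\|_2\,\|x\|_2$ by Cauchy--Schwarz. Setting $g(t):=\|x(t)\|_2$, this gives $g(t)\,g'(t)\le\|(b_2)_+\|_2\,g(t)$ a.e., whence $g'(t)\le\|(b_2(t))_+\|_2$ at a.e.\ $t$ with $g(t)>0$. I expect the only genuine obstacle to be the passage from the inequality for $\tfrac{\dd}{\dd t}\|x\|_2^2$ to one for $\tfrac{\dd}{\dd t}\|x\|_2$ without dividing by a vanishing norm; this is handled by the standard fact that $g'=0$ a.e.\ on the set $\{g=0\}$ (a minimum of the nonnegative absolutely continuous function $g$), or alternatively by regularizing with $\sqrt{\|x\|_2^2+\varepsilon^2}$ and letting $\varepsilon\to0$. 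Integrating $g'\le\|(b_2)_+\|_2$ from $0$ to $t$ then yields Equation~\eqref{eq: improved 1}.
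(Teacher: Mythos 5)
Your proposal is correct and follows essentially the same route as the paper's proof: an energy estimate on $\|x\|_2^2$, the normal-cone/monotonicity argument to discard $\langle\xi,x\rangle$, nonnegativity of $x^TA_2\,\sigma(\cdot)$ from $A_2\ge0$, ReLU, and $x\in\R^d_+$, and the bound $\langle b_2,x\rangle\le\|(b_2)_+\|_2\|x\|_2$. The only difference is cosmetic: the paper closes by invoking its nonlinear Gronwall lemma (Theorem~\ref{thm: gronwall} with $\alpha=1/2$), whereas you integrate $g'\le\|(b_2)_+\|_2$ directly after handling the set $\{g=0\}$ — both are valid.
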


Theorem \ref{prop: form1 stable} shows that the continuous-time feature vector does not grow as quickly as the depth of the network increases. In order to improve Equation \eqref{eq: general bound 2}, which measures the sensitivity of the features to changes in the inputs, we impose a symmetric structure to the weights:
\begin{align}
-\dfrac{\text{d}}{\text{dt}}x(t) -  A(t)^T\,\sigma(A(t)x(t)+b_1(t)) + b_2(t) &\in \partial{I}_{\R^{d}_+}(x). \label{eq:inclusion_form2} 
\end{align}
We refer to the network associated with residual layers characterized by Equation \eqref{eq:inclusion_form2} as \textbf{ResNet-S}. The forcing function:
 \begin{align*}
F(t,x)= -A(t)^T\,\sigma(A(t)x(t)+b_1(t))
\end{align*} 
in Equation \eqref{eq:inclusion_form2} was proposed in \cite{chang2017reversible, ruthotto2018pde} and is motivated by parabolic differential equations. Similarly, Equation~\eqref{eq:inclusion_form2} is the nonlinear parabolic differential equation which (under certain conditions) arises from an obstacle problem using the Dirichlet energy \cite{lions1979splitting, tran20151, schaefferpenalty}.  The following result shows that Equation \eqref{eq:inclusion_form2} improves the bounds in Theorem \ref{prop: gen stable}.

\begin{theorem} \label{prop: form2 stable}
With the same assumptions as in Theorem \ref{thm: inclusion resnet} and that $\sigma$ is ReLU, the unique absolutely continuous solution $x$ to Equation~\eqref{eq:inclusion_form2} is stable in the following sense:
\begin{align}
\|x(t)\|_2 &\le \|x(0)\|_2 + \int_0^t \| \left(-A(s)^T\sigma(b_1(s)) + b_2(s) \right)_+\|_2\,\dd s\label{eq: improved 2.1}
\end{align}
for all $t>0$.  In addition, if $y$ is the unique absolutely continuous solution to Equation~\eqref{eq:inclusion_form2} with input $y(0)$, then for all $t>0$,
\begin{align}
\|x(t)-y(t)\|_2 &\leq \|x(0)-y(0)\|_2. \label{eq: improved 2.2}
\end{align}
\end{theorem}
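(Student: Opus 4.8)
The plan is to prove both bounds by differentiating a squared $\ell^2$-norm along solutions and exploiting two monotonicity facts: first, that the normal cone $\mathcal{N}_{\R^d_+} = \partial I_{\R^d_+}$ is a monotone operator, being the subdifferential of a convex function; and second, that the ReLU activation $\sigma$ is nondecreasing, so that $\langle \sigma(u)-\sigma(v),\,u-v\rangle \ge 0$ for all $u,v\in\R^d$ (applied component-wise). The \emph{symmetric} structure $A^T\sigma(A\,\cdot\,)$ is precisely what allows the second fact to be used: transposing $A$ onto the test vector turns $\langle A^T(\sigma(u)-\sigma(v)),\,w\rangle$ into $\langle \sigma(u)-\sigma(v),\,Aw\rangle$, where $Aw$ is exactly the increment $u-v$ fed into $\sigma$. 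Throughout I use that the solution is absolutely continuous and satisfies $x(t)\in\R^d_+$ by Theorem~\ref{thm: inclusion resnet}, so $t\mapsto\|x(t)\|_2^2$ is absolutely continuous and the chain rule holds a.e.

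For the sensitivity bound \eqref{eq: improved 2.2}, I would take the two solutions $x,y$ and select measurable $\xi_x\in\mathcal{N}_{\R^d_+}(x)$ and $\xi_y\in\mathcal{N}_{\R^d_+}(y)$ realizing the inclusion, so that $-(\dot x-\dot y) = (\xi_x-\xi_y) + A^T\big(\sigma(Ax+b_1)-\sigma(Ay+b_1)\big)$, the bias $b_2$ cancelling. Pairing with $x-y$ gives $-\tfrac12\frac{d}{dt}\|x-y\|_2^2 = \langle \xi_x-\xi_y,\,x-y\rangle + \langle \sigma(Ax+b_1)-\sigma(Ay+b_1),\,A(x-y)\rangle$. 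The first term is nonnegative by monotonicity of the normal cone; the second is nonnegative by monotonicity of $\sigma$, taking $u=Ax+b_1$, $v=Ay+b_1$ so that $u-v=A(x-y)$. Hence $\frac{d}{dt}\|x-y\|_2^2\le 0$, so $\|x-y\|_2$ is non-increasing, which yields \eqref{eq: improved 2.2} with no growth factor.

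For the growth bound \eqref{eq: improved 2.1}, I would write the forcing as $g(t,x) := -A(t)^T\sigma(A(t)x+b_1(t)) + b_2(t)$, so that $\dot x = -\xi + g(t,x)$ with $\xi\in\mathcal{N}_{\R^d_+}(x)$. Since $x\ge 0$, the complementarity property of the normal cone gives $\langle\xi,x\rangle = 0$, hence $\langle\dot x,x\rangle = \langle g(t,x),x\rangle$. Writing $g(t,x)-g(t,0) = -A^T\big(\sigma(Ax+b_1)-\sigma(b_1)\big)$ and pairing with $x$, ReLU-monotonicity (now with $u=Ax+b_1$, $v=b_1$, $u-v=Ax$) gives $\langle g(t,x)-g(t,0),\,x\rangle = -\langle \sigma(u)-\sigma(v),\,u-v\rangle \le 0$, so $\langle\dot x,x\rangle \le \langle g(t,0),x\rangle$. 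Because $x\ge 0$ component-wise, $\langle g(t,0),x\rangle \le \langle (g(t,0))_+,\,x\rangle$, and $g(t,0) = -A^T\sigma(b_1)+b_2$. Cauchy--Schwarz then yields $\tfrac12\frac{d}{dt}\|x\|_2^2 \le \|(g(t,0))_+\|_2\,\|x\|_2$, from which $\frac{d}{dt}\|x\|_2 \le \|(-A^T\sigma(b_1)+b_2)_+\|_2$ wherever $\|x\|_2>0$; integrating gives \eqref{eq: improved 2.1}.

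The \textbf{main obstacle} is the rigorous handling of the non-smoothness and almost-everywhere nature of these computations. Two points need care: the existence of the measurable selections $\xi_x,\xi_y$, which is supplied by the construction underlying Theorem~\ref{thm: inclusion resnet}; and the passage from the squared-norm inequality $\tfrac12\frac{d}{dt}\|x\|_2^2 \le \psi(t)\,\|x\|_2$ to the linear inequality $\frac{d}{dt}\|x\|_2 \le \psi(t)$, which is only immediate on $\{\|x\|_2>0\}$. The degenerate set $\{\|x\|_2=0\}$ is dispatched by a standard comparison lemma for absolutely continuous functions: if $\phi\ge 0$ is absolutely continuous with $\phi\,\phi'\le \psi\,\phi$ a.e. and $\psi\ge 0$, then $\phi(t)\le \phi(0)+\int_0^t\psi(s)\,\dd{s}$. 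No exponential (Gronwall) factor is incurred at any step, which is exactly the improvement over Theorem~\ref{prop: gen stable}.
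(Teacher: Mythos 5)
Your proposal is correct and follows essentially the same route as the paper's proof: both bounds are obtained by pairing the inclusion with $x$ (resp.\ $x-y$), using monotonicity of the normal cone together with monotonicity of $u\mapsto\sigma(u+b_1)$ under the symmetric structure $A^T\sigma(A\,\cdot\,)$, and isolating the zero-state forcing $-A^T\sigma(b_1)+b_2$ whose positive part controls the growth. The only cosmetic difference is that you integrate the resulting differential inequality directly via a comparison lemma, whereas the paper invokes its nonlinear Gronwall result (Theorem~\ref{thm: gronwall} with $\alpha=1/2$); these are equivalent.
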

Equation~\eqref{eq: improved 2.2} shows that the features are controlled by perturbations in the inputs, \textit{regardless of the depth}. The proofs of Theorems \ref{thm: inclusion resnet}-\ref{prop: form2 stable} are provided in Appendix \ref{sec: proof}.

\section{Discrete Stability of ResNet-D and ResNet-S}\label{sec: discrete}

Since DNNs are discrete, in this section we provide discrete stability bounds on the features, similar to those in Section~\ref{sec: continuous}. For simplicity, we set all activation functions to ReLU.

\subsection{Architecture of ResNet-D and ResNet-S}

We will discuss the architecture used for the problem of image classification and the associated architecture of ResNet-D and ResNet-S. The base structure of the networks is shown in Figure \ref{fig: network baseline}, which is a variant of the standard architecture for ResNets \cite{he2015resnet}. The input to a network is an image in $\R^{h_1\times w_1 \times d_0}$, and the first layer in the network is a convolution layer (shown in Figure \ref{fig: network convolution}), which increases the depth of the input image to $d_1$. The convolution layer is followed by a stack of $m$ residual layers, which take one of the two forms detailed in Figures \ref{fig: resnet form1} and \ref{fig: resnet form2}. The residual block is followed by a 2D pooling layer (shown in Figure \ref{fig: network 2d pooling}), which halves the resolution and doubles the depth of the incoming feature ({\it i.e.} $h_2=\ceil*{h_1/2}$, $w_2=\ceil*{w_1/2}$, and $d_2=2d_1$). The resulting feature is then processed by a stack of $m-1$ residual layers, a 2D pooling layer ({\it i.e.} $h_3=\ceil*{h_2/2}$, $w_3=\ceil*{w_2/2}$, and $d_3=2d_2$), and another stack of $m-1$ residual layers. Finally, we reduce the dimension of the resulting feature by adding a global average pooling layer (shown in Figure \ref{fig: network global pooling}) and a fully connected layer (shown in Figure \ref{fig: network fully connected}).

\begin{figure}[t!]
\centering
\includegraphics[scale=0.8]{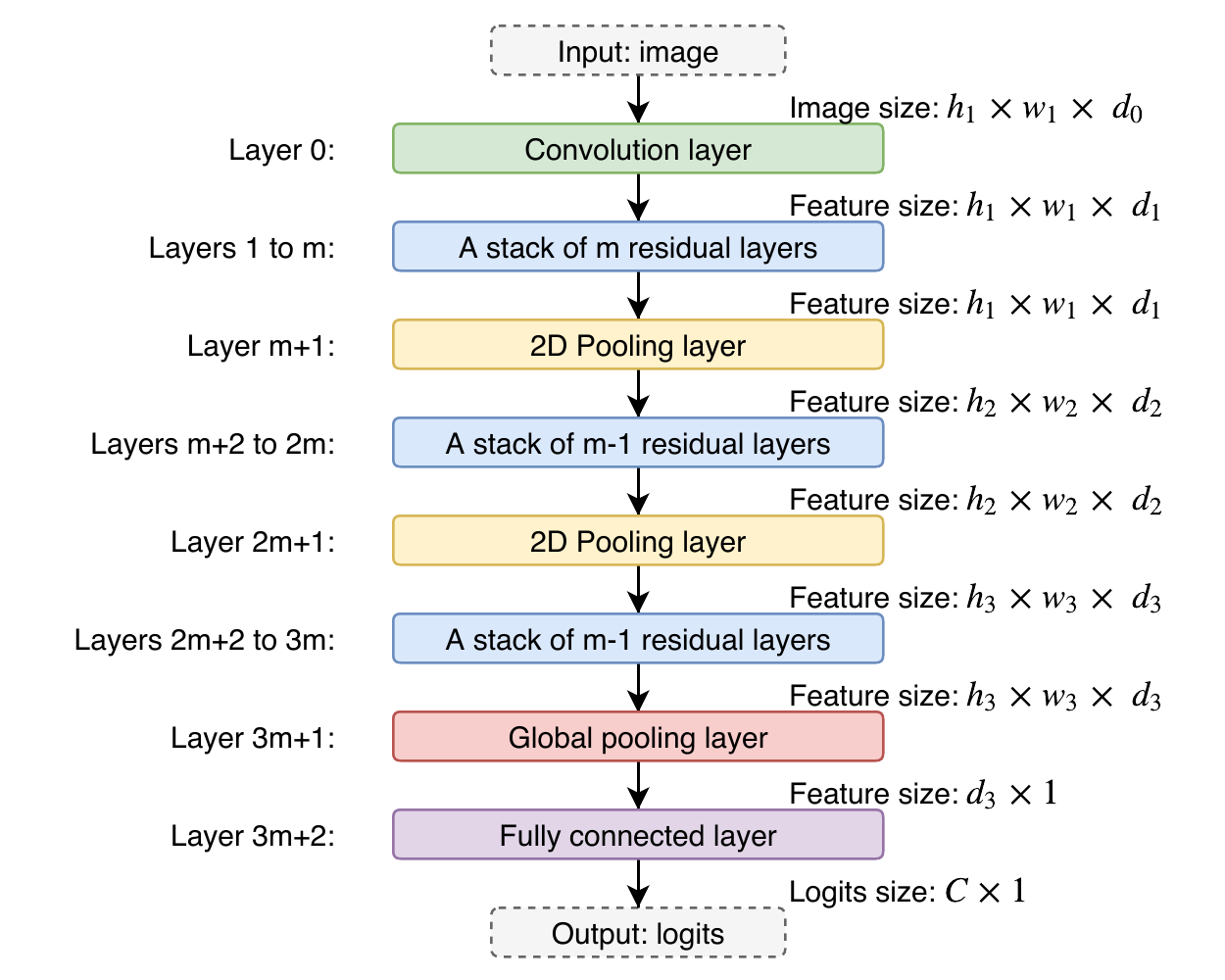}
\caption{Architecture of ResNet-D and ResNet-S for the image classification problem. The input image is of size $h_1\times w_1 \times d_0$, and is contained in a dataset with $C$ classes. The dimension of the features is changed through the network, where $h_{i+1}=\ceil*{h_i/2}$, $w_{i+1}=\ceil*{w_i/2}$, and $d_{i+1}=2d_i$ (for $i=1,2$).}
\label{fig: network baseline}
\end{figure}

\begin{figure}[b!]
\centering
\begin{minipage}[c]{0.4\linewidth}
\centering
\subfloat[The ResNet-D layer (Eq. \eqref{eq: layer resnet1}). \label{fig: resnet form1}]{\includegraphics[scale=0.72]{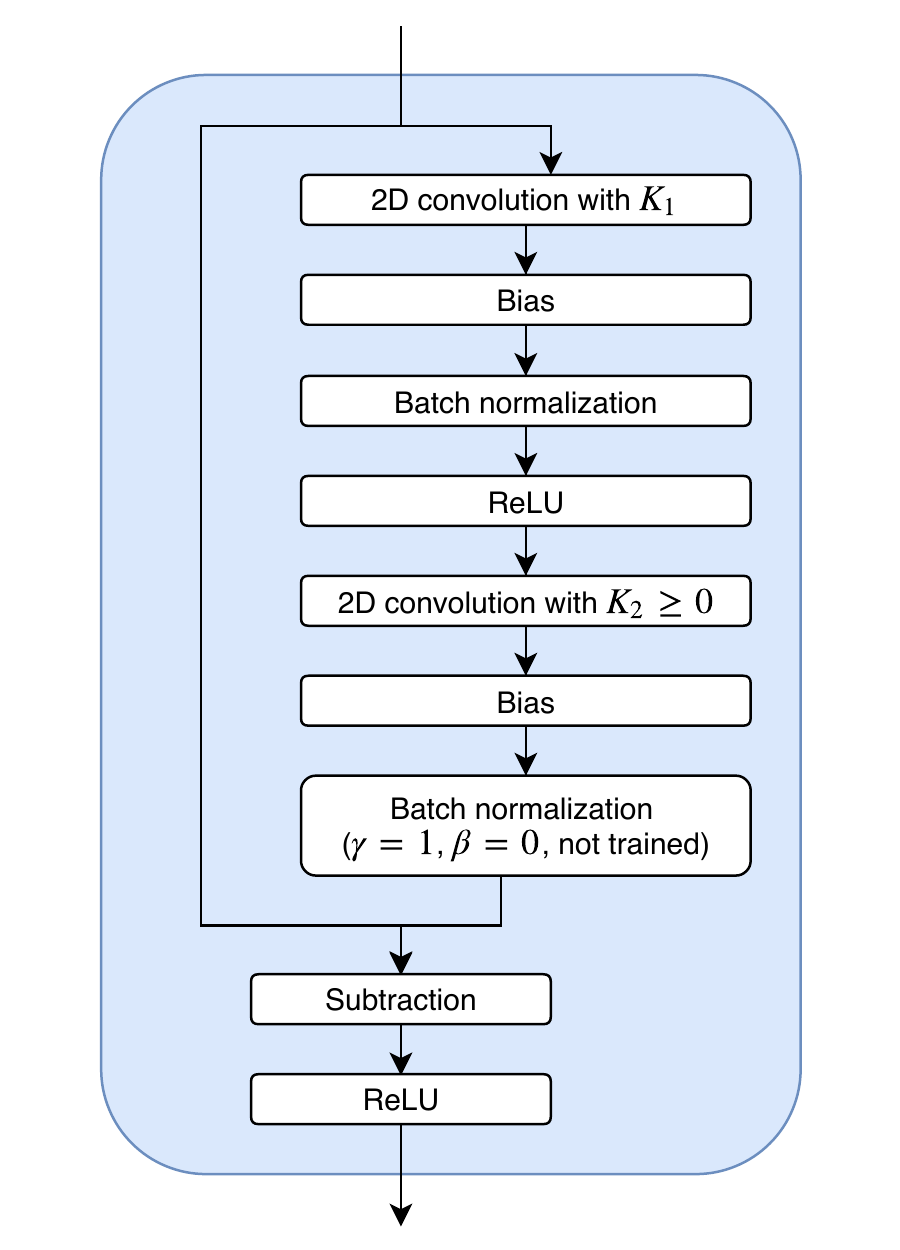}}\par
\subfloat[The ResNet-S layer (Eq. \eqref{eq: layer resnet2}). \label{fig: resnet form2}]{\includegraphics[scale=0.72]{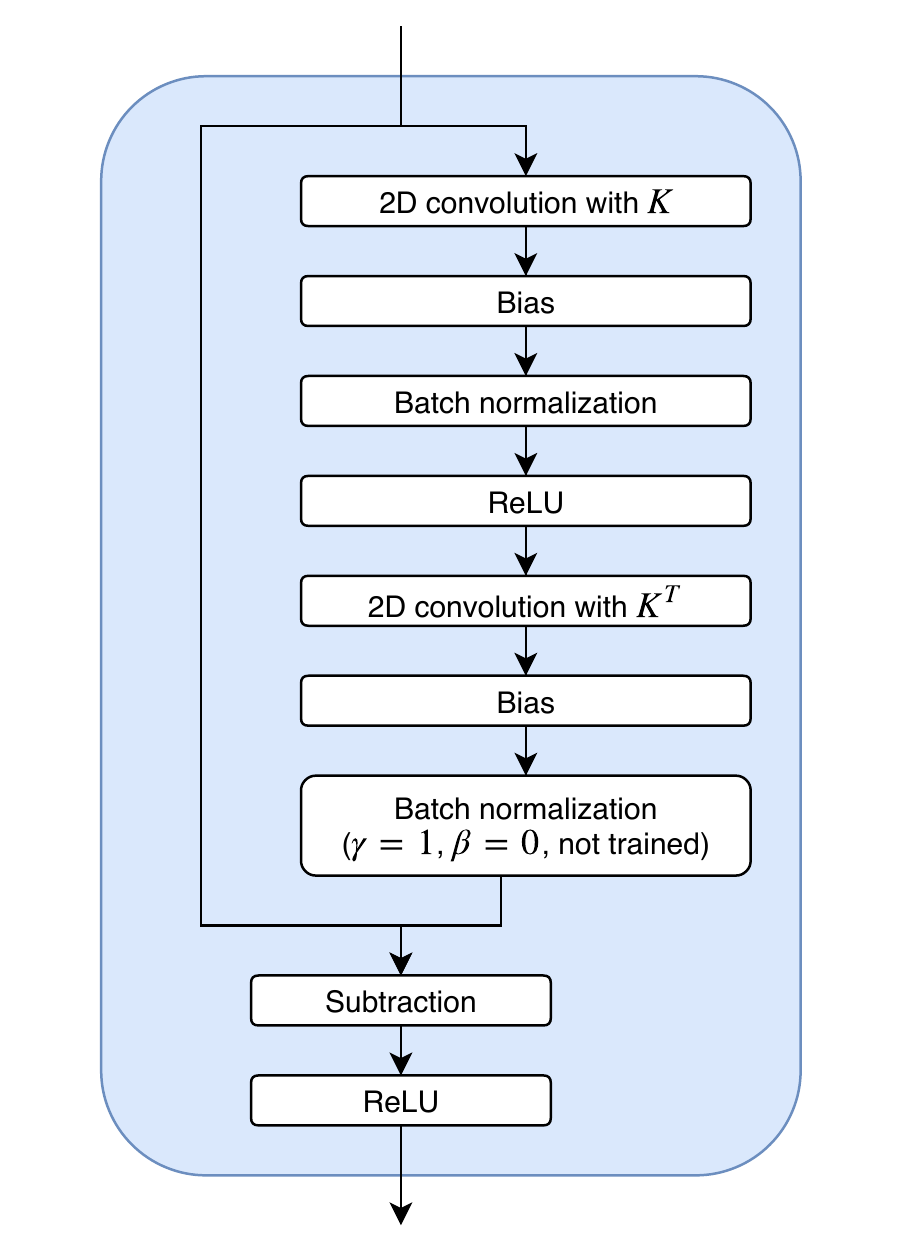}}
\end{minipage}%
\hfill
\begin{minipage}[c]{0.5\linewidth}
\centering
\subfloat[The convolution layer (Eq. \eqref{eq: layer conv}). \label{fig: network convolution}]{\includegraphics[scale=0.72]{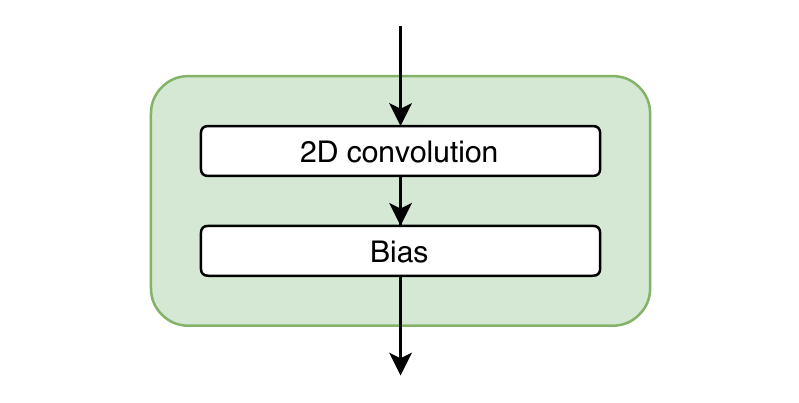}}\par 
\subfloat[The 2D pooling layer (Eq. \eqref{eq: layer 2d pool}). \label{fig: network 2d pooling}]{\includegraphics[scale=0.72]{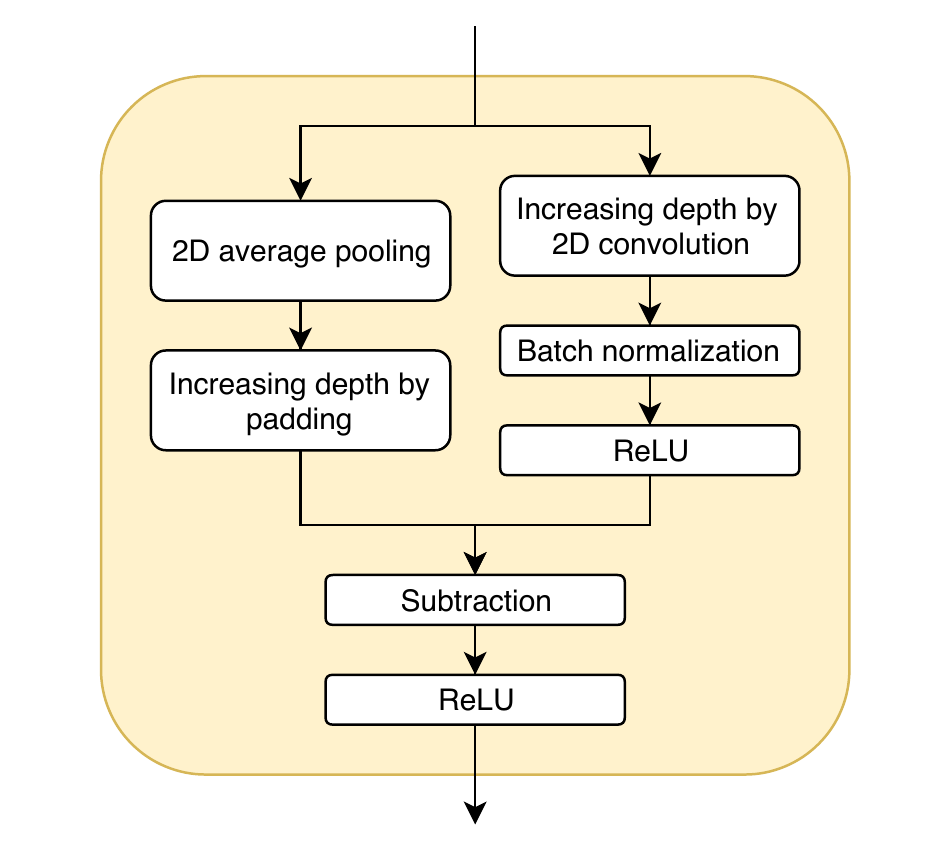}}\par 
\subfloat[The global pooling layer  (Eq. \eqref{eq: layer global pool}). \label{fig: network global pooling}]{\includegraphics[scale=0.72]{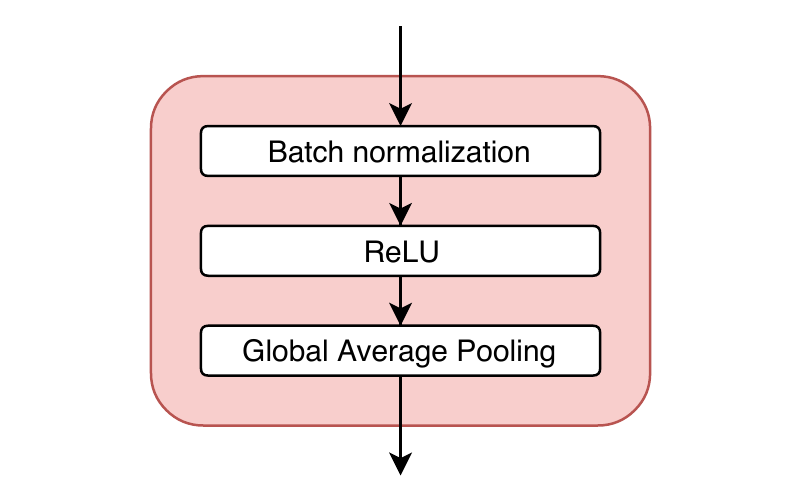}}\par
\subfloat[The fully connected layer (Eq. \eqref{eq: layer dense}). \label{fig: network fully connected}]{\includegraphics[scale=0.72]{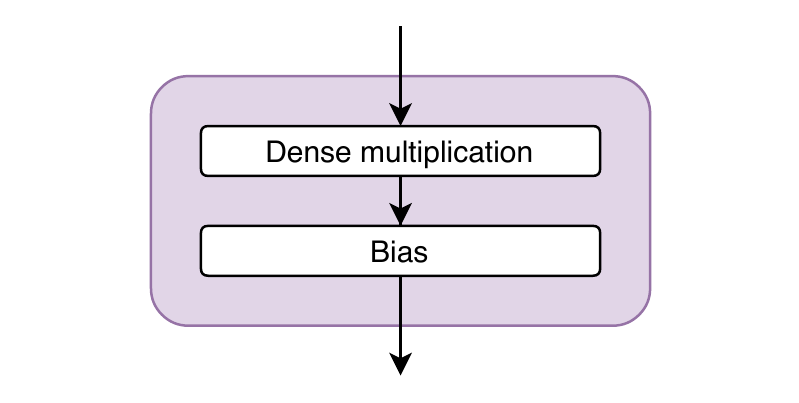}}
\end{minipage}

\caption{Details of the layers in ResNet-D and ResNet-S.}
\label{fig: network layers}
\end{figure}

Let $x^0$ be the vector representing the input to the network, {\it i.e.} $x^0\in\R^{h_1 w_1 d_0}$. The equations that charaterize the layers in Figure \ref{fig: network layers} are defined as follows:
\begin{align}
&\text{the ResNet-D layer: } \quad && x^{n+1} := ( x^n - A^n_2(A^n_1x^n+b^n_1)_+ + b^n_2)_+ \quad \text{ with } A_2^n\ge0, \label{eq: layer resnet1} \\ 
&\text{the ResNet-S layer: } \quad && x^{n+1} := ( x^n - (A^n)^T(A^nx^n+b^n_1)_+ + b^n_2)_+, \label{eq: layer resnet2} \\ 
&\text{the convolution layer:} \quad && x^{n+1} := A^nx^n + b^n, \label{eq: layer conv} \\ 
&\text{the 2D pooling layer: } \quad && x^{n+1} := \left(E(P_2(x^n)) - \left((A^n)_{|{s=2}}\,x^n+b^n\right)_+\right)_+, \label{eq: layer 2d pool} \\ 
&\text{the global pooling layer: }  \quad && x^{n+1} := P_g\left((x^n)_+\right), \label{eq: layer global pool} \\ 
&\text{the fully connected layer:} \quad && x^{n+1} := W^nx^n + b^n, \label{eq: layer dense} 
\end{align}
where $x^n$ is the input to Layer $n$,  $A^n$ is the matrix associated with the 2D convolution operation with $K^n$ in Layer $n$ (when applicable), $b^n$ is the bias in Layer $n$, and $W^n$ is the weight matrix in the fully connected layer. 

The forward propagation of the network is shown in Figures \ref{fig: propagation inf} and \ref{fig: propagation 2}, which display (channel-wise) the output feature of the indicated layer/block of Network-D and Network-S, respectively. As an example, the input image is a hand-written digit ``2" from the MNIST dataset. The first convolution layer (Layer 1) returns low-level features of the digit (Figures \ref{fig: forward 1-1} and \ref{fig: forward 2-1}). The low-level features are then processed by a stack of residual layers (Layers 1 to $m$), which yields mid-level features of the digit (Figures \ref{fig: forward 1-2} and \ref{fig: forward 2-2}). The mid-level features are then downsampled by a 2D pooling layer (Layer $m+1$) and processed by a stack of residual layers (Layers $m+2$ to $2m$), which yields high-level features of the digit (Figures \ref{fig: forward 1-3} and \ref{fig: forward 2-3}). Similarly, after a 2D pooling layer (Layer $2m+1$) and a stack of ResNet layers (Layers $2m+2$ to $3m$), the high-level features become linearly separable classifiers (Figures \ref{fig: forward 1-4} and \ref{fig: forward 2-4}). The global pooling layer (Layer $3m+1$) and the fully connected layer (Layer $3m+2$) convert the linearly separable classifiers to a vector that can be used to extract a predicted probability distribution of the input. For example, the predicted probability distributions in Figures \ref{fig: forward 1-6} and \ref{fig: forward 2-6} are obtained by applying the softmax normalization function (see Definition \ref{def: softmax}) to the output of the fully connected layer, where the value of the $i$-th bar represents the predicted probability that the input digit is $i$ (for $i=1,2,\dots,10$).

Note that the mid-level features resemble images filtered by edge detectors, similar to CNNs and the standard ResNet. Experimentally, we see that a ResNet-D layer produces a kernel $K_1$ which looks like a gradient stencil and a kernel $K_2$ which acts as a rescaled averaging filter. Thus, the first block in ResNet-D resembles a \textit{non-linear (possibly non-local) transport system}. The non-locality comes from the smoothing process determined by $K_2$. In ResNet-S, since the kernels $K$ from the first residual block are gradient-like stencils, the first block in ResNet-S resembles a \textit{nonlinear diffusive system}. 

\begin{figure}[b!]
\centering
\begin{minipage}[c]{0.48\linewidth}
\centering
\subfloat[Input.]{\makebox[2in][c]{\includegraphics[scale=0.95]{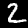}}}\par
\subfloat[Output of the convolution layer.\label{fig: forward 1-1}]{\includegraphics[scale=0.95]{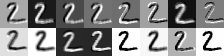}}\par
\subfloat[Output of the first ResNet block.\label{fig: forward 1-2}]{\includegraphics[scale=0.95]{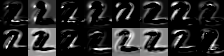}}\par
\subfloat[Output of the second ResNet block.\label{fig: forward 1-3}]{\includegraphics[scale=0.95]{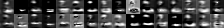}}\par
\subfloat[Output of the third ResNet block.\label{fig: forward 1-4}]{\includegraphics[scale=0.95]{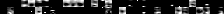}}\par
\subfloat[Predicted probability distrubution.\label{fig: forward 1-6}]{\includegraphics[scale=0.5]{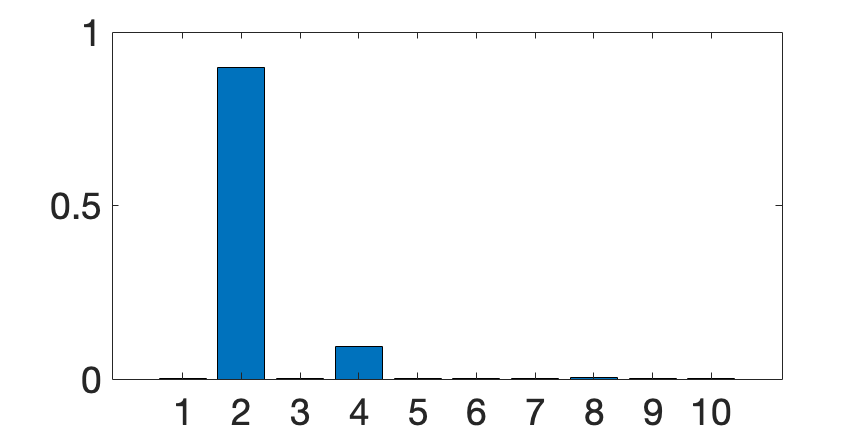}}\par
\caption{Forward propagation of ResNet-D. }
\label{fig: propagation inf}
\end{minipage}
\hfill
\begin{minipage}[c]{0.48\linewidth}
\centering
\subfloat[Input.]{\makebox[2in][c]{\includegraphics[scale=0.95]{m1n3_layer0.png}}}\par
\subfloat[Output of the convolution layer.\label{fig: forward 2-1}]{\includegraphics[scale=0.95]{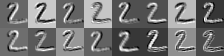}}\par
\subfloat[Output of the first ResNet block.\label{fig: forward 2-2}]{\includegraphics[scale=0.95]{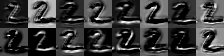}}\par
\subfloat[Output of the second ResNet block.\label{fig: forward 2-3}]{\includegraphics[scale=0.95]{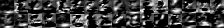}}\par
\subfloat[Output of the third ResNet block.\label{fig: forward 2-4}]{\includegraphics[scale=0.95]{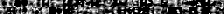}}\par
\subfloat[Predicted probability distribution.\label{fig: forward 2-6}]{\includegraphics[scale=0.5]{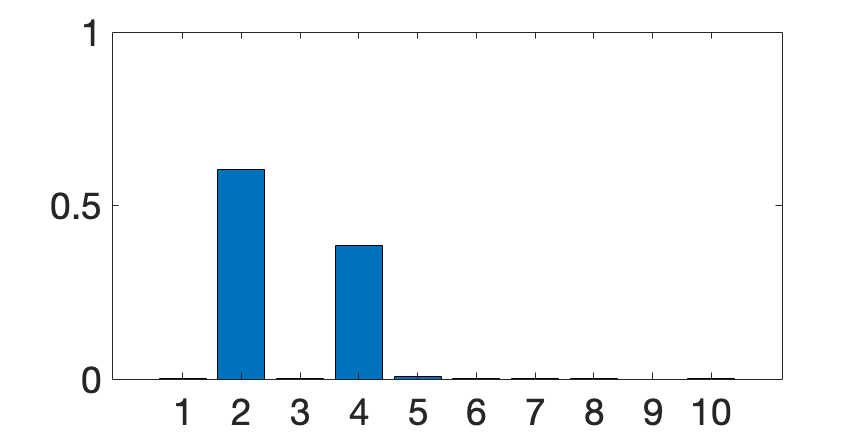}}\par
\caption{Forward propagation of ResNet-S.}
\label{fig: propagation 2}
\end{minipage}%
\end{figure}

\subsection{Forward Stability of ResNet-D and ResNet-S} \label{sec: discrete analysis}

The stability of forward propagation through ResNet-D and ResNet-S can determine both the sensitivity of the network to changes in the inputs and the level of consistency in various computations. If the norms of the weight matrices are small enough, then both the output of the network and changes in the features can be controlled by the inputs. In particular, we have the following (discrete) stability results for ResNet-D and ResNet-S. 

\begin{theorem}\textbf{(Forward Stability, ResNet-D)}\label{thm: res1 net} \\
Consider a network defined in Figure \ref{fig: network baseline}, where the ResNet layers are defined in Figure \ref{fig: resnet form1}. Let $x^0$ be the vectorization of the input to the network, {\it i.e.} $x^0\in\R^{h_1w_1d_0}$, and for each filter $K^n$ in Layer $n$ (when applicable), let $A^n$ be the matrix associated with the 2D convolution operation with $K^n$. Assume that
\begin{subequations}
\begin{align}
&\|A^0\|_{\ell^\infty(\R^{h_1w_1d_0})\to\ell^\infty(\R^{h_1w_1d_1})}\le1, \\
&\|W^{3m+2}\|_{\ell^\infty(\R^{d_3})\to\ell^\infty(\R^{C})}\le1.
\end{align} \label{thm: res1 net constraint}
\end{subequations}
Let $x^n$ be the input to the $n$-th layer and $x^N$ be the output of the network, where $N:=3m+3$.  Then the network is $\ell^\infty$-stable in the sense that:
\begin{align}
\|x^N\|_{\ell^\infty(\R^C)} \le \|x^0\|_{\ell^\infty(\R^{h_1w_1d_0})} + c(b^0,b^1,\dots,b^{N-1}), \label{eq: res1 net}
\end{align}
where $c(b^0,b^1,\dots,b^{N-1})$ is a constant depending on the $\ell^\infty$ norms of the biases in the network; see Equation \eqref{eq: res1 net constant}. If $y^0\in\R^{h_1w_1d_0}$ is the vectorization of another input, then:
\begin{align}
\|x^N-y^N\|_{\ell^2(\R^C)} \le a(A^0,A^1,\dots,W^{N-1}) \,\|x^0-y^0\|_{\ell^2(\R^{h_1w_1d_0})}, \label{eq: res1 net2}
\end{align}
where $a(A^0,A^1,\dots,W^{N-1})$ is a constant depending on the $\ell^2$ norms of the filters and weights in the network; see Equation \eqref{eq: res1 net2 constant}.
\end{theorem}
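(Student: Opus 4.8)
The plan is to propagate the appropriate norm through the network one layer at a time, using the per-layer estimates from Section~\ref{sec: NN operations}, and then to compose them: the additive $\ell^\infty$ bound \eqref{eq: res1 net} will follow by telescoping, and the multiplicative $\ell^2$ bound \eqref{eq: res1 net2} by taking a product of per-layer Lipschitz constants.

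For the growth bound \eqref{eq: res1 net} I would work in $\ell^\infty$ and show that each layer increases the norm by at most a bias contribution. The convolution layer \eqref{eq: layer conv} gives $\|x^{n+1}\|_\infty \le \|A^0\|_{\ell^\infty\to\ell^\infty}\|x^n\|_\infty + \|b^0\|_\infty \le \|x^n\|_\infty + \|b^0\|_\infty$ by assumption \eqref{thm: res1 net constraint}, and the fully connected layer \eqref{eq: layer dense} is identical using the bound on $\|W^{3m+2}\|_{\ell^\infty\to\ell^\infty}$. The crucial step is the ResNet-D layer \eqref{eq: layer resnet1}: since $A^n_2\ge 0$ and the inner term $(A^n_1x^n+b^n_1)_+$ is nonnegative, the subtracted vector $A^n_2(A^n_1x^n+b^n_1)_+$ is nonnegative, so component-wise $(x^{n+1})_i \le ((x^n)_i)_+ + ((b^n_2)_i)_+$, whence $\|x^{n+1}\|_\infty \le \|x^n\|_\infty + \|(b^n_2)_+\|_\infty$ using $(a)_+\le|a|$. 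The same nonnegativity observation handles the 2D pooling layer \eqref{eq: layer 2d pool}, where the subtracted ReLU term is nonnegative and hence $\|x^{n+1}\|_\infty \le \|E(P_2(x^n))\|_\infty = \|P_2(x^n)\|_\infty \le \|x^n\|_\infty$ by Propositions~\ref{prop: padding} and \ref{prop: pooling}; the global pooling layer \eqref{eq: layer global pool} is non-expansive by Propositions~\ref{prop: relu} and \ref{prop: pooling}. Telescoping over all $N$ layers yields \eqref{eq: res1 net}, with $c(b^0,\dots,b^{N-1})$ collecting the surviving bias terms.

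For the sensitivity bound \eqref{eq: res1 net2} I would track the difference $x^n-y^n$ of two trajectories in $\ell^2$, using that ReLU is $1$-Lipschitz (Proposition~\ref{prop: relu}), that $A^0$, $A^n$, $(A^n)_{|s=2}$, $P_2$, $P_g$, $E$, and $W^{N-1}$ are linear, and that $P_2$, $P_g$ are non-expansive while $E$ is norm-preserving (Propositions~\ref{prop: pooling} and \ref{prop: padding}). For a ResNet-D layer, applying the $1$-Lipschitz property of ReLU to the outer and inner rectifiers and cancelling the (identical) biases gives the factor $1+\|A^n_1\|_2\|A^n_2\|_2$; the convolution, 2D pooling, and fully connected layers contribute the factors $\|A^0\|_2$, $1+\|(A^n)_{|s=2}\|_2$, and $\|W^{N-1}\|_2$, while global pooling contributes $1$. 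Multiplying these per-layer constants across the network produces $a(A^0,\dots,W^{N-1})$ and establishes \eqref{eq: res1 net2}.

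I expect the main obstacle to be the $\ell^\infty$ estimate for the ResNet-D and 2D pooling layers, since this is where the architecture is used essentially. The nonnegativity of the subtracted term, which follows from $A^n_2\ge 0$ together with the nonnegativity of the ReLU output, is precisely what replaces the exponential growth of the general bound in Theorem~\ref{prop: gen stable} by an additive, depth-robust bound; checking that this component-wise inequality survives the outer rectifier and passes correctly to the $\ell^\infty$ norm is the delicate point. The remaining work is bookkeeping: correctly accounting for the three residual stacks (of $m$, $m-1$, and $m-1$ layers) and the two 2D pooling layers so that $c$ and $a$ assemble into the exact forms of \eqref{eq: res1 net constant} and \eqref{eq: res1 net2 constant}.
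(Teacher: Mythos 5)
Your proposal is correct and follows essentially the same route as the paper: an additive per-layer $\ell^\infty$ estimate (using $A^n_2\ge 0$ and the nonnegativity of the inner ReLU output to drop the subtracted term in the ResNet-D and pooling layers, with the convolution and fully connected layers handled by the operator-norm assumptions) telescoped to give \eqref{eq: res1 net}, and a product of per-layer $\ell^2$ Lipschitz constants ($1+\|A^n_1\|_2\|A^n_2\|_2$ for the residual layers, $1+\|A^n\|_2$ for the 2D pooling layers) to give \eqref{eq: res1 net2}. The only cosmetic difference is that you bound the ResNet-D layer via monotonicity and subadditivity of ReLU where the paper argues by a two-case analysis on the sign of the pre-activation; these are equivalent.
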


Note that the bounds on the growth of the features, Equation \eqref{eq: res1 net}, do not directly depend on the filters and weights, since the system (without the biases) decays.  The sensitive bound, Equation \eqref{eq: res1 net2}, depends on the $\ell^2$ norms of the filters and weights, which are controlled by the regularizer (see Table~\ref{tab: regularizer}). For ResNet-S, we can improve the constant in the sensitivity bound as follows.

\begin{theorem}\textbf{(Forward Stability, ResNet-S)} \label{thm: res2 net} \\
Consider a network defined in Figure \ref{fig: network baseline}, where the residual layers are defined in Figure \ref{fig: resnet form2}. Let $x^0$ be the vectorization of the input to the network, {\it i.e.} $x^0\in\R^{h_1w_1d_0}$, and for each filter $K^n$ in Layer $n$ (when applicable), let $A^n$ be the associated matrix. Assume that
\begin{subequations}
\begin{align}
&\|A^0\|_{\ell^2(\R^{h_1w_1d_0})\to\ell^2(\R^{h_1w_1d_1})}\le1, \\
&\|W^{3m+2}\|_{\ell^2(\R^{d_3})\to\ell^2(\R^{C})}\le1,
\end{align} \label{thm: res2 net constraint}
\end{subequations}
and that for each filter $K^n$ in a residual layer:
\begin{align}
\|A^n\|_{\ell^2}\le\sqrt{2}, \label{thm: res2 net constraint2}
\end{align}
where $\|\cdot\|_{\ell^2}$ denotes the induced (matrix) norm. Let $x^n$ be the input of the $n$-th layer and $x^N$ be the output of the network, where $N:=3m+3$. Then the network is $\ell^2$-stable in the sense that:
\begin{align}
\|x^N\|_{\ell^2(\R^C)} \le \|x^0\|_{\ell^2(\R^{h_1w_1d_0})} +  c(b^0,b^1,\dots,b^{N-1}), \label{eq: res2 net}
\end{align}
where $c(b^0,b^1,\dots,b^{N-1})$ is a constant depending on the $\ell^2$ norms of the biases in the network; see Equation \eqref{eq: res2 net constant}. If $y^0\in\R^{h_1w_1d_0}$ is the vectorization of another input, then:
\begin{align}
\|x^N-y^N\|_{\ell^2(\R^C)} \le a(A^0,A^1,\dots,W^{N-1}) \|x^0-y^0\|_{\ell^2(\R^{h_1w_1d_0})}, \label{eq: res2 net2}
\end{align}
where $a(A^0,A^1,\dots,W^{N-1})$ is a constant  independent of the depth of the residual block; see Equation \eqref{eq: res2 net2 constant}.
\end{theorem}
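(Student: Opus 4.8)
The plan is to reduce the global bounds \eqref{eq: res2 net} and \eqref{eq: res2 net2} to per-layer estimates and then chain them along the architecture of Figure \ref{fig: network baseline}. So the first step is to record, for each of the six layer types appearing in Figures \ref{fig: resnet form2}--\ref{fig: network fully connected}, how it transforms the $\ell^2$ norm (for the growth bound) and the $\ell^2$ distance between two propagated inputs (for the sensitivity bound). The crux is the residual layer \eqref{eq: layer resnet2}, and the entire purpose of the symmetric structure $(A^n)^T(A^n\cdot+b^n_1)_+$ is to make this layer non-expansive in the sensitivity sense \emph{independently of depth}; I would establish this as the key lemma.

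Write the ResNet-S layer as $x^{n+1}=\proj_{\R^d_+}\bigl(G(x^n)+b^n_2\bigr)$ with $G(x):=x-(A^n)^T(A^nx+b^n_1)_+$, noting $\proj_{\R^d_+}(\cdot)=(\cdot)_+$ by Remark \ref{rem: relu}. The map $G$ is the unit-step gradient-descent operator for the convex function $\phi(x)=\tfrac12\|(A^nx+b^n_1)_+\|_2^2$, whose gradient is $(A^n)^T(A^nx+b^n_1)_+$; the heart of the matter is that $G$ is non-expansive precisely at the threshold $\|A^n\|_2\le\sqrt2$. I would prove this directly: setting $u=A^nx+b^n_1$ and $v=A^ny+b^n_1$,
\begin{align*}
\|G(x)-G(y)\|_2^2 = \|x-y\|_2^2 - 2\langle u-v,(u)_+-(v)_+\rangle + \|(A^n)^T((u)_+-(v)_+)\|_2^2,
\end{align*}
and then combine the firm non-expansiveness of $\proj_{\R^d_+}=(\cdot)_+$, namely $\langle u-v,(u)_+-(v)_+\rangle\ge\|(u)_+-(v)_+\|_2^2$, with $\|(A^n)^T((u)_+-(v)_+)\|_2^2\le\|A^n\|_2^2\,\|(u)_+-(v)_+\|_2^2\le2\|(u)_+-(v)_+\|_2^2$ from \eqref{thm: res2 net constraint2}; the last two terms cancel and $\|G(x)-G(y)\|_2\le\|x-y\|_2$. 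Since $\proj_{\R^d_+}$ is non-expansive and $b^n_2$ is common to both inputs, this yields $\|x^{n+1}-y^{n+1}\|_2\le\|x^n-y^n\|_2$. For the growth bound I would use the reference point $p:=\proj_{\R^d_+}(G(0)+b^n_2)=(-(A^n)^T\sigma(b^n_1)+b^n_2)_+$ and write $\|x^{n+1}\|_2\le\|x^{n+1}-p\|_2+\|p\|_2\le\|G(x^n)-G(0)\|_2+\|p\|_2\le\|x^n\|_2+\|(-(A^n)^T\sigma(b^n_1)+b^n_2)_+\|_2$, which mirrors the continuous-time estimate of Theorem \ref{prop: form2 stable}.

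The remaining layers are routine. The convolution layer \eqref{eq: layer conv} and the fully connected layer \eqref{eq: layer dense} satisfy $\|x^{n+1}\|_2\le\|x^n\|_2+\|b^n\|_2$ and are contractions in the sensitivity sense by the norm constraints \eqref{thm: res2 net constraint}. The global pooling layer \eqref{eq: layer global pool} is non-expansive in both senses by Propositions \ref{prop: pooling} and \ref{prop: relu}. For the 2D pooling layer \eqref{eq: layer 2d pool}, the subtracted term $((A^n)_{|{s=2}}x^n+b^n)_+$ is nonnegative, so $(u-v)_+\le(u)_+$ componentwise with $u=E(P_2(x^n))$ gives $\|x^{n+1}\|_2\le\|E(P_2(x^n))\|_2=\|P_2(x^n)\|_2\le\|x^n\|_2$ via Propositions \ref{prop: padding} and \ref{prop: pooling}; in particular the pooling biases never enter the growth constant. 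For sensitivity, linearity of $E$, $P_2$, $(A^n)_{|{s=2}}$ together with non-expansiveness of $\proj_{\R^d_+}$ and the $1$-Lipschitzness of $(\cdot)_+$ give $\|x^{n+1}-y^{n+1}\|_2\le(1+\|(A^n)_{|{s=2}}\|_2)\,\|x^n-y^n\|_2$.

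Finally I would chain the per-layer estimates. Summing the additive contributions produces \eqref{eq: res2 net} with $c(b^0,\dots,b^{N-1})=\|b^0\|_2+\sum_{\text{ResNet-S }n}\|(-(A^n)^T\sigma(b^n_1)+b^n_2)_+\|_2+\|b^{3m+2}\|_2$, since the two 2D pooling layers and the global pooling layer contribute nothing. Multiplying the sensitivity factors and using that every convolution, residual, global-pooling, and fully connected layer is a contraction, only the two 2D pooling layers survive, so \eqref{eq: res2 net2} holds with $a=(1+\|(A^{m+1})_{|{s=2}}\|_2)(1+\|(A^{2m+1})_{|{s=2}}\|_2)$, which is independent of the number $m$ of residual layers per block --- exactly the depth-independence claimed. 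The main obstacle is the key lemma: recognizing the symmetric residual term as a gradient-descent map and extracting the sharp threshold $\|A^n\|_2\le\sqrt2$ from the cancellation above; everything after that is bookkeeping over the layer list.
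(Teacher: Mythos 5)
Your proposal is correct, and its overall skeleton --- per-layer $\ell^2$ estimates chained additively for the growth bound and multiplicatively for the sensitivity bound, with all the weight hinging on a lemma that the symmetric residual map is non-expansive at the threshold $\|A^n\|_{\ell^2}\le\sqrt{2}$ --- is exactly the paper's. Where you genuinely diverge is in the proof of that key lemma. The paper (Lemma \ref{lem: res2 net aux}) differentiates $F(x)=x-A^T\sigma(Ax+b)$ in the a.e./measure sense, integrates along the segment from $y$ to $x$ to get $F(x)-F(y)=(I-A^TD(x,y)A)(x-y)$ with $D$ diagonal and $0\le D_{ii}\le 1$, and concludes by bounding the spectrum of the symmetric matrix $I-A^TD A$. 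You instead expand $\|G(x)-G(y)\|_2^2$ and invoke the firm non-expansiveness of $(\cdot)_+=\proj_{\R^d_+}$, i.e. $\langle u-v,(u)_+-(v)_+\rangle\ge\|(u)_+-(v)_+\|_2^2$, so that the cross term absorbs $\|(A^n)^T((u)_+-(v)_+)\|_2^2\le 2\|(u)_+-(v)_+\|_2^2$. This is the standard cocoercivity/averaged-operator argument; it buys you a cleaner derivation that sidesteps the measure-theoretic differentiation of ReLU (and, incidentally, avoids a small imprecision in the paper where an equality is written for what should be an inequality of norms), while the paper's spectral argument makes the sharpness of the $\sqrt{2}$ threshold more visible as an eigenvalue condition $\lambda_{\max}(A^TDA)\le 2$. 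Your growth estimate for the residual layer also uses a different reference point, $p=((-(A^n)^T\sigma(b^n_1)+b^n_2))_+$ rather than the paper's split through $F_n(0)$ and $b^n_2$ separately; this yields the per-layer increment $\|(-(A^n)^T\sigma(b^n_1)+b^n_2)_+\|_2$, which mirrors Theorem \ref{prop: form2 stable} and is bounded above by the paper's $\sqrt{2}\|(b^n_1)_+\|_2+\|b^n_2\|_2$ from \eqref{eq: res2 net constant} --- so to land on a constant depending only on the bias norms, as the statement and \eqref{eq: res2 net constant} advertise, you should add the one extra line applying $\|(z)_+\|_2\le\|z\|_2$, the triangle inequality, and \eqref{thm: res2 net constraint2}. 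The treatment of the remaining layers and the final constant $a=(1+\|(A^{m+1})_{|s=2}\|_2)(1+\|(A^{2m+1})_{|s=2}\|_2)$ agree with \eqref{eq: res2 net2 constant}.
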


Equation \eqref{eq: res2 net2} is useful since it implies that, as long as one constrains the norms of the filters such that $\|A^n\|_{\ell^2}\le\sqrt{2}$, the network will be stable for arbitrarily many residual layers. The proofs of Theorems \ref{thm: res1 net} and \ref{thm: res2 net} are provided in Appendix \ref{sec: proof}.

\begin{remark}
By Equation \eqref{eq: batch norm}, given an input feature $x$, the output $y$ of the following concatenation of operations: 
\begin{align*}
\text{2D convolution} \rightarrow \text{bias} \rightarrow \text{batch normalization}
\end{align*}
is obtained by:
\begin{align}
y := \dfrac{\gamma(Ax+b-\mu)}{\sigma} + \beta, \label{eq: rem batch norm}
\end{align}
where the constants $\mu$ and $\sigma$ depend on the mini-batch containing $x$. For the forward propogation, if we set $\tilde A:=\gamma A/\sigma$ and $\tilde b:= \gamma(b-\mu)/\sigma+\beta$, then Equation \eqref{eq: rem batch norm} can be rewritten as:
\begin{align*}
y := \tilde A x + \tilde b.
\end{align*}
To make sure that $A$ satisfies the constraints of $\tilde A$, for example the correct sign in Equation \eqref{eq: layer resnet2}, we fix the parameters $\gamma$ and $\beta$ in the second batch normalization in the residual layers ($\gamma=1$ and $\beta=0$ for the second batch normalization in Figure \ref{fig: resnet form2}).
\end{remark}

\subsection{Training Parameters in ResNet-D and ResNet-S}
\label{sec: training}

A commonly used loss function for the image classification problem is the cross entropy loss function, which is a metric to measure certain distance between two probability distributions.

\begin{definition}
Let $n\in\N$ and $u,v\in\R^{n}$ be two probability distributions, \textit{i.e.} $0\le u_i,v_i\le 1$ for all $i=1,2,\dots,n$ and $\sum_{i=1}^n u_i=\sum_{i=1}^n v_i=1$. The \textit{cross entropy between $u$ and $v$} is defined as:
\begin{align*}
H(u,v) := -u^T\log(v). 
\end{align*}
\end{definition}

Let $\mathcal{D}$ be a dataset with $C$ classes of images.  Given an input image $x^0\in\mathcal{D}$ to the network defined in Figure \ref{fig: network baseline}, let $y\in\R^C$ be the one-hot encoding label vector associated with $x^0$ ({\it i.e.} $y_i=1$ if $x^0$ is of Class $i$, and $y_i=0$ otherwise), and let $x^N\in\R^C$ be the output of the network. The label vector $y$ can be considered as the true distribution of $x^0$ over the $C$ possible classes. To obtain a predicted distribution of $x^0$ from the network and compare it with $y$, we apply the softmax normalization function to the output $x^N$ of the network, so that the loss to be minimized for each input $x^0\in\mathcal{D}$ is $H(y, S(x^N))$.

\begin{definition} \label{def: softmax}
Let $n\in\N$ and $u\in\R^{n}$. The \textit{softmax normalization of $u$} is a vector in $\R^n$ such that:
\begin{align*}
S(u)_i := \dfrac{\exp(u_i)}{\sum_{j=1}^n\exp(u_j)}, \quad i=1,2,\dots, n.
\end{align*}
\end{definition}

Let $\mathcal{I}$ be the index set for the layers in the network. Given an index $n\in\mathcal{I}$, let $K^n$, $b^n$, $\gamma^n$, $\beta^n$, and $W^n$ be the filter, bias, scale, shift, and weight in the $n$-th layer, respectively (when applicable). To minimize the classification error of the network, we solve the following optimization problem:
\begin{align} 
\min_{\substack{K^n, b^n, \gamma^n, \beta^n, W^n \\ \text{for } n\in\mathcal{I}}}&  \ \ \sum_{x^0\in\mathcal{D}} H(y, S(x^N)) + \sum_{n\in\mathcal{I}}R_n,
\label{eq: opt form1}
\end{align}
where $R_n$ is the regularizer (penalty) for the $n$-th layer, which is added to prevent overfitting the training data and to impose the constraints discussed in Section \ref{sec: discrete analysis}. The regularizer for each layer is listed in Table \ref{tab: regularizer}. To impose the constraints in Theorem \ref{thm: res1 net}, we regularize the $\ell^{1,1}$ norms of the filter and weights. To impose the constraints in Theorem \ref{thm: res2 net}, the Frobenius norms of the filters and weights are regularized.  The element-wise constraints $A^n_2\ge0$ in Equation \eqref{eq: layer resnet1} is imposed directly by adding the indicator function $I_{K^n_2\ge0}$ to the regularizer. 

{\tabulinesep=1.5mm
\begin{table}[b!]
  \caption{Regularizers in the optimization problem defined in Equation \eqref{eq: opt form1}. Definitions of the layers are provided in Equations \eqref{eq: layer resnet1}-\eqref{eq: layer dense}. The indicator function $I_{K^n_2\ge0}$ represents the constraint $K^n_2\ge0$, and $\alpha_n$ are some nonnegative constants.}
  \label{tab: regularizer}
  \centering
  \begin{tabu}{|c|c|c|}  \hline
  Layer(s) 				& ResNet-D 					& ResNet-S	\\ \hline
the convolution layer		& $R_n = \alpha_n\|\vect(K^n)\|_{\ell^{1}}$ 	& $R_n = \dfrac{\alpha_n}{2}\|\vect(K^n)\|^2_{\ell^2}$ \\ \hline
the ResNet layers		& $R_n = \dfrac{\alpha_n}{2}\left(\|\vect(K^n_1)\|^2_{\ell^2}+\|\vect(K^n_2)\|^2_{\ell^2}\right) + I_{K^n_2\ge0}$ & $R_n = \dfrac{\alpha_n}{2}\|\vect(K^n)\|^2_{\ell^2}$ \\ \hline
the 2D pooling layers	& $R_n = \dfrac{\alpha_n}{2}\|\vect(K^n)\|^2_{\ell^2}$	& $R_n = \dfrac{\alpha_n}{2}\|\vect(K^n)\|^2_{\ell^2}$ \\ \hline
the fully connected layer	& $R_n = \alpha_n\|\vect(W^n)\|_{\ell^{1}}$	& $R_n = \dfrac{\alpha_n}{2}\|\vect(W^n)\|^2_{\ell^2}$ \\ \hline
  \end{tabu}
  \end{table}}

\section{Computational Experiments} \label{sec: results}

We test the two proposed networks on the CIFAR-10 and CIFAR-100 datasets. The CIFAR-10 (CIFAR-100, respectively) dataset consists of 60,000 RGB images of size $32\times32$ in 10 classes (100 classes, respectively), with 50,000 training images and 10,000 test images. The data are preprocessed and augmented as in \cite{he2015resnet}.

All filters in the network are of size $3\times3$, and we assume that the input feature to each layer satisfies the periodic boundary condition. For the first convolution layer, the filters are initialized using the uniform scaling algorithm \cite{sussillo2014uniform}; for the residual layers and the 2D pooling layers, the filters are initialized using the variance scaling algorithm \cite{he2015weight}.  The weight in the fully connected layer is initialized with values drawn from a normal distribution $\mathcal{N}(0, \sigma^2)$, where $\sigma=(d_3C)^{-1}$, except that values with magnitude more than $2\sigma$ are discarded and redrawn ({\it i.e.} the truncated normal distribution). Note that in \cite{du2018proof} and the citations within, it was shown that under certain conditions on a neural network, randomly initialized gradient descent applied to the associated optimization problem converges to a globally optimal solution at a linear convergence rate.

The biases in the network are initialized to be zero. The batch normalization parameters $\gamma$ and $\beta$, if trained, are initialized to be 1 and 0, respectively.  The regularization parameter $\alpha_n$ in Table \ref{tab: regularizer} is set to be $10^{-4}$ for all $n$.

The network is trained using the mini-batch gradient descent algorithm, with mini-batch size equal to 128 ({\it i.e.} 391 steps/epoch). The initial learning rate is 0.1, and is divided by 10 after every $T_0$ training steps. The training process is terminated after $T_1$ training steps.  Our focus in the experiments is to examine the stability of the variants of ResNet. We would like to demonstrate that the variants achieve similar accuracy. Thus, we fix the hyperparameters (listed in Table \ref{tab: para optimizer}) and do not tune them to the data. Better results can be achieved with tuning. The network is validated on the test images after every 500 training steps. 

\begin{table}[b!]
  \caption{Hyperparameters associated with the optimizer. The learning rate is divided by 10 after every $T_0$ training steps. The training process is terminated after $T_1$ training steps.}
  \label{tab: para optimizer}
  \centering
  \begin{tabular}{|c|c|c|c|}  \hline
   				& Dataset 		& Learning rate decay steps $T_0$	& Number of training steps $T_1$	 	\\ \hline
   \multirow{ 2}{*}{ResNet-D} 	& CIFAR-10 		& 24K (about 62 epochs)	& 70K (about 180 epochs)		 \\ \cline{2-4}
  					& CIFAR-100 		& 24K (about 62 epochs)		& 70K (about 180 epochs)		  \\ \hline 
  \multirow{ 2}{*}{ResNet-S} 	& CIFAR-10 	& 32K (about 82 epochs)		& 93.5K (about 240 epochs)	 \\ \cline{2-4}
  					& CIFAR-100	& 24K (about 62 epochs)		& 70K (about 180 epochs)		 \\ \hline 
  \end{tabular}
  \end{table}

In Table \ref{tab: parameter}, we list the depth of the network and the number of trainable parameters in the optimization problem with a few different values of $m$ (where $m$ is the size of the first residual block). Here, the depth of a network is considered to be the number of (unique) filters and weights in the network; for example, each ResNet-D layer (Figure \ref{fig: resnet form1}) contains two filters, and each ResNet-S layer (Figure \ref{fig: resnet form2}) contains only one filter.

\begin{table}[b!]
  \caption{Depth of the network and number of trainable parameters in the optimization problem. The depth counts the number of filters and weights in the network. The trainable parameters in Equation \eqref{eq: opt form1} include all elements in the filters, weights, and biases, and the parameters in all batch normalization (if trained).}
  \label{tab: parameter}
  \centering

\begin{minipage}[c]{0.5\linewidth}
  \centering
   \subfloat[ResNet-D.\label{tab: form1 depth}]{
  \begin{tabular}{|c|c|c|c|}  \hline
\multirow{2}{*}{$m$} & \multirow{2}{*}{Depth} 	& \multicolumn{2}{c|}{Trainable parameters}  \\ \cline{3-4}
& & CIFAR-10 & CIFAR 100 \\ \hline
3 & 18		& 0.223M	& 0.229M \\ \hline
6 & 36		& 0.514M	& 0.520M \\ \hline
9 & 54		& 0.805M	& 0.810M \\ \hline
12 & 72		& 1.100M	& 1.101M \\ \hline
  \end{tabular}}
   \end{minipage}%
\hfill
\begin{minipage}[c]{0.5\linewidth}
  \centering
  \subfloat[ResNet-S.\label{tab: form2 depth}]{
  \begin{tabular}{|c|c|c|c|}  \hline
\multirow{2}{*}{$m$} & \multirow{2}{*}{Depth} 	& \multicolumn{2}{c|}{Trainable parameters}  \\ \cline{3-4}
& & CIFAR-10 & CIFAR 100 \\ \hline
3 & 11		& 0.124M	& 0.130M \\ \hline
6 & 20		& 0.270M	& 0.275M \\ \hline
9 & 29		& 0.416M & 0.421M \\ \hline
12 & 38		& 0.561M	& 0.567M \\ \hline
  \end{tabular}}
   \end{minipage}%
  \end{table}

\subsection{Effect of Depth on Test Accuracy} \label{sec: results depth}

We train the network with different depths and analyze the effect of the depth on test accuracy.  The resulting test accuracies over the training steps are shown in Figure \ref{fig: result depth}. In particular, we calculate the average of the test accuracy in the last 5,000 training steps and list the results in Table \ref{tab: result depth}.  It can be seen from Figure \ref{fig: result depth} and Table \ref{fig: result depth} that the test accuracy of both ResNet-D and ResNet-S increases as the network goes deeper (without any hyperparameter tuning). This result is consistent with the observation in \cite{he2015resnet} that a deeper ResNet tends to have higher test accuracy. The monotone improvement of accuracy in depth is likely related to the well-posedness of the optimal control problem (Equation \eqref{eq: sweeping process}).

\begin{figure}[b!]
  \centering
  \begin{minipage}[c]{0.5\linewidth}
\centering
  \subfloat[ResNet-D on CIFAR-10. \label{fig: train1 cifar10}]{
  \includegraphics[scale=0.35]{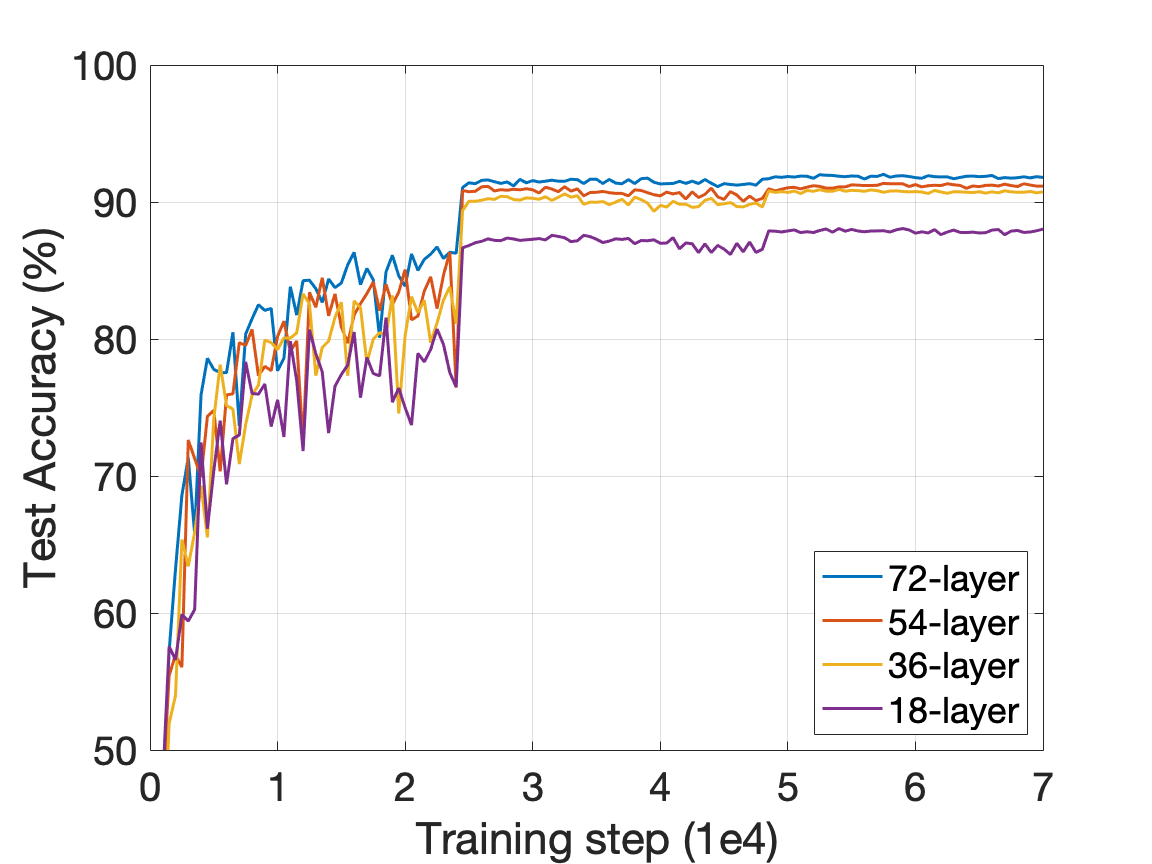}}
  
  \subfloat[ResNet-D on CIFAR-100. \label{fig: train1 cifar100}]{
    \includegraphics[scale=0.35]{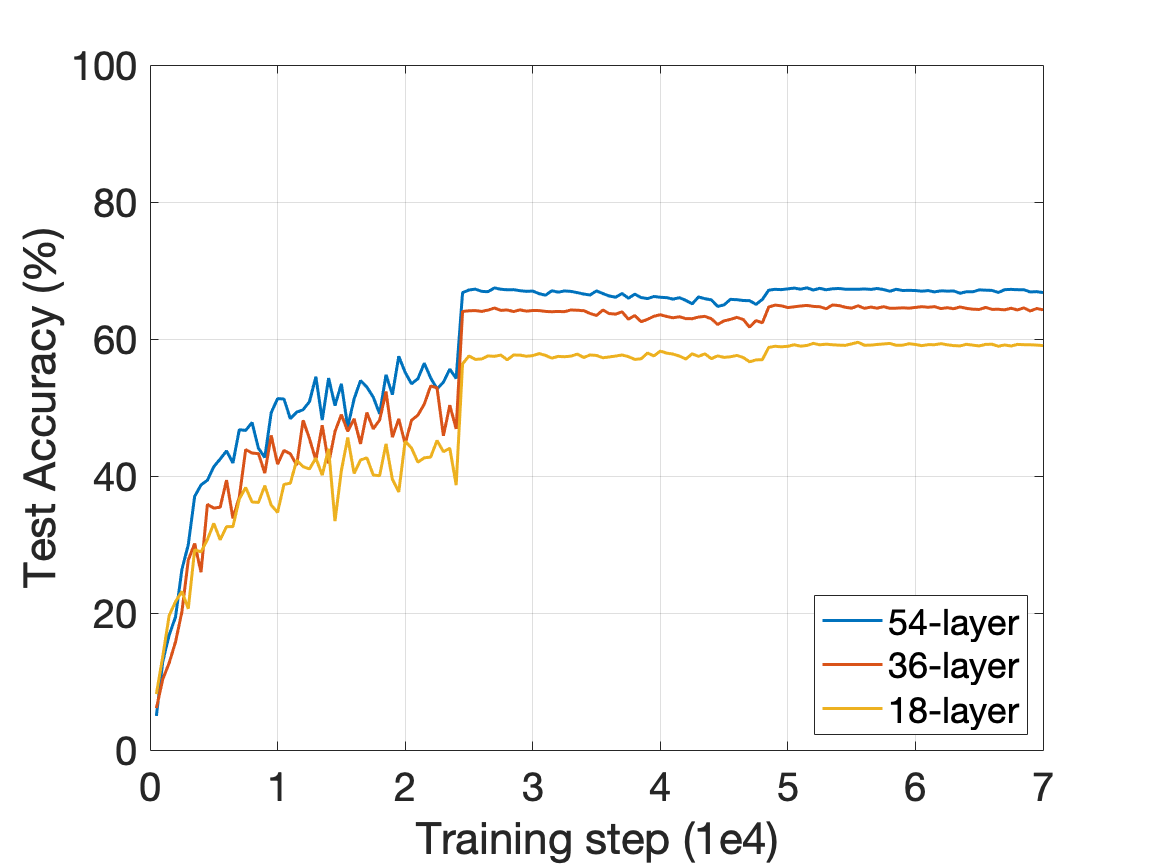}}
 \end{minipage}%
\hfill
\begin{minipage}[c]{0.5\linewidth}
  \centering
        \subfloat[ResNet-S on CIFAR-10. \label{fig: train2 cifar10}]{
   \includegraphics[scale=0.35]{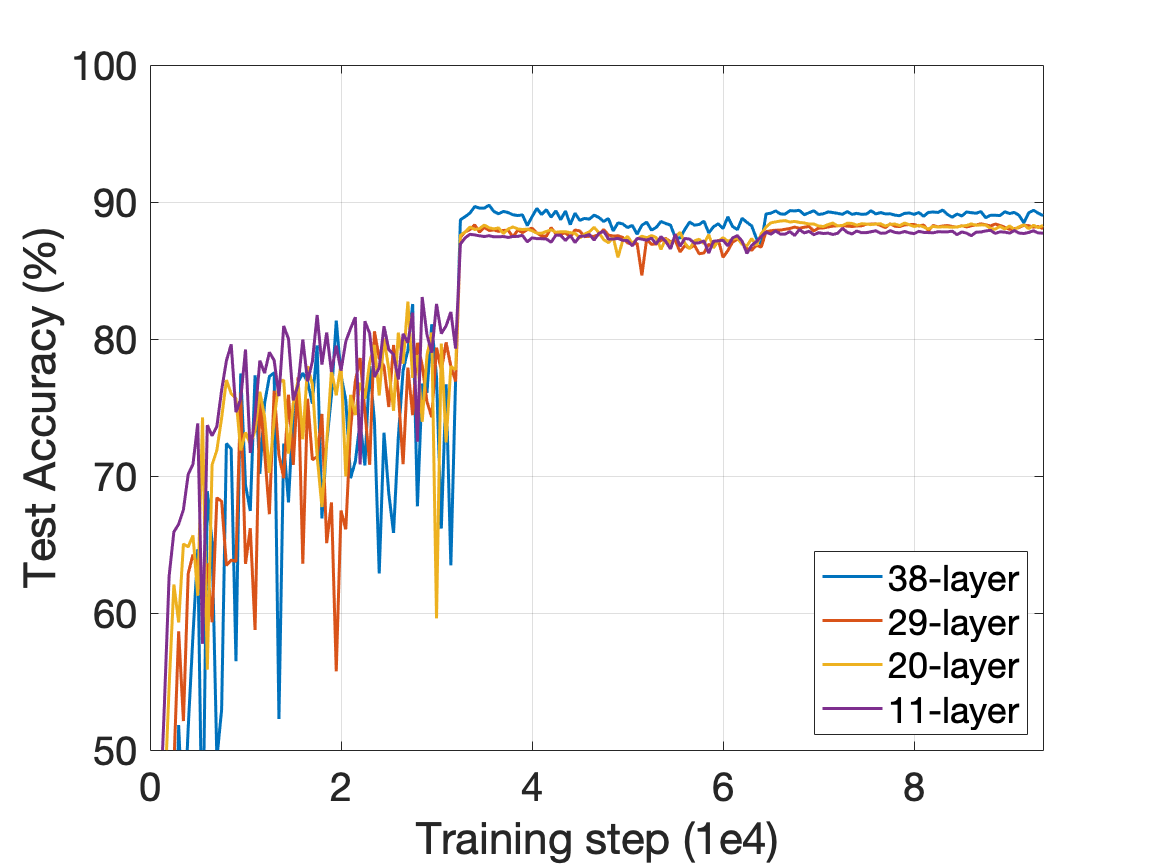} }
   
  \subfloat[ResNet-S on CIFAR-100. \label{fig: train2 cifar100}]{
  \includegraphics[scale=0.35]{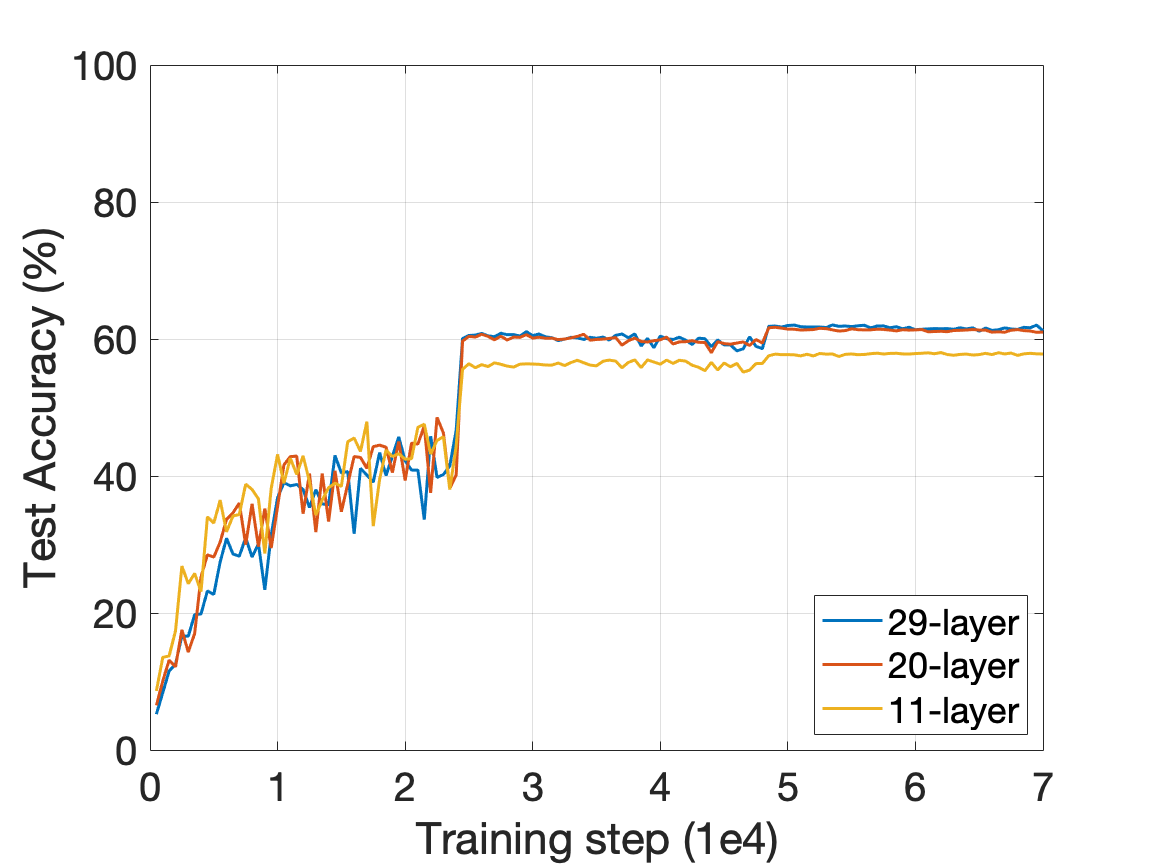}}
   \end{minipage}%
    \caption{Test accuracy of ResNet-D and ResNet-S on CIFAR-10/100. The test accuracy of both ResNet-D and ResNet-S tends to increase as the depth of the network increases.}
  \label{fig: result depth}
\end{figure}

\begin{table}[b!]
  \caption{Average of the test accuracy in the last 5,000 training steps of ResNet-D and ResNet-S on CIFAR-10/100.}
  \label{tab: result depth}
  \centering
  
  \begin{minipage}[c]{0.5\linewidth}
\centering
   \subfloat[ResNet-D on CIFAR-10.\label{tab: form1 cifar10}]{
  \begin{tabular}{|c|c|}  \hline
Depth 	& Test accuracy (\%)  \\ \hline
18		& 87.8509		\\ \hline
36		& 90.7245		\\ \hline
54		& 91.2045		\\ \hline
72		& 91.8091		\\ \hline
  \end{tabular}}
  
  \subfloat[ResNet-D on CIFAR-100.\label{tab: form1 cifar100}]{
  \begin{tabular}{|c|c|}  \hline
Depth 	& Test accuracy (\%)	 \\ \hline
18		& 59.1073		\\ \hline
36		& 64.3582		\\ \hline
54		& 67.0527		\\ \hline
  \end{tabular}}
\end{minipage}%
\hfill
\begin{minipage}[c]{0.5\linewidth}
  \centering
  \subfloat[ResNet-S on CIFAR-10.\label{tab: form2 cifar10}]{
  \begin{tabular}{|c|c|}  \hline
Depth 	& Test accuracy (\%)	 \\ \hline
11		& 87.7782		\\ \hline
20		& 88.1655		\\ \hline
29		& 88.1845		\\ \hline
38		& 89.0927		\\ \hline
  \end{tabular}} 

   \subfloat[ResNet-S on CIFAR-100.\label{tab: form2 cifar100}]{
  \begin{tabular}{|c|c|}  \hline
Depth 	& Test accuracy (\%)	 \\ \hline
11		& 57.8282		\\ \hline
20		& 61.1509		\\ \hline
29		& 61.4891		\\ \hline
  \end{tabular}}
  \end{minipage}

\end{table}

\subsection{Effect of Perturbation on Test Accuracy}

We evaluate the trained networks on images with different types of perturbation. Given a test image $x$, its corrupted image is obtained via $x \mapsto x+\eta$,
where two types of the additive noise $\eta$ are considered:
\begin{subequations}
\begin{align}
\text{unstructured:} \quad &\eta\sim\mathcal{N}(0,\sigma^2), \\
\text{structured:} \quad &\eta=\epsilon x_0, \label{eq: noise structured}
\end{align} \label{eq: noise}
\end{subequations}
where $x_0$ is a fixed image chosen from the test images.

In Table \ref{tab: result noise}, we list the test accuracy of ResNet-D and ResNet-S on perturbed test images, which are evaluated using the learned parameters of the network from the last training step. Note that the networks are trained on the uncorrupted training set. The structured noise $x_0$ used in the experiments is shown in Figure \ref{fig: test image} and is added to the test images. Different values of $\sigma$ and $\epsilon$ are used to vary the noise level of $\eta$. One can observe from Table \ref{tab: noise res1} that when the noise level increases, the test accuracy of ResNet-D decreases. For low levels of perturbation, the accuracy remains high. We observe that deeper networks tend to have higher test accuracies after corruption of the test images.  Similar conclusion can be drawn from Table \ref{tab: noise res2}, in particular, that a deeper ResNet-S seems to be more robust to corrupted test data.

\begin{table}[b!]
  \caption{Test accuracy (\%) with corrupted test images of ResNet-D and ResNet-S on CIFAR-10/100. Each network is trained on the uncorrupted training images of the dataset, and is evaluated using the learned parameters from the last training step on corrupted test images which are obtained via Equation \eqref{eq: noise}.}
  \label{tab: result noise}
  \centering
  \subfloat[ResNet-D. \label{tab: noise res1}]{
  \begin{tabular}{|c|c|c|c|c|c|c|}  \hline
  \multirow{ 2}{*}{Dataset} & \multirow{ 2}{*}{Depth} &    With no noise &    \multicolumn{2}{c|}{With unstructured noise} &    \multicolumn{2}{c|}{With structure noise} \\ \cline{3-7}
&  	& $\sigma=\epsilon=0$	& $\sigma=0.02$	& $\sigma=0.05$ & $\epsilon=0.25$	& $\epsilon=0.75$	\\ \hline
\multirow{ 4}{*}{CIFAR-10} 	& 18	& 88.02	& 83.52	& 50.55 & 83.40	& 41.24\\ \cline{2-7}
&  36	& 90.74	& 85.48	& 56.70 & 84.54	& 35.18\\ \cline{2-7}
&  54	& 91.16	& 86.78	& 63.77 & 85.78	& 32.07\\ \cline{2-7}
& 72	& 91.79	& 86.95	& 61.73 & 86.95	& 40.25 \\ \hline
\multirow{ 3}{*}{CIFAR-100}	& 18	& 59.04	& 46.15	& 16.01 & 55.68	& 29.92 \\ \cline{2-7}
& 36	& 64.27	& 51.36	& 24.08 & 60.53	& 32.93 \\ \cline{2-7}
& 54	& 66.78	& 52.82	& 23.08 & 62.55	& 33.17 \\ \hline
  \end{tabular}
  }

    \subfloat[ResNet-S. \label{tab: noise res2}]{
  \begin{tabular}{|c|c|c|c|c|c|c|}  \hline
  \multirow{ 2}{*}{Dataset} & \multirow{ 2}{*}{Depth} &    With no noise &    \multicolumn{2}{c|}{With unstructured noise} &    \multicolumn{2}{c|}{With structure noise} \\ \cline{3-7}
  &  	& $\sigma=\epsilon=0$	& $\sigma=0.02$	& $\sigma=0.05$ & $\epsilon=0.25$	& $\epsilon=0.75$	\\ \hline
\multirow{ 4}{*}{CIFAR-10} 	& 11  & 87.73	& 82.43	& 52.44 & 82.24	& 33.66\\ \cline{2-7}
& 20	& 88.29	& 83.22	& 56.51  & 82.94	& 34.93\\ \cline{2-7}
& 29	& 88.05	& 83.68	& 55.82  & 83.50	& 36.80\\ \cline{2-7}
& 38	& 89.00	& 85.67	& 59.86 & 83.58	& 34.13\\ \hline
\multirow{ 3}{*}{CIFAR-100}	& 11	& 57.81	& 45.65	& 21.57	& 54.90	& 34.78   \\ \cline{2-7}
& 20	& 61.01	& 47.18	& 21.09  & 57.93	& 37.17\\ \cline{2-7}
& 29	& 61.20	& 54.72	& 31.66 & 58.15	& 35.63\\ \hline
  \end{tabular}
  }
  \end{table}

\begin{figure}[b!]
  \centering
	\subfloat[CIFAR-10. \label{fig: test image 10}]{\includegraphics[scale=0.25]{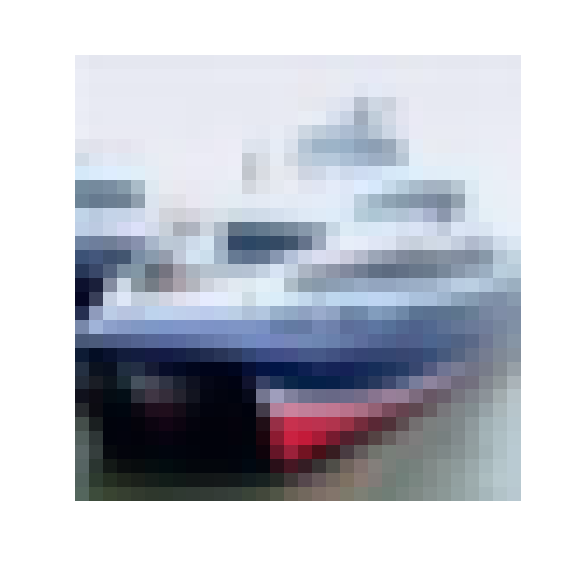}} \quad\quad\quad\quad
	\subfloat[CIFAR-100. \label{fig: test image 100}]{\includegraphics[scale=0.25]{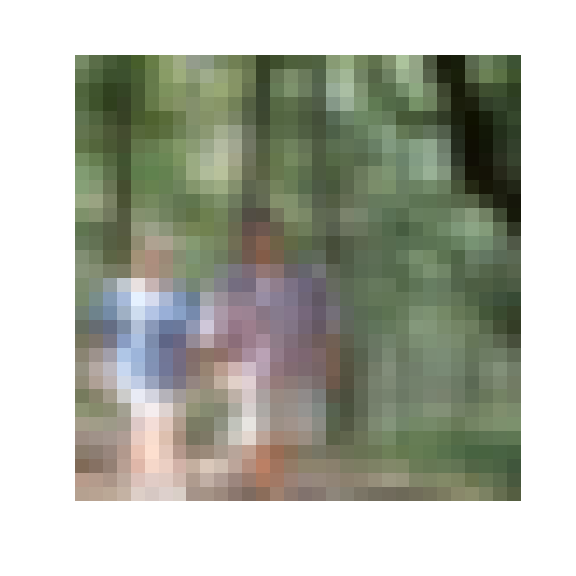}}
	\caption{The structured noise $x_0$ used in the experiments in Table \ref{tab: result noise}. The use of $x_0$ is defined in Equation \eqref{eq: noise structured}. (a) A test image in CIFAR-10 with label ``ship". (b) A test image in CIFAR-100 with label ``forest".}
  \label{fig: test image}
\end{figure}

We illustrate the results in Figures \ref{fig: res1 noise} and \ref{fig: res2 noise} using the trained 36-layer ResNet-D and 20-layer ResNet-S on three test images in CIFAR-10. The test images are labeled as ``bird", ``dog", and ``horse", respectively. In Figures \ref{fig: res1 noise} and \ref{fig: res2 noise}, three test images and the corresponding corrupted images are shown, including the corresponding probability distributions predicted by the trained networks. One observation is that the probability of predicting the true label correctly tends to decrease as the corruption level increases.  For example, consider the case where ResNet-S is applied to the ``horse" image (the last two columns in Figure \ref{fig: res2 noise}). Figure \ref{fig: res2 noise0} shows that the probability that the noise-free image $x$ is a ``horse" is 0.9985. When random noise $\eta$ is added to $x$, {\it i.e.} $x\mapsto x+\eta$ with $\eta\sim\mathcal{N}(0,\sigma^2)$, the probability of correctly predicting $x+\eta$ to be a ``horse" drops to 0.8750 and 0.7940 (for $\sigma$ equal to 0.02 and 0.05, respectively). This is illustrated in Figure \ref{fig: res2 noiseN}.

When the corruption level increases, the label with the second highest predicted probability may change. Take for example ResNet-S on the ``dog" image (the middle two columns in Figure \ref{fig: res2 noise}). Let $x$ be the original ``dog" image. When random noise $\eta$ is added to $x$, the second prediction made by the network changes from a ``cat" (with probability 0.1410) to a ``frog" (with probability 0.1717)  (as $\sigma$ increases from 0.02 to 0.05). This is within the stability bounds from Section~\ref{sec: discrete}. When we perturb a test image by another image (Figure \ref{fig: test image 10}), we observe similar stability results under this structured form of corruption. This is illustrated on the ``bird" image in the first two columns of Figure \ref{fig: res1 noise}.

\begin{figure}[b!]
\centering
\subfloat[Noise-free test images $x$. \label{fig: res1 noise0}]{
\includegraphics[scale=0.1]{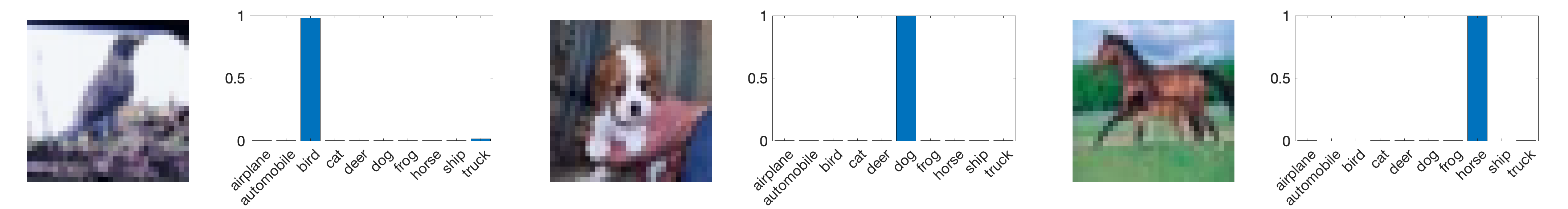} }

\subfloat[$x$ with unstructured noise (first row: $\sigma=0.02$; second row: $\sigma=0.05$). \label{fig: res1 noiseN}]{
\includegraphics[scale=0.1]{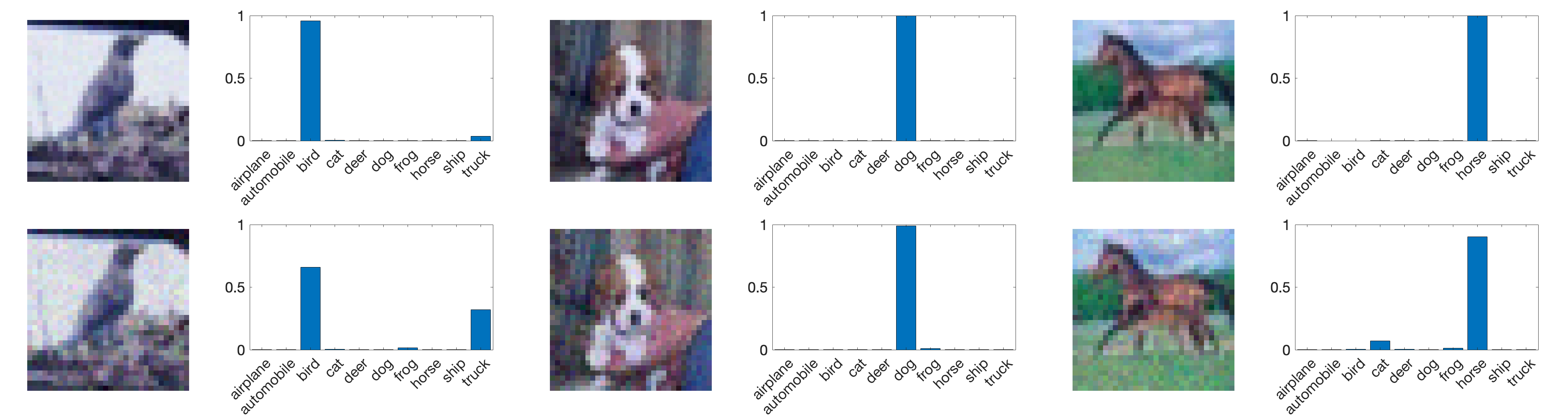} }

\subfloat[$x$ with structured noise (first row: $\epsilon=0.25$; second row: $\epsilon=0.75$). \label{fig: res1 noiseB}]{
\includegraphics[scale=0.1]{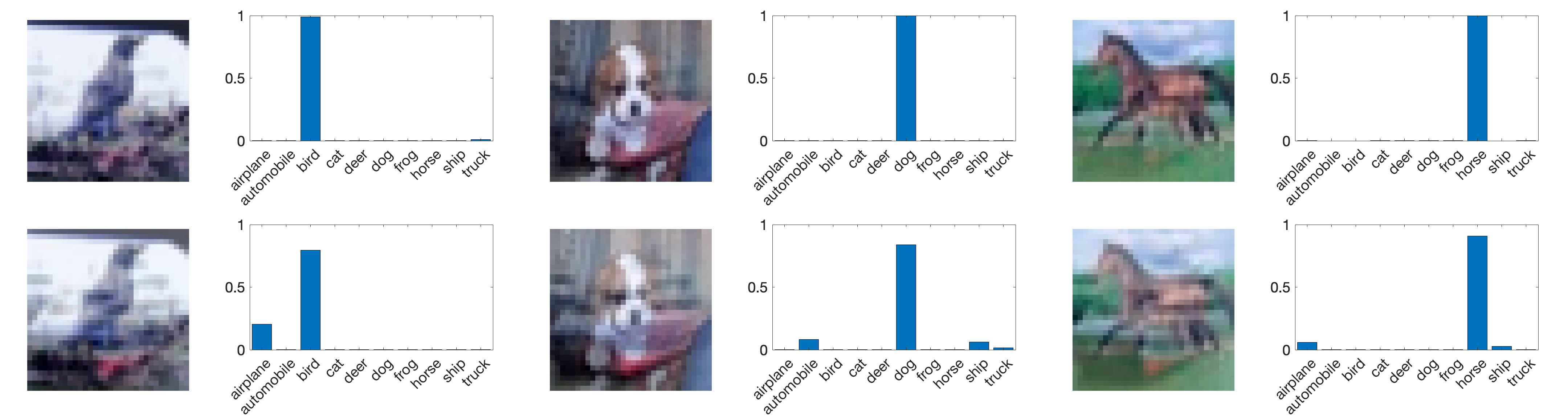} }
 \caption{The trained 36-layer ResNet-D on corrupted test images from CIFAR-10. (a) Three noise-free test images $x$ and  the predicted probability distributions, (b) $x$ with unstructured noise ({\it i.e.} $x+\eta$ with $\eta\sim\mathcal{N}(0,\sigma^2)$) and  the predicted probability distributions, (c) $x$ with structured noise ({\it i.e.} $x+\epsilon x_0$ with $x_0$ shown in Figure \ref{fig: test image 10}) and  the predicted probability distributions. }
\label{fig: res1 noise}
\end{figure}

\begin{figure}[b!]
\centering
\subfloat[Noise-free test images $x$. \label{fig: res2 noise0}]{
\includegraphics[scale=0.1]{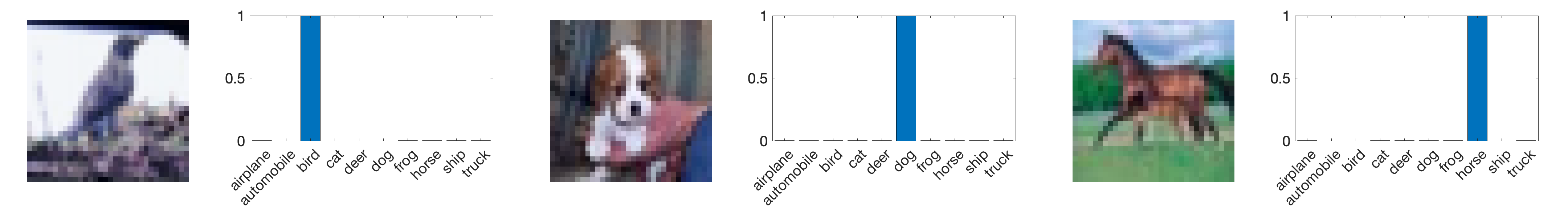} }

\subfloat[$x$ with unstructured noise (first row: $\sigma=0.02$; second row: $\sigma=0.05$). \label{fig: res2 noiseN}]{
\includegraphics[scale=0.1]{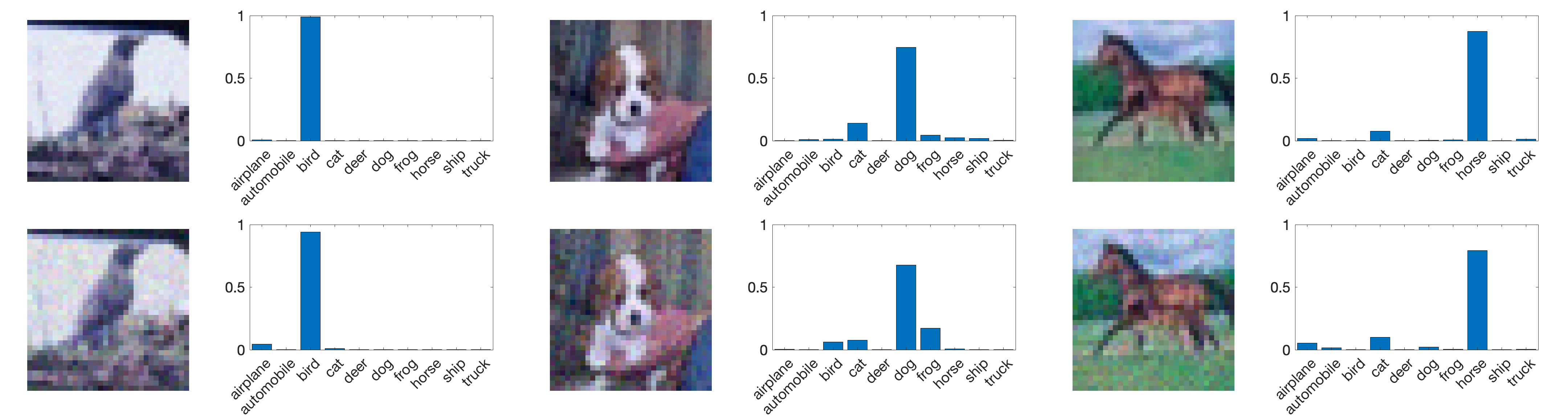} }

\subfloat[$x$ with structured noise (first row: $\epsilon=0.25$; second row: $\epsilon=0.75$). \label{fig: res2 noiseB}]{
\includegraphics[scale=0.1]{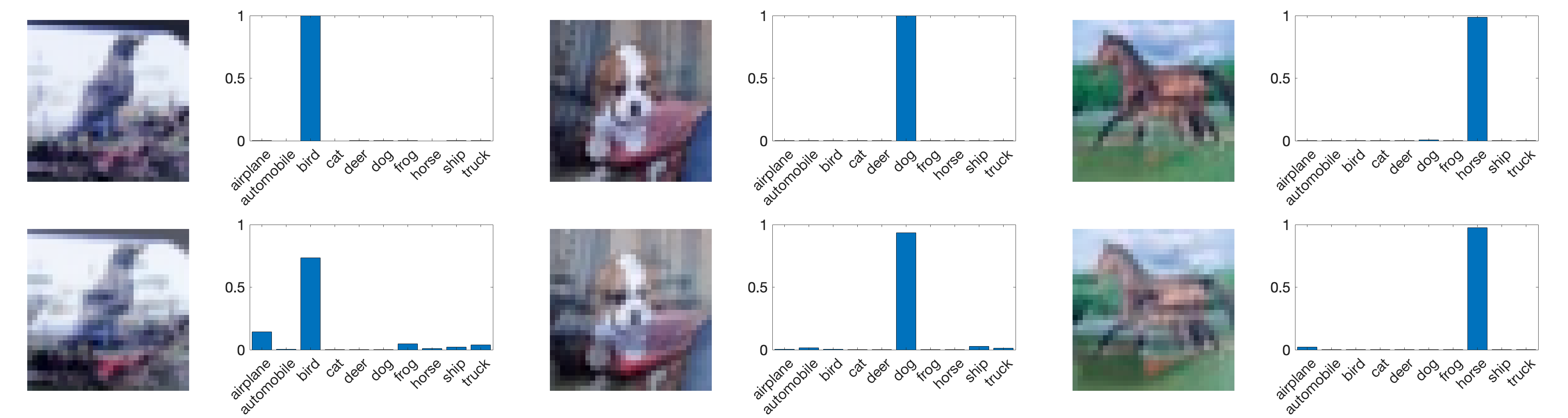} }
 \caption{The trained 20-layer ResNet-S on corrupted test images from CIFAR-10. (a) Three noise-free test images $x$ and  the predicted probability distributions, (b) $x$ with unstructured noise ({\it i.e.} $x+\eta$ with $\eta\sim\mathcal{N}(0,\sigma^2)$) and  the predicted probability distributions, (c) $x$ with structured noise ({\it i.e.} $x+\epsilon x_0$ with $x_0$ shown in Figure \ref{fig: test image 10}) and  the predicted probability distributions. }
\label{fig: res2 noise}
\end{figure}

Equations \eqref{eq: res1 net2} and \eqref{eq: res2 net2} show that perturbation in the output depends on the perturbation in the input and the weight matrices in the network. In theory, if the norm of the additive noise to the input increases, perturbation in the output may be less controllable. Table \ref{tab: result noise} and Figures \ref{fig: res1 noise} and \ref{fig: res2 noise} indicate that changes in the output may affect test accuracy.

\section{Discussion}
We have provided a relationship between ResNet (or other networks with skip-connections) to an optimal control problem with differential inclusions and used this connection to gain some insights to the behavior of the network. We have shown that the system is well-posed and have provided growth bounds on the features. The continuous-time analysis is helpful in interpreting the success of networks with skip-connections. For example, since the forward flow of well-posed dynamical systems will have regular paths between inputs and outputs, we should expect a similar result for very deep networks. This is likely a reason why DNNs with skip-connections generalize well, since similar inputs should follow similar paths and  the skip-connections make the paths more regular. 

In practice, ResNet and other DNNs have additional layers which are not currently captured by the optimal control formulation (for example, normalization and pooling). In this setting, we provided stability bounds for the entire network as a function of each of the layers' learnable parameters. In some cases, the network is stable regardless of its depth due to structural constraints or regularization. The constraints may also smooth the energy landscape so that the minimizers are flatter, which will be considered in future work.

It is also worth noting that ResNet and other DNNs are often ``stabilized" by other operations. From experiments, one can observe that batch normalization has the additional benefit of controlling the norms of the features during forward propagation. Without batch normalization and without strong enough regularization, the features will grow unboundedly in the residual blocks. It would be interesting to analyze the role of different stabilizers in the network on the network's ability to generalize to new data.

\section*{Acknowledgement}
The authors acknowledge the support of AFOSR, FA9550-17-1-0125 and the support of NSF CAREER grant $\#1752116$. 

\appendix

\section{Proofs of the Main Results} \label{sec: proof}

We provide the proofs of the results presented in this work.

\begin{proof}[\bf Proof of Proposition \ref{prop: pooling}]
Let $x$ be a feature in $\R^{h\times w\times d}$ and define $y$ by Equation \eqref{eq: def 2d pooling}. Observe from Equations \eqref{eq: def conv} and \eqref{eq: def 2d pooling} that:
\begin{align*}
(y_{i,j,k})^2&\le\begin{cases}
(x_{2i-1,2j-1,k})^2 + (x_{2i-1,2j,k})^2 + (x_{2i,2j-1,k})^2 + (x_{2i,2j,k})^2, &\,\text{if } i\le h/2 \text{ and }j\le w/2, \\
(x_{2i-1,2j-1,k})^2 + (x_{2i-1,2j,k})^2 , &\,\text{if } i> h/2 \text{ and }j\le w/2, \\
(x_{2i-1,2j-1,k})^2 + (x_{2i,2j-1,k})^2, &\,\text{if } i\le h/2 \text{ and }j> w/2, \\
(x_{2i-1,2j-1,k})^2, &\,\text{if } i> h/2 \text{ and }j> w/2,
\end{cases} \\
|y_{i,j,k}|&\le\begin{cases}
\max\left\{|x_{2i-1,2j-1,k}|,  |x_{2i-1,2j,k}|,  |x_{2i,2j-1,k}|,  |x_{2i,2j,k}|\right\}, &\quad\text{if } i\le h/2 \text{ and }j\le w/2, \\
\max\left\{|x_{2i-1,2j-1,k}|, |x_{2i-1,2j,k}| \right\}, &\quad\text{if } i> h/2 \text{ and }j\le w/2, \\
\max\left\{|x_{2i-1,2j-1,k}| + |x_{2i,2j-1,k}| \right\}, &\quad\text{if } i\le h/2 \text{ and }j> w/2, \\
|x_{2i-1,2j-1,k}|, &\quad\text{if } i> h/2 \text{ and }j> w/2,
\end{cases}
\end{align*}
for all $i=1,2,\dots,h$, $j=1,2,\dots,w$, and $k=1,2,\dots,d$, and thus:
\begin{subequations}
\begin{align}
\|y\|_F^2 &\le \|x\|_F^2, \\
\max_{i\in[h_1],\,j\in[w_1],\,k\in[d]}|y_{i,j,k}| &\le \max_{i\in[h],\,j\in[w],\,k\in[d]}|x_{i,j,k}|,
\end{align} \label{eq: prop pooling proof}
\end{subequations}
where $h_1:=\ceil*{h/2}$ and $w_1=\ceil*{w/2}$. Thus, by Equations \eqref{eq: prop vec}, \eqref{eq: def 2d pooling vec} and \eqref{eq: prop pooling proof}, we have:
\begin{align*}
\|P_2(\vect(x))\|_{\ell^2(\R^{h_1w_1d})}^2 &= \|\vect(y)\|_{\ell^2(\R^{h_1w_1d})}^2 = \|y\|_F^2 \le \|x\|_F^2 = \|\vect(x)\|_{\ell^2(\R^{hwd})}^2, \\
\|P_2(\vect(x))\|_{\ell^\infty(\R^{h_1w_1d})} &= \|\vect(y)\|_{\ell^\infty(\R^{h_1w_1d})} = \max_{i\in[h_1],\,j\in[w_1],\,k\in[d]}|y_{i,j,k}| \\
&\le \max_{i\in[h],\,j\in[w],\,k\in[d]}|x_{i,j,k}| = \|\vect(x)\|_{\ell^\infty(\R^{hwd})}.
\end{align*} 
This proves Equation \eqref{eq: prop 2d pooling}. Equation \eqref{eq: prop global pooling} can be derived using a similar argument as above.
\end{proof}

\begin{proof}[\bf Proof of Theorem \ref{thm: inclusion resnet}]
Take $(H, \|\cdot\|)=\left(\R^d,\|\cdot\|_{\ell^2(\R^d)}\right)$,  $I=[0,\infty)$, 
\begin{align}
F(t,x(t)) := A_2(t)\,\sigma(A_1(t)x(t)+b_1(t)) - b_2(t), \label{eq: inclusion explicit}
\end{align}
and $C$ be the multi-valued mapping such that $C(t)=\R^d_+$ for all $t\in[0,T]$. We will prove that conditions (i)-(iv) in Theorem \ref{thm: inclusion} are satisfied. Without ambiguity, we write $\|\cdot\|_2$ for $\|\cdot\|_{\ell^2(\R^d)}$.

\begin{enumerate}[label*=(\roman*)]
\item For each $t\in [0,T]$, it is clear that $C(t)$ is a nonempty closed subset of $H$, and by \cite{poliquin1996prox}, $C(t)$ is $r$-prox-regular.
\item Setting $v(t)=0$ for all $t\in [0,T]$ yields Equation \eqref{eq: inclusion cond2}.
\item Let $x,y:I\to \R^d$. By Equation \eqref{eq: inclusion explicit} and the assumptions that $\|A_1(t)\|_2 \, \|A_2(t)\|_2\le c$ for all $t>0$ and that $\sigma$ is contractive, we have:
\begin{align*}
\| F(t,x(t))-F(t,y(t)) \|_2 &= \|A_2(t)\ \sigma(A_1(t)x(t)+b_1(t))  - A_2(t)\ \sigma(A_1(t)y(t)+b_1(t)) \|_2 \\
&\le \|A_2(t)\|_2\ \|\sigma(A_1(t)x(t)+b_1(t))-\sigma(A_1(t)y(t)+b_1(t))\|_2 \\
&\le \|A_2(t)\|_2 \ \|A_1(t)x(t)-A_1(t)y(t)\|_2 \\
&\le \|A_2(t)\|_2 \ \|A_1(t)\|_{2} \ \|x(t)-y(t)\|_2 \\
&\le c \|x(t)-y(t)\|_2.
\end{align*}
\item Let $x:[0,T]\to \R^d$. A similar derivation as above yields:
\begin{align*}
\| F(t,x(t))\|_2 &= \|A_2(t)\ \sigma(A_1(t)x(t)+b_1(t)) - b_2(t)\|_2 \\
&\le \|A_2(t)\|_2 \ \|\sigma(A_1(t)x(t)+b_1(t)) \|_2 + \|b_2(t)\|_2 \\
&\le \|A_2(t)\|_2 \ \|A_1(t)x(t)+b_1(t) \|_2 + \|b_2(t)\|_2 \\
&\le \|A_2(t)\|_2\  \left(\|A_1(t)\|_{2}\|x(t)\|_2+\|b_1(t)\|_2\right) + \|b_2(t)\|_2 \\
&\le \beta(t)\left(1+\|x(t)\|_2\right),
\end{align*}
where 
\begin{align*}
\beta(t)&:= \max\left\{c,\|A_2(t)\|_2\,\|b_1(t)\|_2 + \|b_2(t)\|_2\right\}.
\end{align*} 
\end{enumerate}
Therefore, by Theorem \ref{thm: inclusion}, there exists a unique absolutely continuous solution $x$ to Equation~\eqref{eq: sweeping process} for almost every $x_0\in\R^d_+$. In particular, by Remark \ref{rem: inclusion}, the solution $x$ satisfies that $x(t)\in\R^d_+$ for all $t>0$.
\end{proof}

\begin{proof} [\bf Proof of Theorem \ref{prop: gen stable}]
Fix $t>0$. Taking the inner product of Equation~\eqref{eq: general form continuous} with $x$ yields:
\begin{align*}
x(t)^T\dfrac{\text{d}}{\text{dt}}x(t) +  x(t)^TA_2(t)\,\sigma(A_1(t)x(t)+b_1(t)) - x(t)^Tb_2(t) = x(t)^Tp_x(t)
\end{align*}
for some $p_x(t) \in  -\partial{I}_{\R^{d}_+}(x)$. Note that $0\in\R^{d}_+$ and $0 \in \partial{I}_{\R^{d}_+}(x)$. Thus, by monotonicity of the sub-differential, we have:
\begin{align*}
x(t)^Tp_x(t) = (x(t)-0)^T(p_x(t)-0)\leq 0,
\end{align*}
which implies that:
\begin{align*}
x(t)^T\dfrac{\text{d}}{\text{dt}}x(t) +  x(t)^TA_2(t)\,\sigma(A_1(t)x(t)+b_1(t)) - x(t)^Tb_2(t) \leq 0.
\end{align*}
Therefore, after re-arranging terms, we have:
\begin{align*}
\dfrac{\text{d}}{\text{dt}} \left(\dfrac{\|x(t)\|_2^2}{2}\right) &= x(t)^T\dfrac{\text{d}}{\text{dt}}x(t) \le - x(t)^TA_2(t)\,\sigma(A_1(t)x(t)+b_1(t)) + x(t)^Tb_2(t).
\end{align*}
By Theorem \ref{thm: inclusion resnet}, $x(t) \in \R_+^d$ for a.e. $t>0$, and thus the inner product $x(t)^Tb_2(t)$ is bounded above by the positive part of $b_2(t)$; that is,
\begin{align*}
x(t)^Tb_2(t) \le x(t)^T\left(b_2(t)\right)_+ \le \|x(t)\|_2 \, \|(b_2(t))_+\|_2.
\end{align*}
Therefore, by the assumption that $\sigma$ is contractive and $\sigma(0)=0$, we have:
\begin{align*}
\dfrac{\text{d}}{\text{dt}} \left(\dfrac{\|x(t)\|_2^2}{2}\right) &\le \|A_2(t)\|_2 \  \|x(t)\|_2\  \|\sigma(A_1(t)x(t)+b_1(t))\|_2 +  \|x(t)\|_2 \  \|(b_2(t))_+\|_2\\
  &\le \|A_2(t)\|_2 \  \|x(t)\|_2\  \|A_1(t)x(t)+b_1(t)\|_2 +  \|x(t)\|_2 \  \|(b_2(t))_+\|_2\\
  &\le \|A_1(t)\|_2 \  \|A_2(t)\|_2 \  \|x(t)\|_2^2+\left(\|A_2(t)\|_2  \|b_1(t)\|_2+\|(b_2(t))_+\|_2 \right)  \|x(t)\|_2.
\end{align*}
Applying Theorem \ref{thm: gronwall} with $u=\|x\|_2^2/2$, $f=2\|A_1\|_2 \, \|A_2\|_2$, $g=\sqrt{2}\left(\|A_2\|_2 \|b_1\|_2+\|(b_2)_+\|_2\right)$, $c=\|x(0)\|_2^2/2$, $t_0=0$, and $\alpha=1/2$ yields:
\begin{align*}
\|x(t)\|_2 &\le \|x(0)\|_2 \, \exp\left(\int_0^t \|A_1(s)\|_2 \, \|A_2(s)\|_2 \, \dd{s}\right)\\
&\quad + \int_0^t \left(\|A_2(s)\|_2 \, \|b_1(s)\|_2+\|\left(b_2(s)\right)_+\|_2 \right) \, \exp\left(\int_s^t \|A_1(r)\|_2 \, \|A_2(r)\|_2 \, \dd{r}\right) \, \dd{s},
\end{align*}
which proves Equation \eqref{eq: general bound 1}. 

Next, let $x$ and $y$ be the unique absolutely continuous  solutions to Equation~\eqref{eq: general form continuous}, with different initial values $x(0)$ and $y(0)$. Then:
\begin{align*}
&\left(\dfrac{\text{d}}{\text{dt}}x(t)-\dfrac{\text{d}}{\text{dt}}y(t)\right) +  A_2(t)\,\left(\sigma(A_1(t)x(t)+b_1(t)) - \sigma(A_1(t)y(t)+b_1(t))\right)  =p_x(t)-p_y(t)
\end{align*}
for some $p_x(t) \in  -\partial{I}_{\R^{d}_+}(x)$ and $p_y(t) \in  -\partial{I}_{\R^{d}_+}(y)$. By monotonicity of the subdifferentials, we have:
\begin{align*}
(x(t)-y(t))^T(p_x(t)-p_y(t))\leq 0,
\end{align*}
which implies that:
\begin{align*}
\dfrac{\text{d}}{\text{dt}} \left(\dfrac{\|x(t)-y(t)\|_2^2}{2}\right) &=(x(t)-y(t))^T\left(\dfrac{\text{d}}{\text{dt}}x(t)-\dfrac{\text{d}}{\text{dt}}{y}(t)\right)\\
 &\leq -  (x(t)-y(t))^TA_2(t)\,\left(\sigma(A_1(t)x(t)+b_1(t)) - \sigma(A_1(t)y(t)+b_1(t))\right)\\
  &\leq \|A_2(t)\|_2 \, \|x(t)-y(t)\|_2\, \|\sigma(A_1(t)x(t)+b_1(t)) - \sigma(A_1(t)y(t)+b_1(t))\|_2.
    \end{align*}
Therefore, by the assumption that $\sigma$ is contractive and $\sigma(0)=0$, we have:
\begin{align*}
\dfrac{\text{d}}{\text{dt}} \left(\dfrac{\|x(t)-y(t)\|_2^2}{2}\right) &\leq \|A_2(t)\|_2 \, \|x(t)-y(t)\|_2\, \|A_1(t)(x(t) -y(t))\|_2\\
    &\leq \|A_1(t)\|_2 \, \|A_2(t)\|_2 \, \|x(t)-y(t)\|^2_2.
    \end{align*}
Applying Theorem \ref{thm: gronwall} with $u=\|x\|_2^2/2$, $f=2\|A_1\|_2 \, \|A_2\|_2$, $g=0$, $c=\|x(0)\|_2^2/2$, $t_0=0$, and $\alpha=1$ yields:
\begin{align*}
\|x(t)-y(t)\|_2 \leq \|x(0)-y(0)\|_2\,  \exp\left(\int_0^t \|A_1(s)\|_2 \, \|A_2(s)\|_2 \, \dd{s}\right),
    \end{align*}
which proves Equation \eqref{eq: general bound 2}.
\end{proof}

\begin{proof} [\bf Proof of Theorem \ref{prop: form1 stable}]
Taking the inner product of Equation~\eqref{eq:inclusion_form1} with $x$ yields:
\begin{align*}
x(t)^T\dfrac{\text{d}}{\text{dt}}x(t) +  x^TA_2(t)\sigma(A_1(t)x(t)+b_1(t)) - x^Tb_2(t) = x(t)^Tp_x(t)
\end{align*}
for some $p_x(t) \in  -\partial{I}_{\R^{d}_+}(x)$.  Using the same argument as in the proof of Theorem \ref{prop: gen stable}, we have:
\begin{align*}
\dfrac{\text{d}}{\text{dt}} \left(\dfrac{\|x(t)\|_2^2}{2}\right) = x(t)^T\dfrac{\text{d}}{\text{dt}}x(t) \le - x(t)^TA_2(t)\,\sigma(A_1(t)x(t)+b_1(t)) + x(t)^T(b_2(t))_+.
\end{align*}
By Remark \ref{rem: inclusion}, $x \in \R_+^d$, and by assumption, $A_2(t)\ge0$. Thus:
\begin{align*}
x(t)^TA_2(t)\,\sigma(A_1(t)x(t)+b_1(t))\ge0,
\end{align*}
which implies that
\begin{align*}
\dfrac{\text{d}}{\text{dt}} \left(\dfrac{\|x(t)\|_2^2}{2}\right)  \le x(t)^T\left(b_2(t)\right)_+ \le \|x(t)\|_2\|(b_2(t))_+\|_2.
\end{align*}
Applying Theorem \ref{thm: gronwall} with $u=\|x\|_2^2/2$, $f=0$, $g=\sqrt{2}\|(b_2)_+\|_2$, $c=\|x(0)\|_2^2/2$, $t_0=0$, and $\alpha=1/2$ yields:
\begin{align*}
\|x(t)\|_2 \le \|x(0)\|_2 + \int_0^t\|\left(b_2(s)\right)_+\|_2\,\dd s,
\end{align*}
which proves Equation \eqref{eq: improved 1}.
\end{proof}

\begin{proof} [\bf Proof of Theorem \ref{prop: form2 stable}]
Taking the inner product of Equation~\eqref{eq:inclusion_form2} with $x$ yields:
\begin{align*}
x(t)^T\dfrac{\text{d}}{\text{dt}}x(t) +  (A_1(t) x(t))^TA_2(t)\,\sigma(A_1(t)x(t)+b_1(t)) - x(t)^Tb_2(t) = x(t)^Tp_x(t)
\end{align*}
for some $p_x(t) \in  -\partial{I}_{\R^{d}_+}(x)$. Using the same argument as in the proof of Theorem \ref{prop: gen stable}, we have:
\begin{align*}
\dfrac{\text{d}}{\text{dt}} \left(\dfrac{\|x(t)\|_2^2}{2}\right)
& \le - (A(t) x(t))^T\,\sigma(A(t)x(t)+b_1(t)) + x(t)^Tb_2(t).
\end{align*}
Define $G:\R^d\to\R^d$ by:
\begin{align*}
G(x):=\sigma(x +b_1(t)).
\end{align*}
Since $\sigma$ ({\it i.e.} ReLU) is monotone, we have that $G$ is also monotone. Thus:
\begin{align*}
\dfrac{\text{d}}{\text{dt}} \left(\dfrac{\|x(t)\|_2^2}{2}\right)
&\leq - (A(t) x(t))^T\,\sigma(A(t)x(t)+b_1(t)) + x(t)^Tb_2(t)\\
&= - (A(t) x(t)-0)^T\left(G(A(t)x(t))-G(0)\right) - (A(t) x(t))^TG(0)+ x(t)^Tb_2(t)\\
&\leq - (A(t) x(t))^T\sigma(b_1(t))+ x(t)^T(b_2(t) \\
&\leq \|x(t)\|_2 \, \left\| \left(-A(t)^T\sigma(b_1(t)) + b_2(t) \right)_+\right\|_2.
\end{align*}
Applying Theorem \ref{thm: gronwall} with $u=\|x\|_2^2/2$, $f=0$, $g=\sqrt{2} \left(-A^T\sigma(b_1) + b_2 \right)_+$, $c=\|x(0)\|_2^2/2$, $t_0=0$, and $\alpha=1/2$ yields:
\begin{align*}
\|x(t)\|_2 \le \|x(0)\|_2 + \int_0^t \left\| \left(-A(s)^T\sigma(b_1(s)) + b_2(s) \right)_+\right\|_2\,\dd s,
\end{align*}
which proves Equation \eqref{eq: improved 2.1}.

Next, let $x$ and $y$ be the unique absolutely continuous  solutions to Equation~\eqref{eq: general form continuous}, with different initial values $x(0)$ and $y(0)$. Using the same argument as in the proof of Theorem \ref{prop: gen stable} yields:
\begin{align*}
\dfrac{\text{d}}{\text{dt}} \left(\dfrac{\|x(t)-y(t)\|_2^2}{2}\right) &=(x(t)-y(t))^T\left(\dfrac{\text{d}}{\text{dt}}x(t)-\dfrac{\text{d}}{\text{dt}}y(t)\right)\\
 &\leq -  (A(t)x(t)-A(t)y(t))^T\left(\sigma(A(t)x(t)+b_1(t)) - \sigma(A(t)y(t)+b_1(t))\right) \le 0,
\end{align*}
where the last inequality is due to monotonicity of $G$. This proves Equation \eqref{eq: improved 2.2}.
\end{proof}

\begin{proof}[\bf Proof of Theorem \ref{thm: res1 net}, part 1]

We will show that 
\begin{align*}
\|x^{n+1}\|_\infty \le \|x^n\|_\infty + c_n
\end{align*}
for all $n=1,2,\dots,3m+3$, where $c_n\ge0$ is independent of the $x^n$.

For the convolution layer (Layer 0), we show that:
\begin{align}
\|x^{1}\|_{\ell^\infty(\R^{h_1w_1d_1})} &\le \|x^0\|_{\ell^\infty(\R^{h_1w_1d_0})} + \|b^0\|_{\ell^\infty(\R^{h_1w_1d_1})} \label{eq: thm res1 conv}
\end{align}
provided that $\|A^0\|_{\ell^\infty(\R^{h_1w_1d_0})\to\ell^\infty(\R^{h_1w_1d_1})}\le1$. By Equation \eqref{eq: layer conv}, we have:
\begin{align*}
\|x^{1}\|_{\ell^\infty(\R^{h_1w_1d_1})} &\le \|A^0x^0\|_{\ell^\infty(\R^{h_1w_1d_1})} + \|b^0\|_{\ell^\infty(\R^{h_1w_1d_1})} \\
&\le \|A^0\|_{\ell^\infty(\R^{h_1w_1d_0})\to\ell^\infty(\R^{h_1w_1d_1})}\|x^0\|_{\ell^\infty(\R^{h_1w_1d_0})} + \|b^0\|_{\ell^\infty(\R^{h_1w_1d_1})} \\
&\le \|x^0\|_{\ell^\infty(\R^{h_1w_1d_0})} + \|b^0\|_{\ell^\infty(\R^{h_1w_1d_1})}.
\end{align*}

For the first stack of ResNet layers (Layer $n$ with $n=1,2\dots,m$), we show that:
\begin{align}
\|x^{n+1}\|_{\ell^\infty(\R^{h_1w_1d_1})} &\le \|x^n\|_{\ell^\infty(\R^{h_1w_1d_1})} + \|b^n_2\|_{\ell^\infty(\R^{h_1w_1d_1})}. \label{eq: thm res1 resnet}
\end{align}
Fix $i\in[h_1w_1d_1]$. By Equation \eqref{eq: layer resnet2}, we have:
\begin{align*}
0\le x^{n+1}_i = (x^n_i-a^n_i(A^n_1x^n+b^n_1)_++(b^n_2)_i)_+,
\end{align*}
where $a^n_i$ denotes the $i$-th row of $A^n_2$ and $(b^n_2)_i$ denotes the $i$-th element of $b^n_2$. Consider two cases. If $x^n_i-a^n_i(A^n_1x^n+b^n_1)_++(b^n_2)_i<0$, then $x^{n+1}_i=0$. Otherwise, since $a^n_i\ge0$ component-wise, it holds that
\begin{align*}
0\le x^{n+1}_i =x^n_i-a^n_i(A^n_1x^n+b^n_1)_++(b^n_2)_i \le x^n_i +(b^n_2)_i.
\end{align*}
Therefore,
\begin{align*}
\|x^{n+1}\|_{\ell^\infty(\R^{h_1w_1d_1})} &\le \|x^n\|_{\ell^\infty(\R^{h_1w_1d_1})} + \|b^n_2\|_{\ell^\infty(\R^{h_1w_1d_1})}.
\end{align*}
Analysis for the remaining ResNet layers, Layers $m+2$ to $2m$ and Layers $2m+2$ to $3m$, is the same.

For the first 2d pooling layer (Layer $n$ with $n=m+1$), we show that:
\begin{align}
\|x^{n+1}\|_{\ell^\infty(\R^{h_2w_2d_2})} \le \|x^n\|_{\ell^\infty(\R^{h_1w_1d_1})}. \label{eq: thm res1 2d pool}
\end{align}
Observe from Figures \ref{fig: network baseline} and \ref{fig: network layers} that $x^j\ge0$ component-wise for all $j=2,3,\dots,3m+2$. Since both $E(P_2(x^n))$ and $\left((A^n)_{|{s=2}}\,x^n+b^n\right)_+$ are component-wise nonnegative, by Equation \eqref{eq: layer 2d pool}, we have the following component-wise inequality:
\begin{align*}
x^{n+1} \le E(P_2(x^n)),
\end{align*}
and thus by Equations \eqref{eq: prop 2d pooling infty} and \eqref{eq: prop relu}:
\begin{align*}
\|x^{n+1}\|_{\ell^\infty(\R^{h_2w_2d_2})} \le \|E(P_2(x^n))\|_{\ell^\infty(\R^{h_2w_2d_2})} = \|P_2(x^n)\|_{\ell^\infty(\R^{h_2w_2d_1})} \le \|x^n\|_{\ell^\infty(\R^{h_1w_1d_1})}.
\end{align*}
Analysis for the second 2D pooling layer, Layer $2m+1$, is the same.

For the global pooling layer (Layer $n$ with $n=3m+1$), we show that:
\begin{align}
\|x^{n+1}\|_{\ell^\infty(\R^{d_3})} \le \|x^n\|_{\ell^\infty(\R^{h_3w_3d_3})}. \label{eq: thm res1 global pool}
\end{align}
By Equations \eqref{eq: prop global pooling infty} and \eqref{eq: prop relu}, the functions $P_g$ and $\relu$ are non-expansive in $\ell^\infty$, and thus:
\begin{align*}
\|x^{n+1}\|_{\ell^\infty(\R^{d_3})} = \|P_g\left((x^n)_+\right)\|_{\ell^\infty(\R^{d_3})} \le \|(x^n)_+\|_{\ell^\infty(\R^{h_3w_3d_3})}  \le \|x^n\|_{\ell^\infty(\R^{h_3w_3d_3})}.
\end{align*}

For the fully connected layer (Layer $n=N-1$ with $N=3m+3$), we show that:
\begin{align}
\|x^{N}\|_{\ell^\infty(\R^{C})} &\le \|x^{N-1}\|_{\ell^\infty(\R^{d_3})} + \|b^{N-1}\|_{\ell^\infty(\R^{C})} \label{eq: thm res1 dense}
\end{align}
provided that $\|W^{N-1}\|_{\ell^\infty(\R^{d_3})\to\ell^\infty(\R^{C})}\le1$. Analysis for the fully connected layer is the same as the analysis for the convolution layer. By Equation \eqref{eq: layer dense}, we have:
\begin{align*}
\|x^{N}\|_{\ell^\infty(\R^{C})} &\le \|W^{N-1}x^{N-1}\|_{\ell^\infty(\R^{C})} + \|b^{N-1}\|_{\ell^\infty(\R^{C})} \\
&\le \|W^{N-1}\|_{\ell^\infty(\R^{d_3})\to\ell^\infty(\R^{C})}\|x^{N-1}\|_{\ell^\infty(\R^{d_3})} + \|b^{N-1}\|_{\ell^\infty(\R^{C})} \\
&\le \|x^{N-1}\|_{\ell^\infty(\R^{d_3})} + \|b^{N-1}\|_{\ell^\infty(\R^{C})}.
\end{align*}

Combining Equations \eqref{eq: thm res1 conv}-\eqref{eq: thm res1 dense} yields:
\begin{align*}
\|x^N\|_{\ell^2(\R^C)} \le \|x^0\|_{\ell^2(\R^{h_1w_1d_0})} + c(b^0,b^1,\dots,b^{N-1}),
\end{align*}
where $c(b^0,b^1,\dots,b^{N-1})$ is a constant depending on the $\ell^2$ norms of the biases in the network:
\begin{align}
c(b^0,b^1,\dots,b^{N-1}) := &\ \|b^0\|_{\ell^2(\R^{h_1w_1d_1})} + \|b^{N-1}\|_{\ell^2(\R^C)} \nonumber \\
&+ \sum_{n=1}^m\|b^n_2\|_{\ell^2(\R^{h_1w_1d_1})} + \sum_{n=m+2}^{2m}\|b^n_2\|_{\ell^2(\R^{h_2w_2d_2})} + \sum_{n=2m+2}^{3m}\|b^n_2\|_{\ell^2(\R^{h_3w_3d_3})}.  \label{eq: res1 net constant}
\end{align} 
This proves Equation \eqref{eq: res1 net}.
\end{proof}

\begin{proof}[\bf Proof of Theorem \ref{thm: res1 net}, part 2]
We will show that 
\begin{align*}
\|x^{n+1}-y^{n+1}\|_{\ell^2} \le a_n\|x^n-y^n\|_{\ell^2}
\end{align*}
for all $n=1,2,\dots,3m+3$, where $a_n\ge0$ is independent of the $x^n$ and $y^n$.

For the convolution layer (Layer 0), we have, by Equation \eqref{eq: layer conv}, that:
\begin{align}
\|x^1-y^1\|_{\ell^2(\R^{h_1w_1d_1})} &= \|A^0x^0-A^0y^0\|_{\ell^2(\R^{h_1w_1d_1})} \nonumber \\
&\le \|A^0\|_{\ell^2(\R^{h_1w_1d_0})\to\ell^2(\R^{h_1w_1d_1})}\|x^0-y^0\|_{\ell^2(\R^{h_1w_1d_0})} . \label{eq: thm res1 conv ptb}
\end{align}

For the first stack of residual layers (Layer $n$ with $n=1,2\dots,m$), we have, by Equation \eqref{eq: layer resnet1}, that:
\begin{align}
&\|x^{n+1}-y^{n+1}\|_{\ell^2(\R^{h_1w_1d_1})} \nonumber \\
&\quad= \|( x^n - A^n_2(A^n_1x^n+b^n_1)_+ + b^n_2)_+ - \ ( y^n - A^n_2(A^n_1y^n+b^n_1)_+ + b^n_2)_+\|_{\ell^2(\R^{h_1w_1d_1})} \nonumber \\
&\quad\le \|(x^n - A^n_2(A^n_1x^n+b^n_1)_+) \ - \ (y^n - A^n_2(A^n_1y^n+b^n_1)_+) \|_{\ell^2(\R^{h_1w_1d_1})} \nonumber \\
&\quad\le \|x^n - y^n\|_{\ell^2(\R^{h_1w_1d_1})}  + \|A^n_2\|_{\ell^2(\R^{h_1w_1d_1})}\|(A^n_1x^n+b^n_1)_+  -  \ (A^n_1y^n+b^n_1)_+ \|_{\ell^2(\R^{h_1w_1d_1})} \nonumber \\
&\quad\le \|x^n - y^n\|_{\ell^2(\R^{h_1w_1d_1})}  + \|A^n_2\|_{\ell^2(\R^{h_1w_1d_1})}\|A^n_1x^n- A^n_1y^n \|_{\ell^2(\R^{h_1w_1d_1})} \nonumber \\
&\quad\le \left(1+\|A^n_1\|_{\ell^2(\R^{h_1w_1d_1})}\|A^n_2\|_{\ell^2(\R^{h_1w_1d_1})}\right)\|x^n - y^n\|_{\ell^2(\R^{h_1w_1d_1})}, \label{eq: thm res1 resnet ptb}
\end{align}
where we have used the fact that ReLU is 1-Lipschitz in $\ell^2$ (see Equation \eqref{eq: prop relu2}). Analysis for the remaining residual layers, Layers $m+2$ to $2m$ and Layers $2m+2$ to $3m$, is the same.

For the first 2d pooling layer (Layer $n$ with $n=m+1$), we have, by Equation \eqref{eq: layer 2d pool}, that:
\begin{align}
&\|x^{n+1}-y^{n+1}\|_{\ell^2(\R^{h_2w_2d_2})} \nonumber \\
&\quad= \left\|\left(E(P_2(x^n)) - \left((A^n)_{|{s=2}}\,x^n+b^n\right)_+\right)_+ -\ \left(E(P_2(y^n)) - \left((A^n)_{|{s=2}}\,y^n+b^n\right)_+\right)_+ \right\|_{\ell^2(\R^{h_2w_2d_2})} \nonumber \\
&\quad\le \|E(P_2(x^n)) - E(P_2(y^n)) \|_{\ell^2(\R^{h_2w_2d_2})} + \|\left((A^n)_{|{s=2}}\,x^n+b^n\right)_+- \ \left((A^n)_{|{s=2}}\,y^n+b^n\right)_+ \|_{\ell^2(\R^{h_2w_2d_2})} \nonumber \\
&\quad\le \|x^n-y^n\|_{\ell^2(\R^{h_2w_2d_2})} + \|(A^n)_{|{s=2}}\,x^n-\ (A^n)_{|{s=2}}\,y^n \|_{\ell^2(\R^{h_2w_2d_2})} \nonumber \\
&\quad\le \left(1+\|A^n\|_{\ell^2(\R^{h_1w_1d_1})\to\ell^2(\R^{h_2w_2d_2})}\right)\|x^n - y^n\|_{\ell^2(\R^{h_1w_1d_1})}, \label{eq: thm res1 2d pool ptb}
\end{align}
where we have used the fact that padding and 2d average pooling are linear operators and are non-expansive in $\ell^2$ (see Section \ref{sec: padding pooling}). Analysis for the second 2D pooling layer, Layer $2m+1$, is the same.

For the global pooling layer (Layer $n$ with $n=3m+1$), we have, by Equation \eqref{eq: layer global pool}, that:
\begin{align}
\|x^{n+1}-y^{n+1}\|_{\ell^2(\R^{d_3})} &= \|P_g((x^n)_+)-\ P_g((y^n)_+)\|_{\ell^2(\R^{h_3w_3d_3})} \nonumber \\
&\le\|(x^n)_+ -(y^n)_+\|_{\ell^2(\R^{h_3w_3d_3})}\nonumber \\
&\le \|x^n-y^n\|_{\ell^2(\R^{h_3w_3d_3})}, \label{eq: thm res1 global pool ptd}
\end{align}
where we have used the fact that global average pooling is a linear operator and is non-expansive in $\ell^2$.

For the fully connected layer (Layer $n=3m+2=N-1$), we have, by Equation \eqref{eq: layer dense}, that
\begin{align}
\|x^{N}-y^N\|_{\ell^2(\R^{C})} &= \|W^{N-1}x^{N-1}-W^{N-1}y^{N-1}\|_{\ell^2(\R^{C})} \nonumber \\
&\le \|W^{N-1}\|_{\ell^2(\R^{d_3})\to\ell^2(\R^{C})}\|x^{N-1}-y^{N-1}\|_{\ell^2(\R^{d_3})}. \label{eq: thm res1 dense ptb}
\end{align}

Combining Equations \eqref{eq: thm res1 conv ptb}-\eqref{eq: thm res1 dense ptb} yields:
\begin{align*}
\|x^N-y^N\|_{\ell^\infty(\R^C)} \le a(A^0,A^1,\dots,W^{N-1})\, \|x^0-y^0\|_{\ell^\infty(\R^{h_1w_1d_0})}, 
\end{align*}
where $a(A^0,A^1,\dots,W^{N-1})$ is a constant depending on the $\ell^2$ norms of the filters and weights in the network:
\begin{align} 
a(A^0,A^1,\dots,W^{N-1}) := &\ \|A^0\|_{\ell^2(\R^{h_1w_1d_0})\to\ell^2(\R^{h_1w_1d_1})}  \nonumber \\
&\ \times \prod_{n=1}^m\left(1+\|A^n_1\|_{\ell^2(\R^{h_1w_1d_1})}\|A^n_2\|_{\ell^2(\R^{h_1w_1d_1})}\right) \times \left(1+\|A^{m+1}\|_{\ell^2(\R^{h_1w_1d_1})\to\ell^2(\R^{h_2w_2d_2})}\right) \nonumber \\
&\times \prod_{n=m+2}^{2m}\left(1+\|A^n_1\|_{\ell^2(\R^{h_2w_2d_2})}\|A^n_2\|_{\ell^2(\R^{h_2w_2d_2})}\right) \times \left(1+\|A^{2m+1}\|_{\ell^2(\R^{h_2w_2d_2})\to\ell^2(\R^{h_3w_3d_3})}\right) \nonumber  \\
&\times \prod_{n=2m+2}^{3m}\left(1+\|A^n_1\|_{\ell^2(\R^{h_3w_3d_3})}\|A^n_2\|_{\ell^2(\R^{h_3w_3d_3})}\right) \times \|W^{N-1}\|_{\ell^2(\R^{d_3})\to\ell^2(\R^{C})}  . \label{eq: res1 net2 constant} 
\end{align} 
This proves  Equation \eqref{eq: res1 net2}.
\end{proof}

To prove Theorem \ref{thm: res2 net}, we will first show an auxiliary result.

\begin{lemma} \label{lem: res2 net aux}
Let $A\in\R^{d\times d}$ and $b\in\R^{d}$. Define the function $F:\R^{d}\to\R^{d}$ by:
\begin{align*}
F(x):= x - A^T\sigma(Ax+b),
\end{align*}
where $\sigma$ is ReLU, {\it i.e.} $\sigma(x)=\max(x,0)$. If $\|A\|_{\ell^2(\R^{d})}\le\sqrt{2}$, then $F$ is non-expansive in $\ell^2$, {\it i.e.}
\begin{align*}
\|F(x) - F(y)\|_{\ell^2(\R^d)}\le \|x-y\|_{\ell^2(\R^d)}
\end{align*}
for all $x,y\in\R^d$.
\end{lemma}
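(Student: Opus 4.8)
The plan is to expand $\|F(x)-F(y)\|_2^2$ and reduce the statement to a single scalar inequality for ReLU. Set $w := x-y$ and abbreviate $u := Ax+b$ and $v := Ay+b$, so that $u-v = Aw$ and
\[
F(x)-F(y) = w - A^T\bigl(\sigma(u)-\sigma(v)\bigr).
\]
Expanding the square and using the adjoint to shift $A^T$ across the inner product, namely $\langle w, A^T(\sigma(u)-\sigma(v))\rangle = \langle Aw, \sigma(u)-\sigma(v)\rangle = \langle u-v, \sigma(u)-\sigma(v)\rangle$, gives
\[
\|F(x)-F(y)\|_2^2 = \|w\|_2^2 - 2\langle u-v, \sigma(u)-\sigma(v)\rangle + \|A^T(\sigma(u)-\sigma(v))\|_2^2.
\]
Bounding the last term by $\|A\|_2^2\,\|\sigma(u)-\sigma(v)\|_2^2$ reduces the whole problem to controlling the cross term $\langle u-v, \sigma(u)-\sigma(v)\rangle$ from below.

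The key step, and the main obstacle, is the firm nonexpansiveness (co-coercivity) of ReLU:
\[
\langle u-v, \sigma(u)-\sigma(v)\rangle \ge \|\sigma(u)-\sigma(v)\|_2^2 \quad \text{for all } u,v\in\R^d.
\]
I would establish this in one of two equivalent ways. Since $\sigma = \proj_{\R^d_+}$ by Remark \ref{rem: relu} and $\R^d_+$ is closed and convex, the inequality is exactly the standard firm-nonexpansiveness property of a metric projection onto a convex set. Alternatively, because ReLU acts element-wise, it suffices to verify the scalar inequality $(a-b)(\sigma(a)-\sigma(b)) \ge (\sigma(a)-\sigma(b))^2$ and sum over coordinates; this follows from a short case analysis on the signs of $a$ and $b$ (equality holds when $a,b$ share a sign, and the mixed case reduces to $-ab \ge 0$).

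Combining the two bounds yields
\[
\|F(x)-F(y)\|_2^2 \le \|w\|_2^2 + \bigl(\|A\|_2^2 - 2\bigr)\,\|\sigma(u)-\sigma(v)\|_2^2,
\]
and the hypothesis $\|A\|_2 \le \sqrt{2}$ makes the coefficient $\|A\|_2^2 - 2$ nonpositive, so the second term can be dropped to conclude $\|F(x)-F(y)\|_2 \le \|w\|_2 = \|x-y\|_2$. The only genuinely nontrivial ingredient is the co-coercivity estimate; the constant $\sqrt{2}$ is precisely what allows the factor $2$ coming from co-coercivity to absorb the operator-norm factor $\|A\|_2^2$.
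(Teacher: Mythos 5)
Your proof is correct, but it takes a genuinely different route from the paper's. The paper linearizes $F$ via the fundamental theorem of calculus applied to the (a.e.-defined, bounded-variation) derivative of ReLU along the segment from $y$ to $x$, obtaining $F(x)-F(y)=\bigl(I-A^TD(x,y)A\bigr)(x-y)$ with $D(x,y)$ a diagonal matrix satisfying $0\le D_{ii}\le 1$, and then concludes by the spectral bound $\|I-A^TD A\|_{\ell^2}\le 1$ when $\lambda_{\max}(A^TDA)\le \|A\|_{\ell^2}^2\le 2$. You instead expand $\|F(x)-F(y)\|_2^2$ directly and invoke the firm nonexpansiveness of ReLU, $\langle u-v,\sigma(u)-\sigma(v)\rangle\ge\|\sigma(u)-\sigma(v)\|_2^2$, which you correctly justify either as the standard co-coercivity of the projection onto the convex set $\R^d_+$ (consistent with the paper's own Remark on ReLU as $\proj_{\R^d_+}$) or by a scalar case analysis; the factor $2$ from the cross term then absorbs $\|A\|_2^2\le 2$. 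Both arguments are sound and both isolate where the constant $\sqrt{2}$ enters. Your version is more elementary and sidesteps the measure-theoretic care needed to differentiate ReLU along a segment (and the paper's displayed equality $\|F(x)-F(y)\|_{\ell^2}=\|I-A^TD A\|_{\ell^2}\|x-y\|_{\ell^2}$ should really be an inequality anyway), while the paper's version yields the slightly more structural statement that $F$ acts as $I-A^TDA$ for an explicit averaged diagonal $D$, which makes the non-expansiveness visibly a spectral fact.
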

\begin{proof}
First note that the activation function  $\sigma:\R^d\to\R^d$ is applied component-wise. The function is of bounded variation and has a derivative in the measure sense. 
Fix an index $i\in[d]$ and consider the $i$-th component $F_i$ of $F$:
\begin{align*}
F_i: \R^d\to\R, \quad F_i(x) := x_i - (A^T)_{i,:}\, \sigma(Ax+b)_i, 
\end{align*}
where $(A^T)_{i,:}$ denotes the $i$-th row of $A^T$. Its derivative $\nabla F_i$ is defined almost everywhere:
\begin{align*}
\nabla F_i: \R^d\to\R^{1\times d}, \quad \nabla F_i(x) := (e_i)^T - A^T\, \nabla \sigma(Ax+b)\, A_{i,:},
\end{align*}
where $e_i$ is the $i$-th standard basis in $\R^d$ and $A_{i,:}$ denotes the $i$-th row of $A$. For any $x,y\in\R^d$, applying the fundamental theorem of calculus yields:
\begin{align*}
F_i(x) - F_i(y) &= \int_0^1 \left((e_i)^T - A^T\, \nabla\sigma(A((1-s)y+sx)+b)\,A_{i,:}\right)(x-y)\,\dd{s} \\
&= x_i-y_i - A^T \left(\int_0^1 \nabla\sigma(A((1-s)y+sx)+b)\,\dd{s} \right)\,A_{i,:}(x-y),
\end{align*}
and thus:
\begin{align*}
F(x) - F(y) = \left(I - A^TD(x,y)A\right) (x-y),
\end{align*}
where $D(x,y)\in\R^{d\times d}$ is the diagonal matrix defined as:
\begin{align*}
D(x,y):=\int_0^1 \nabla\sigma(A((1-s)y+sx)+b)\,\dd{s}.
\end{align*}
Since ReLU is non-decreasing with derivative bounded in magnitude by 1, we have $0\le D(x,y)_{ii}\le1$ for all $i=1,2,\dots,d$. Therefore, the $\ell^2$ norm is equivalent to:
\begin{align*}
\|F(x) - F(y)\|_{\ell^2(\R^d)} = \|I - A^TD(x,y)A\|_{\ell^2(\R^d)} \, \|x-y\|_{\ell^2(\R^d)}.
\end{align*}
If $\|A\|_{\ell^2(\R^{d})}\le\sqrt{2}$, then $0 \le \lambda_{\max} (A^T D(x,y)A) \le  2$,
and thus:
\begin{align*}
 \|I - A^TD(x,y)A\|_{\ell^2(\R^d)}^2 \le 1,
\end{align*}
which implies that $F$ is non-expansive in $\ell^2$.
\end{proof}

\begin{proof}[\bf Proof of Theorem \ref{thm: res2 net}, part 1]
By replacing $\ell^\infty$ with $\ell^2$ in the proof of Theorem \ref{thm: res1 net} (part 1), one can show with a similar argument that the following bound:
\begin{align}
\|x^{n+1}\|_{\ell^2} \le \|x^n\|_{\ell^2} + c_n \label{eq: them res2 general}
\end{align}
holds for the convolution layer, the pooling layers, and the fully connected layer, where $c_n\ge0$ is independent of the $x^n$. We will show that Equation \eqref{eq: them res2 general} also hold for ResNet-S layers.

For the first stack of residual layers (Layer $n$ with $n=1,2\dots,m$), we use an alternative approach and show that:
\begin{align}
\|x^{n+1}\|_{\ell^2(\R^{h_1w_1d_1})} &\le \|x^n\|_{\ell^2(\R^{h_1w_1d_1})} + \sqrt{2}\|b^n_1\|_{\ell^2(\R^{h_1w_1d_1})} + \|b^n_2\|_{\ell^2(\R^{h_1w_1d_1})} \label{eq: thm res2 resnet}
\end{align}
provided that $\|A^n\|_{\ell^2(\R^{h_1w_1d_1})}\le\sqrt{2}$. By Equations \eqref{eq: layer resnet2} and \eqref{eq: prop relu2}, we have:
\begin{align*}
\|x^{n+1}\|_{\ell^2(\R^{h_1w_1d_1})} &\le \| x^n - (A^n)^T (A^nx^n+b^n_1)_+\|_{\ell^2(\R^{h_1w_1d_1})} + \| b^n_2\|_{\ell^2(\R^{h_1w_1d_1})}.
\end{align*}
Define $F_n:\R^{h_1w_1d_1}\to\R^{h_1w_1d_1}$ by:
\begin{align}
F_n(x):= x - (A^n)^T(A^nx+b^n_1)_+. \label{eq: layer resnet2 aux}
\end{align}
By Lemma \ref{lem: res2 net aux}, if $\|A^n\|_{\ell^2(\R^{h_1w_1d_1})}\le\sqrt{2}$, then $F_n$ is non-expansive in $\ell^2$. Therefore, 
\begin{align*}
\|x^{n+1}\|_{\ell^2(\R^{h_1w_1d_1})} &\le \|F_n(x^n)\|_{\ell^2(\R^{h_1w_1d_1})} + \|b^n_2\|_{\ell^2(\R^{h_1w_1d_1})} \\
&\le \|F_n(x^n)-F_n(0)\|_{\ell^2(\R^{h_1w_1d_1})} + \|F_n(0)\|_{\ell^2(\R^{h_1w_1d_1})} + \| b^n_2\|_{\ell^2(\R^{h_1w_1d_1})}  \\
&\le \|x^n\|_{\ell^2(\R^{h_1w_1d_1})} + \|(A^n)^T(b^n_1)_+\|_{\ell^2(\R^{h_1w_1d_1})} + \| b^n_2\|_{\ell^2(\R^{h_1w_1d_1})}  \\
&\le \|x^n\|_{\ell^2(\R^{h_1w_1d_1})} + \sqrt{2}\|(b^n_1)_+\|_{\ell^2(\R^{h_1w_1d_1})} + \| b^n_2\|_{\ell^2(\R^{h_1w_1d_1})}.
\end{align*}
Analysis for the remaining residual layers, Layers $m+2$ to $2m$ and Layers $2m+2$ to $3m$, is the same.

Combining Equations \eqref{eq: thm res1 conv}, \eqref{eq: thm res2 resnet}, and \eqref{eq: thm res1 2d pool}-\eqref{eq: thm res1 dense} yields:
\begin{align*}
\|x^N\|_{\ell^2(\R^C)} \le \|x^0\|_{\ell^2(\R^{h_1w_1d_0})} +  c(b^0,b^1,\dots,b^{N-1}), 
\end{align*}
where $c(b^0,b^1,\dots,b^{N-1})$ is a constant depending on the $\ell^2$ norms of the biases in the network:\begin{align}
c\left(\{b^n\}_{n=0}^N\right) := & \ \|b^0\|_{\ell^2(\R^{h_1w_1d_1})} + \|b^{N-1}\|_{\ell^2(\R^C)} \nonumber \\
&+\sum_{i=1}^m\left( \sqrt{2}\|b^n_1\|_{\ell^2(\R^{h_1w_1d_1})} + \|b^n_2\|_{\ell^2(\R^{h_1w_1d_1})}\right) \nonumber \\
&+ \sum_{i=m+2}^{2m}\left( \sqrt{2}\|b^n_1\|_{\ell^2(\R^{h_2w_2d_2})} + \|b^n_2\|_{\ell^2(\R^{h_2w_2d_2})}\right) \nonumber  \\
&+ \sum_{i=2m+2}^{3m}\left( \sqrt{2}\|b^n_1\|_{\ell^2(\R^{h_3w_3d_3})} + \|b^n_2\|_{\ell^2(\R^{h_3w_3d_3})}\right). \label{eq: res2 net constant}
\end{align} 
This proves Equation \eqref{eq: res2 net}.
\end{proof}

\begin{proof}[\bf Proof of Theorem \ref{thm: res2 net}, part 2]
The proof is similar to the proof of Theorem \ref{thm: res1 net} (part 2), except for the residual layers. We will show that the following bound:
\begin{align*}
\|x^{n+1}-y^{n+1}\|_{\ell^2} \le a_n\|x^n-y^n\|_{\ell^2}
\end{align*}
also holds for the residual layers, where $a_n\ge0$ is independent of the $x^n$ and $y^n$. 

For the first stack of residual layers (Layer $n$ with $n=1,2\dots,m$), we have, by Equation \eqref{eq: layer resnet2}, that:
\begin{align*}
&\|x^{n+1}-y^{n+1}\|_{\ell^2(\R^{h_1w_1d_1})} \nonumber \\
&\quad= \|( x^n - (A^n)^T(A^nx^n+b^n_1)_+ + b^n_2)_+ - \ ( y^n - (A^n)^T(A^ny^n+b^n_1)_+ + b^n_2)_+\|_{\ell^2(\R^{h_1w_1d_1})} \nonumber \\
&\quad\le \|F_n(x^n) - F_n(y^n) \|_{\ell^2(\R^{h_1w_1d_1})} ,
\end{align*}
where the function $F_n:\R^{h_1w_1d_1}\to\R^{h_1w_1d_1}$ is defined in Equation \eqref{eq: layer resnet2 aux}. By Lemma \ref{lem: res2 net aux}, $F_n$ is non-expansive in $\ell^2$ if $\|A^n\|_{\ell^2(\R^{h_1w_1d_1})}\le\sqrt{2}$. Thus,  
\begin{align}
\|x^{n+1}-y^{n+1}\|_{\ell^2(\R^{h_1w_1d_1})} \le \|x^n - y^n\|_{\ell^2(\R^{h_1w_1d_1})}, \label{eq: thm res2 resnet ptb}
\end{align}
Analysis for the remaining residual layers, Layers $m+2$ to $2m$ and Layers $2m+2$ to $3m$, is the same.

Combining Equations \eqref{eq: thm res1 conv ptb}, \eqref{eq: thm res2 resnet ptb}, and \eqref{eq: thm res1 2d pool ptb}-\eqref{eq: thm res1 dense ptb} yields:
\begin{align*}
\|x^N-y^N\|_{\ell^\infty(\R^C)} \le a(A^0,A^1,\dots,W^{N-1}) \|x^0-y^0\|_{\ell^\infty(\R^{h_1w_1d_0})}, 
\end{align*}
where $a(A^0,A^1,\dots,W^{N-1})$ is a constant depending on the $\ell^2$ norms of the filters and weights in the network:
\begin{align} 
a(A^0,A^1,\dots,W^{N-1}) := \left(1+\|A^{m+1}\|_{\ell^2(\R^{h_1w_1d_1})\to\ell^2(\R^{h_2w_2d_2})}\right) \left(1+\|A^{2m+1}\|_{\ell^2(\R^{h_2w_2d_2})\to\ell^2(\R^{h_3w_3d_3})}\right)  . \label{eq: res2 net2 constant}
\end{align} 
This proves  Equation \eqref{eq: res2 net2}. 

\end{proof}

\section{Auxiliary Results} \label{sec: aux}

To be self-contained, we include some results in differential inclusions and differential equations that we used in the main text.

\begin{definition} \label{def: prox regular}
(page 350, \cite{edmond2005sweeping}) 
For a fixed $r > 0$, the set $S$ is said to be $r$-prox-regular if, for any $x\in S$ and any $\xi\in \mathcal{N}^L_S(x)$ such that $\|\xi\| < 1$, one has $x = \proj_S(x + r\xi)$, where $\mathcal{N}^L$ denotes the limiting normal cone (see \cite{Mordukhovich1996analysis}).
\end{definition}

\begin{theorem} \label{thm: inclusion}
(Theorem 1, \cite{edmond2005sweeping}) 
Let $H$ be a Hilbert space with the associated norm $\|\cdot\|$. Assume that $C: [0,T]\to H$ with $T>0$ is a set-valued map which satisfies the following two conditions:
\begin{enumerate}[label*=(\roman*)]
\item for each $t\in [0,T]$, $C(t)$ is a nonempty closed subset of $H$ which is $r$-prox-regular;
\item there exists an absolutely continuous function $v : [0,T] \to \R$ such that for any $y\in H$ and $s,t\in [0,T]$,
\begin{align}
|\dist(y, C(t)) - \dist(y, C(s))|\le|v(t) - v(s)|. \label{eq: inclusion cond2}
\end{align}
\end{enumerate}
Let $F : [0,T] \times H \to H$ be a separately measurable map on $[0,T]$ such that
\begin{enumerate}[label*=(\roman*),start=3]
\item for every $\eta > 0$ there exists a non-negative function $k_\eta\in L^1([0,T], \R)$ such that for all $t\in [0,T]$ and for any $(x, y)\in \overline{B(0, \eta)} \times \overline{B(0, \eta)}$,
\begin{align*}
\|F (t, x) - F (t, y)\| \le k_\eta(t)\|x - y\|,
\end{align*}
where $\overline{B(0, \eta)}$ stands for the closed ball of radius $\eta$ centered at $0\in H$;
\item there exists a non-negative function $\beta\in L^1([0,T], \R)$ such that for all $t\in [0,T]$ and for all $x\in \cup_{s\in [0,T]} C(s)$, 
\begin{align*}
\|F (t, x)\| \le \beta(t)(1 + \|x\|). 
\end{align*}
\end{enumerate}
Then, for any $x_0\in C(T_0)$, where $0\le T_0<T$, the following perturbed sweeping process
\begin{align*} \begin{cases}
-\dfrac{\text{d}}{\text{dt}}x(t)\in\mathcal{N}_{C(t)}(x(t)) + F (t, x(t)) \quad \text{a.e. } t\in [0,T] \\
x(T_0) = x_0
\end{cases}
\end{align*}
has a unique absolutely continuous solution $x$, and the solution $x$ satisfies
\begin{align*}
\left\| \dfrac{\text{d}}{\text{dt}}x(t) + F (t, x(t))\right\| &\le (1 + a)\beta(t) +\left|\dfrac{\text{d}}{\text{dt}}v(t)\right| \quad \text{a.e. } t\in [0,T], \\
\|F (t, x(t))\| &\le (1 + a)\beta(t)  \quad \text{a.e. } t\in [0,T],
\end{align*}
where
\begin{align*} 
a := \|x_0\| + \exp\left(2\int^T_{T_0}\beta(s)\,\dd{s}\right)\int^T_{T_0}\left(2\beta(s)(1 +\| x_0\|) +\left|\dfrac{\text{d}}{\text{dt}}v(s)\right|\right)\,\dd{s}.
\end{align*}
\end{theorem}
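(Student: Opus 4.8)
The plan is to prove existence by Moreau's catching-up (projection) algorithm and to prove uniqueness by exploiting the hypomonotonicity of the normal-cone operator that is available for $r$-prox-regular sets. Throughout I write $\mathcal{N}_{C(t)}$ for the normal cone and treat the perturbation $F$ as an explicit forcing term, so that the scheme resolves the (implicit) projection onto the moving set $C(t)$ while evaluating $F$ at the previous node.

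First I would establish the a priori bounds that produce the constant $a$ and the two asserted velocity estimates. In a sweeping process the solution moves so as to remain in $C(t)$, so its velocity is governed by the motion of the set together with the forcing: condition (ii) controls the displacement caused by $t\mapsto C(t)$ through $|\dot v|$, while condition (iv) gives $\|F(t,x)\|\le\beta(t)(1+\|x\|)$. Combining these yields a Gronwall inequality for $\|x(t)\|$ whose integration produces $\sup_t\|x(t)\|\le a$ with exactly the constant in the statement; back-substitution then gives $\|F(t,x(t))\|\le(1+a)\beta(t)$ and $\|\dot x(t)+F(t,x(t))\|\le(1+a)\beta(t)+|\dot v(t)|$. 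The latter bound also pins down the size of the normal-cone element selected by the dynamics, which is what will make uniqueness work.

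Next I would discretize. On a partition $T_0=t_0^N<\dots<t_{K_N}^N=T$ define
\begin{align*}
x_{i+1}^N:=\proj_{C(t_{i+1}^N)}\!\left(x_i^N-(t_{i+1}^N-t_i^N)\,F(t_i^N,x_i^N)\right),
\end{align*}
and interpolate to piecewise-linear curves $x_N$. Because $C(t)$ is $r$-prox-regular and, by (ii), moves Lipschitz-continuously in $t$, for a fine enough mesh the argument of the projection stays within distance $r$ of $C(t_{i+1}^N)$, so the projection is single-valued and the scheme is well defined. The displacement induced by the moving set is controlled by (ii) and the forcing contribution by (iv), which together bound the discrete velocities uniformly; hence $\{x_N\}$ is equi-absolutely-continuous and, along a subsequence, converges uniformly to some absolutely continuous $x$ with $\dot x_N\rightharpoonup\dot x$ weakly in $L^1$. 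Using the local Lipschitz continuity and measurability of $F$ from (iii) to pass $F(t,x_N)\to F(t,x)$, and the graph-closedness of the prox-regular normal-cone operator (valid precisely because the selected normals are uniformly bounded), I would identify the limit inclusion $-\dot x-F(t,x)\in\mathcal{N}_{C(t)}(x)$ a.e., with $x(T_0)=x_0$, completing existence.

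The main obstacle, and the last step, is uniqueness, which is also where prox-regularity rather than convexity makes the argument delicate. Given two solutions $x,y$ sharing the initial datum, set $\rho(t)=\tfrac12\|x(t)-y(t)\|^2$ and differentiate to get $\dot\rho=\langle\dot x-\dot y,\,x-y\rangle$. Substituting $-\dot x-F(t,x)$ and $-\dot y-F(t,y)$ as elements of the respective normal cones, the convex monotonicity inequality is unavailable; instead I would invoke the hypomonotonicity estimate for $r$-prox-regular sets, $\langle\xi_1-\xi_2,\,z_1-z_2\rangle\ge-\tfrac1r\max(\|\xi_1\|,\|\xi_2\|)\,\|z_1-z_2\|^2$, where the norms $\|\xi_i\|$ are controlled by the velocity bound from the a priori estimate. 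Combining this with the Lipschitz bound (iii) on $F$ gives $\dot\rho(t)\le K(t)\,\rho(t)$ for some $K\in L^1([T_0,T])$, and Gronwall with $\rho(T_0)=0$ forces $\rho\equiv0$, i.e.\ $x\equiv y$. The delicate point throughout is to keep the iterates, and the two solutions, inside the prox-regularity radius so that single-valuedness of the projection and the hypomonotonicity estimate apply with uniformly controlled constants; this is exactly what ties together the mesh size, the modulus of continuity of $t\mapsto C(t)$, and the a priori bound $a$.
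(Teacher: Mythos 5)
The paper offers no proof of this statement: it is imported verbatim as an auxiliary result from Theorem 1 of \cite{edmond2005sweeping}, and your sketch follows exactly the standard route of that reference---Moreau's catching-up (projection) algorithm with explicit forcing for existence, a priori bounds from conditions (ii) and (iv) via Gronwall producing the constant $a$ and the two velocity estimates, and uniqueness via the hypomonotonicity of the normal cone to $r$-prox-regular sets combined with the $L^1$ bound on the selected normals and Gronwall. The only slip is minor and repairable: condition (ii) makes $t\mapsto C(t)$ move absolutely continuously, not Lipschitz-continuously, so the partition in the catching-up scheme must be chosen adapted to the modulus $|v(t)-v(s)|$ together with $\int\beta$ on each subinterval (small enough relative to the prox-regularity radius $r$) rather than via a uniform mesh bound, after which your argument goes through as in the cited work.
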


\begin{remark} \label{rem: inclusion}
(Remark 2.1, \cite{kamenskii2017sweeping}) 
If $x$ is a solution to Equation \eqref{eq: sweeping process} defined on $[T_0,\infty)$, then $x(t)\in C(t)$ for all $t\in[T_0,\infty)$.
\end{remark}

The following theorem states a nonlinear generalization of Gronwall's inequality.

\begin{theorem} \label{thm: gronwall}
(Theorem 21, \cite{dragomir2003gronwall})
Let $u$ be a nonnegative function that satisfies the integral inequality
\begin{align*}
u(t) \le c + \int_{t_0}^t f(s)u(s)+g(s)u^\alpha(s) \, \dd{s},
\end{align*}
where $c\ge0$, $\alpha\ge0$, $f$ and $g$ are continuous nonnegative functions for $t \ge t_0$. 
\begin{enumerate}[label*=(\roman*)]
\item For $0\le\alpha<1$, we have:
\begin{align*}
u(t)^{1-\alpha}\le c^{1-\alpha}\exp\left((1-\alpha)\int_{t_0}^tf(s)\, \dd{s}\right) + (1-\alpha)\int_{t_0}^tg(s) \, \exp\left((1-\alpha)\int_s^tf(r)\, \dd{r}\right) \, \dd{s}.
\end{align*}
\item For $\alpha=1$, we have:
\begin{align*}
u(t)\le c\exp\left((1-\alpha)\int_{t_0}^tf(s)+g(s)\, \dd{s}\right).
\end{align*}
\end{enumerate}
\end{theorem}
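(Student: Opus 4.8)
The plan is to reduce the integral inequality to a differential inequality and then linearize it by a Bernoulli-type substitution. First I would introduce the majorant
\begin{align*}
v(t) := c + \int_{t_0}^t \left( f(s)u(s) + g(s)\,u^\alpha(s)\right)\dd{s},
\end{align*}
so that $u(t)\le v(t)$ and $v(t_0)=c$. Since $f,g$ are continuous and $u$ is integrable, $v$ is absolutely continuous with $v'(t)=f(t)u(t)+g(t)u^\alpha(t)$ almost everywhere; using $u\le v$ together with the fact that $x\mapsto x^\alpha$ is nondecreasing on $[0,\infty)$ for $\alpha\ge0$, this yields the differential inequality $v'(t)\le f(t)\,v(t)+g(t)\,v(t)^\alpha$ a.e. It then suffices to bound $v$, because $u\le v$ and, when $\alpha<1$, $u^{1-\alpha}\le v^{1-\alpha}$.

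For case (ii), where $\alpha=1$, the differential inequality collapses to $v'\le(f+g)\,v$. Multiplying by the integrating factor $\exp\!\left(-\int_{t_0}^t (f+g)\,\dd{r}\right)$ turns the left-hand side into an exact derivative, and integrating from $t_0$ to $t$ with $v(t_0)=c$ gives $v(t)\le c\exp\!\left(\int_{t_0}^t (f(s)+g(s))\,\dd{s}\right)$; combined with $u\le v$ this is the stated estimate (the exponent should read $\int_{t_0}^t f+g$, the factor $1-\alpha$ being a typo that vanishes at $\alpha=1$).

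For case (i), where $0\le\alpha<1$, I would set $w:=v^{1-\alpha}$, which is absolutely continuous with $w'=(1-\alpha)\,v^{-\alpha}v'$ a.e.; dividing the differential inequality by $v^\alpha$ shows that $w$ satisfies the \emph{linear} inequality $w'\le(1-\alpha)f\,w+(1-\alpha)g$. Applying the integrating factor $\exp\!\left(-(1-\alpha)\int_{t_0}^t f\,\dd{r}\right)$, integrating from $t_0$ to $t$ with $w(t_0)=c^{1-\alpha}$, and multiplying back through by $\exp\!\left((1-\alpha)\int_{t_0}^t f\,\dd{r}\right)$ produces precisely the right-hand side of the claimed estimate for $w=v^{1-\alpha}$; since $u^{1-\alpha}\le v^{1-\alpha}=w$, the conclusion follows.

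The main obstacle is the division by $v^\alpha$, which is legitimate only where $v>0$ and therefore breaks down exactly when $c=0$ (and possibly on an initial interval). I would resolve this by the standard regularization: replace $c$ by $c+\varepsilon$ for $\varepsilon>0$, carry out the entire argument with the strictly positive majorant $v_\varepsilon(t):=(c+\varepsilon)+\int_{t_0}^t(fu+gu^\alpha)\,\dd{s}\ge\varepsilon$, obtain the bound with $c$ replaced by $c+\varepsilon$, and finally let $\varepsilon\to0^+$, using continuity of $s\mapsto s^{1-\alpha}$ and of the integrals in the parameter $\varepsilon$. The remaining points — the regularity of $v$ and the two monotonicity facts used to pass between $u$ and $v$ — are routine.
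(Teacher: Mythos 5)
The paper does not prove this statement at all: it is quoted verbatim as an auxiliary result (Theorem 21 of the cited Gronwall-inequality monograph of Dragomir), so there is no in-paper argument to compare yours against. Judged on its own, your proof is correct and is the standard argument for this Willett--Wong-type nonlinear Gronwall inequality: passing to the majorant $v$, deriving $v'\le fv+gv^{\alpha}$ a.e., linearizing via the Bernoulli substitution $w=v^{1-\alpha}$ for $0\le\alpha<1$ (and using the integrating factor directly when $\alpha=1$), and handling the degenerate case $c=0$ by replacing $c$ with $c+\varepsilon$ and letting $\varepsilon\to0^{+}$. You are also right that the factor $(1-\alpha)$ in the exponent of case (ii) is a typo carried over into the paper's statement --- with $\alpha=1$ it would give the absurd bound $u\le c$ --- and the correct conclusion is $u(t)\le c\exp\bigl(\int_{t_0}^{t}(f(s)+g(s))\,\dd{s}\bigr)$, which is exactly what your integrating-factor computation produces. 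No gaps.
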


\bibliographystyle{plain}
\bibliography{Paper18_ResnetStability_references}

\end{document}